\setlist{leftmargin=5.5mm}
\declaretheorem[name=Theorem]{theorem}
\newcommand{\Tool}{IRS\xspace}
\newcommand{\upto}{4.1x\xspace}
\newcommand{\labels}{\mathcal{Y}}
\newcommand{\normal}{\mathcal{N}(0, \sigma^2 I)}
\newcommand{\palb}{\underline{p_A}}
\newcommand{\pbub}{\overline{p_B}}
\newcommand{\prob}{\mathbb{P}}
\newcommand{\eqlb}{\zeta_{x}}
\newcommand{\cache}{\mathcal{C}_f}
\newcommand{\tablesize}{\footnotesize}
\DeclareMathOperator*{\argmax}{arg\,max}
\algrenewcommand\algorithmicindent{0.75em}
\newcommand{\add}[1]{#1} 
\definecolor{WowColor}{rgb}{.75,0,.75}
\definecolor{SubtleColor}{rgb}{0,0,.50}
\renewcommand{\Comment}[1]{}
\newcounter{margincounter}
\renewcommand{\cite}[1]{\citep{#1}}
\title{{Incremental\,Randomized\,Smoothing\,Certification}}
\author{Shubham Ugare$^{1}$,\ \ Tarun Suresh$^{1}$,\ \ Debangshu Banerjee$^{1}$,\ \ \textbf{Gagandeep Singh}$^{1, 2}$,\ \ Sasa Misailovic$^{1}$  \\
$^1$University of Illinois Urbana-Champaign, \ \ $^2$VMware Research\\
\{\texttt{sugare2}, \texttt{tsuresh3}, \texttt{db21}, \texttt{ggnds}, \texttt{misailo}\}\texttt{@illinois.edu} \\
}
\begin{document}

\maketitle

\begin{abstract}
Randomized smoothing-based certification is an effective approach for obtaining robustness certificates of deep neural networks (DNNs) against adversarial attacks. This method constructs a smoothed DNN model and certifies its robustness through statistical sampling, but it is computationally expensive, especially when certifying with a large number of samples. Furthermore, when the smoothed model is modified (e.g., quantized or pruned), certification guarantees may not hold for the modified DNN, and recertifying from scratch can be prohibitively expensive. 

We present the first approach for incremental robustness certification for randomized smoothing, \Tool{}. We show how to reuse the certification guarantees for the original smoothed model to certify an approximated model with very few samples. \Tool{} significantly reduces the computational cost of certifying modified DNNs while maintaining strong robustness guarantees. We experimentally demonstrate the effectiveness of our approach, showing up to \upto certification speedup over the certification that applies randomized smoothing of \mbox{approximate model from scratch.}
\end{abstract}

\vspace{-.02in}
\section{Introduction}
\label{intro}

Ensuring the robustness of deep neural networks (DNNs) to input perturbations is gaining increased attention from both users and regulators in various application domains~\cite{bojarski2016end,AMATO201347,acasxu:18,ISO24029}. 
Out of many techniques for obtaining robustness certificaties, statistical methods currently offer the greatest scalability. 
Randomized smoothing (RS) is a popular statistical certification method by constructing a smoothed model $g$ from a base network $f$ under noise~\cite{DBLP:conf/icml/CohenRK19}. 
To certify the model $g$ on an input, RS certification checks if the estimated lower bound on the probability of the top class is greater than the upper bound on the probability of the runner-up class (with high confidence).
RS certification computes the certified accuracy metric of the DNN on the set of test inputs as a proxy for the DNN robustness.
However, despite its effectiveness, RS-based certification can be computationally expensive as it requires DNN inference on a large number of corruptions per input.


The high cost of certification complicates the DNN deployment process, which has become increasingly iterative: the networks are often modified post-training to improve their execution time and/or accuracy. Especially, deploying DNNs on real-world systems with bounded computing resources (e.g., edge devices or GPUs with limited memory), has led to various techniques for approximating DNNs. 
Common approximation techniques include quantization -- reducing the numerical precision of weights~\citep{fiesler1990weight,balzer1991weight}, and pruning -- removing weights that have minimal impact on accuracy~\citep{janowsky1989pruning,reed1993pruning}. 
%

%
Common to all of these approximations is that the network behavior (e.g., the classifications) remains the same on most inputs, its architecture does not change, and many weights are only slightly changed.
When a user seeks to select a robust and accurate DNN from these possible approximations, RS needs to be performed to compute the robustness of all candidate networks.
\add{
For instance, in the context of approximation tuning, there are multiple choices for approximation where different quantization or pruning strategies are applied at different layers. Tools such as \cite{10.5555/3327144.3327258, 10.5555/3291168.3291211, 10.1145/3360612, zhao2023approxcaliper} use approximations iteratively and test the network at each step. To ensure DNN robustness when using such tools, one would need to check certified accuracy, computed using RS on test data in each step. However, performing RS to compute certified accuracy from scratch can take hours as shown in our experiments even for a single network (with only 500 test images).
}
Therefore, a major encumbrance of almost all existing RS-based certification practices in the above setting, is that \emph{the expensive certification needs to be re-run from scratch} for each approximate network.
%
Overcoming this main limitation requires addressing the following fundamental problem:

\noindent{\fbox{\parbox{\dimexpr\linewidth-2\fboxsep-2\fboxrule\relax}{How can we leverage the information generated while certifying a given network to speed up the certification of similar networks?}}}
\vspace{.03in}

\paragraph{This Work.} We present the first incremental RS-based certification framework called Incremental Randomized Smoothing (\Tool{}) to answer this question. The primary objective of our work is to improve the sample complexity of the certification process of similar networks on a predefined test set. Improved sample complexity results in overall speedup in certification, and it reduces the energy requirement and memory footprint of the certification. Given a network $f$ and its smoothed version $g$, and a modified network $f^p$ with its smoothed version $g^p$, \Tool{} incrementally certifies the robustness of $g^p$ by reusing the information from the execution of RS certification on $g$. 

\Tool{} optimizes the process of certifying the robustness of smoothed classifier $g^p$ on an input $x$, by estimating \emph{the disparity} $\eqlb$ -- the upper bound on the probability that outputs of $f$ and $f^p$ are distinct. 
Our new algorithm is based on three key insights about disparity:

\vspace{-.05in}
\begin{enumerate}\itemsep 1pt\parskip 2pt
\item Common approximations yield small $\eqlb$ values -- for instance, it is below 0.01 for int8 quantization for multiple large networks in our experiments. 
\item Estimating $\eqlb$ through binomial confidence interval requires fewer samples as it is close to 0 -- it is, therefore, less expensive to certify with this probability than directly working with lower and upper probability bounds in the original RS algorithm.
\item We can leverage $\eqlb$ alongside the bounds in the certified radius of $g$ around $x$ to compute the certified radius for $g^p$ -- thus soundly reusing the samples from certifying $g$.
\end{enumerate}

\vspace{-.05in}
We extensively evaluate the performance of \Tool{} when applying several common DNN approximations such as pruning and quantization on state-of-the-art DNNs on CIFAR10 (ResNet-20, ResNet-110) and ImageNet (ResNet-50) datasets. 

\noindent{\bf Contributions.} The main {contributions} of this paper are:

\vspace{-.05in}
\begin{itemize}\itemsep 1pt \parskip 2pt
    \item We propose a novel concept of incremental RS certification of the robustness of the updated smoothed classifier by reusing the certification guarantees for the original smoothed classifier. 
    \item We design the first algorithm \Tool{} for incremental RS that efficiently computes the certified radius of the updated smoothed classifier. 
    \item  We present an extensive evaluation of the performance of \Tool{} speedups of up to \upto over the standard non-incremental RS baseline on 
    state-of-the-art classification models. 
\end{itemize}

\vspace{-0.05in}

\noindent \mbox{\small \Tool{} code is available at {\color{blue}\url{https://github.com/uiuc-arc/Incremental-DNN-Verification}}}.
\vspace{-0.15in}

\Comment{
\paragraph{Motivation.}
The main motivation for IRS is the test phase when the user seeks to select an approximate (pruned, quantized) network from a range of possible networks with different approximations. This situation is encountered in numerous applications, making our approach valuable in improving the efficiency of model evaluation and comparison: a) In the context of approximation tuning, there are multiple choices for approximation where different quantization or pruning ratio is applied at different layers. For instance, tools such as \cite{10.5555/3327144.3327258, 10.5555/3291168.3291211, 10.1145/3360612} use approximations iteratively and test the network at each step. The tuning should ensure a minimum user-defined metric such as certified accuracy. This certified accuracy is computed using RS on test data. In these tools, performing RS to compute certified accuracy from scratch can take hours as shown in our experiments even for a single network (with only 500 test images). Consequently, the tuning process tends to be significantly time-consuming, especially given the multitude of network options. Existing approaches require running RS from scratch for each new network. In such instances, \Tool{} emerges as a valuable time-saving alternative. b) Often, a vendor releases an original model, and different users of the model have the option to either utilize the original model as-is or approximate (quantize, prune) it based on their specific device and application needs. In these situations, the vendor can provide certification for the original model along with the certification cache. The users may wish to compute the certified accuracy of their customized models because the certification guarantee of the original model doesn’t hold for these approximate models \cite{sehwag2020hydra, DBLP:conf/iccv/Lin00S21}. Here, \Tool{} can be useful, enabling faster certification for these customized models.
}

\vspace{-.05in}
\section{Background}

\vspace{-.1in}
\paragraph{Randomized Smoothing.}

Let $f : \mathbb{R}^m \to \labels$ be an ordinary classifier. A smoothed classifier $g : \mathbb{R}^m \to \mathcal{Y}$ can be obtained from calculating the most likely result of $f(x+\epsilon)$ where $\epsilon \sim \normal$. 
\[
g(x) := \argmax_{c \in \labels} \mathbb{P}_{\epsilon} (f(x+\epsilon) = c)
\]

\vspace{-.1in}
The smoothed network $g$ satisfies following guarantee \add{for a single network $f$}:

\begin{theorem}
\label{thm:rs}
[From \cite{DBLP:conf/icml/CohenRK19}]
Suppose $c_A \in \labels$, $\palb, \pbub \in [0, 1]$. if 
\[
    \prob_\epsilon (f(x+\epsilon) = c_A) \geq \palb \geq \pbub \geq \max_{c \neq c_A} \prob_\epsilon (f(x+\epsilon) = c),
\]

\vspace{-.1in}
\noindent{}then $g(x+\delta) = c_A$ for all $\delta$ satisying $\|\delta\|_2 \leq \frac{\sigma}{2} (\Phi^{-1}(\palb) - \Phi^{-1}(\pbub))$, where $\Phi^{-1}$ denotes the inverse of the standard Gaussian CDF.
\end{theorem}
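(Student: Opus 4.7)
The plan is to reduce the theorem to a two-class problem and then bound the worst-case behavior of the Gaussian measure of the preimage sets $f^{-1}(c)$ under translation. Concretely, to show $g(x+\delta)=c_A$, it suffices to prove that for every runner-up class $c_B\neq c_A$,
\[
\prob_\epsilon\bigl(f(x+\delta+\epsilon)=c_A\bigr)\;>\;\prob_\epsilon\bigl(f(x+\delta+\epsilon)=c_B\bigr),
\]
under only the weaker information that $\prob_\epsilon(f(x+\epsilon)=c_A)\ge \palb$ and $\prob_\epsilon(f(x+\epsilon)=c_B)\le \pbub$. So the real content is a geometric/measure-theoretic inequality about half-spaces, and the quantifier over runner-up classes becomes trivial at the end.

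The main tool is a Neyman--Pearson style argument. First I would fix $c_B$ and define the sets $A=f^{-1}(c_A)$ and $B=f^{-1}(c_B)$. Writing $\mu_0=\mathcal{N}(x,\sigma^2 I)$ and $\mu_\delta=\mathcal{N}(x+\delta,\sigma^2 I)$, the likelihood ratio $d\mu_\delta/d\mu_0$ is monotone in the linear functional $z\mapsto \delta^\top z$. By Neyman--Pearson, among all measurable sets with $\mu_0$-mass at least $\palb$, the one that minimizes $\mu_\delta$-mass is a half-space of the form $H_A=\{z:\delta^\top(z-x)\le a\}$ with $a$ chosen so that $\mu_0(H_A)=\palb$; by rotational symmetry and a one-dimensional Gaussian calculation this gives $a=\sigma\|\delta\|_2\Phi^{-1}(\palb)$ and
\[
\mu_\delta(A)\;\ge\;\mu_\delta(H_A)\;=\;\Phi\!\left(\Phi^{-1}(\palb)-\tfrac{\|\delta\|_2}{\sigma}\right).
\]
Symmetrically, among all measurable sets with $\mu_0$-mass at most $\pbub$, the one that maximizes $\mu_\delta$-mass is a half-space $H_B=\{z:\delta^\top(z-x)\ge b\}$ oriented the opposite way, yielding
\[
\mu_\delta(B)\;\le\;\Phi\!\left(\Phi^{-1}(\pbub)+\tfrac{\|\delta\|_2}{\sigma}\right).
\]

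Combining, the required strict inequality $\mu_\delta(A)>\mu_\delta(B)$ holds whenever
\[
\Phi^{-1}(\palb)-\tfrac{\|\delta\|_2}{\sigma}\;>\;\Phi^{-1}(\pbub)+\tfrac{\|\delta\|_2}{\sigma},
\]
which rearranges precisely to $\|\delta\|_2\le \tfrac{\sigma}{2}\bigl(\Phi^{-1}(\palb)-\Phi^{-1}(\pbub)\bigr)$ (with the boundary handled by a limiting argument, since the inequality hypothesis is non-strict). Since this bound on $\|\delta\|_2$ does not depend on the particular runner-up $c_B$, taking a supremum over $c_B\neq c_A$ finishes the proof: $g(x+\delta)=c_A$ on the whole ball of the stated radius.

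The main obstacle is the Neyman--Pearson step: one has to justify, cleanly, that a half-space orthogonal to $\delta$ is simultaneously the worst case for the ``push down $\mu_\delta(A)$'' direction and the best case for the ``push up $\mu_\delta(B)$'' direction, and that the half-space threshold can be chosen to exactly match $\palb$ (respectively $\pbub$). This is a standard consequence of the monotone likelihood ratio structure of two Gaussians differing only in mean, plus a reduction to the one-dimensional projection onto $\delta/\|\delta\|_2$, but it is the only nontrivial geometric input; everything else is bookkeeping with the Gaussian CDF.
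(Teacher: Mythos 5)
The paper does not prove this statement at all: Theorem~\ref{thm:rs} is imported verbatim from \citet{DBLP:conf/icml/CohenRK19}, and the appendix proofs cover only Theorems~\ref{thm:irs}--\ref{thm:estimate}. Your proposal is a correct reconstruction of the original Cohen--Rosenfeld--Kolter argument: reduce to a fixed runner-up class, apply the Neyman--Pearson lemma to the pair $\mathcal{N}(x,\sigma^2 I)$, $\mathcal{N}(x+\delta,\sigma^2 I)$ (whose likelihood ratio is monotone in $\delta^\top z$, so the extremal sets are half-spaces orthogonal to $\delta$), compute the two half-space probabilities by projecting onto $\delta/\|\delta\|_2$, and observe that the resulting radius is uniform over runner-up classes. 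The one point worth flagging is the boundary: with $\|\delta\|_2$ exactly equal to $\frac{\sigma}{2}(\Phi^{-1}(\palb)-\Phi^{-1}(\pbub))$ the argument only yields $\prob_\epsilon(f(x+\delta+\epsilon)=c_A)\ge\prob_\epsilon(f(x+\delta+\epsilon)=c_B)$, not strict dominance, so ties in the $\argmax$ are not excluded; this is an artifact of the paper restating Cohen et al.'s condition $\|\delta\|_2 < R$ as $\|\delta\|_2 \le R$, and your limiting remark does not fully repair it, but it is a defect of the restatement rather than of your argument.
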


\vspace{-.1in}
\begin{wrapfigure}{R}{0.53\textwidth}
\vspace{-.25in}
\begin{minipage}{0.55\textwidth}
\begin{algorithm}[H]
\small
\caption{RS certification~\cite{DBLP:conf/icml/CohenRK19}}
\label{alg:rs}
\textbf{Inputs:} $f$: DNN, $\sigma$: standard deviation, $x$: input to the DNN, $n_0$: number of samples to predict the top class, $n$: number of samples for computing $\palb$, $\alpha$: confidence parameter
\begin{algorithmic}[1]
\Function{Certify}{$f, \sigma, x, n_0, n, \alpha$}
\State $counts_0 \gets \text{SampleUnderNoise}(f, x, n_0, \sigma)$
\State $\hat{c}_A \gets \text{top index in } counts_0$
\State $counts \gets \text{SampleUnderNoise}(f, x, n, \sigma)$
\State $\palb \gets \text{LowerConfidenceBound}(counts[\hat{c}_A], n, 1 - \alpha)$
\If{$\palb > \frac{1}{2}$}
\State \Return prediction $\hat{c}_A$ and radius $\sigma \cdot\Phi^{-1}(\palb)$
\Else
\State \Return ABSTAIN
\EndIf
\EndFunction
\end{algorithmic}
\end{algorithm}
\end{minipage}
\vspace{-0.3in}
\end{wrapfigure}
 Computing the exact probabilities \mbox{$P_\epsilon(f(x + \epsilon) = c)$} is generally intractable. 
Thus, for practical applications, CERTIFY \cite{DBLP:conf/icml/CohenRK19} (Algorithm~\ref{alg:rs}) utilizes sampling: 
First, it takes $n_0$ samples to determine the majority class, then $n$ samples to compute a lower bound $\palb$ to the success probability with confidence $1-\alpha$ via the Clopper-Pearson lemma \cite{10.2307/2331986}.
If $\palb > 0.5$, we set $\pbub = 1 - \palb$ and obtain radius $R = \sigma\cdot\Phi^{-1}(\palb)$ via \mbox{Theorem~\ref{thm:rs}, else we return ABSTAIN.}

\paragraph{DNN approximation.}
DNN weights need to be quantized to the appropriate datatype for deploying them on various edge devices. DNN approximations are used to compress the model size at the time of deployment, to allow inference speedup and energy savings without significant accuracy loss. While \Tool{} can work with most of these approximations, for the evaluation, we focus on quantization and pruning as these are \mbox{the most common ones}~\citep{zhou2017incremental, DBLP:conf/iclr/FrankleC19}.

\section{\mbox{Incremental Randomized Smoothing}}
\vspace{-.05in}
\begin{figure}
 \vspace{-0.1in}
\centering
\includegraphics[width=11.5cm]{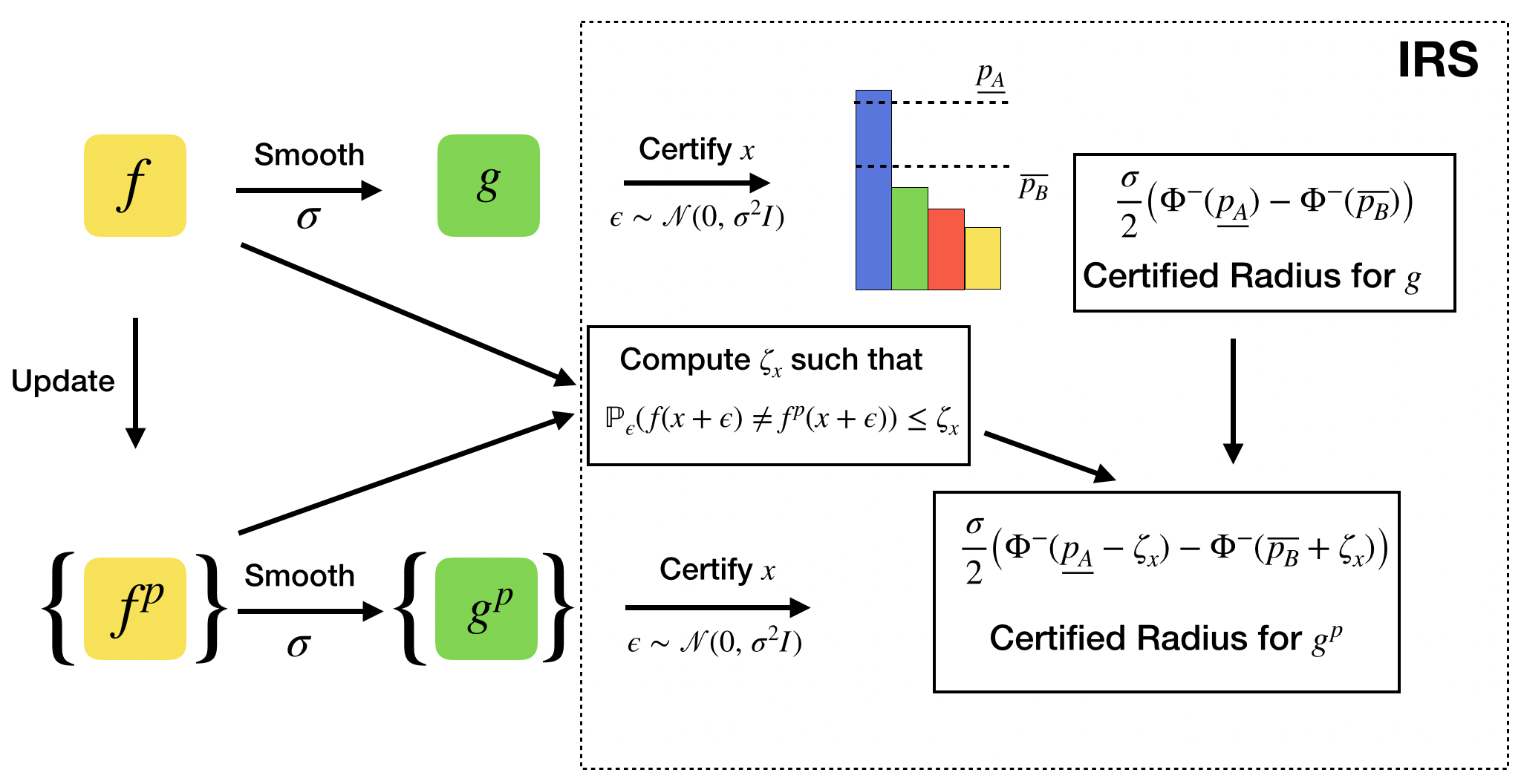}
\caption{Workflow of \Tool{} from left to right. \Tool{} takes the classifier $f$ and input $x$. \Tool{} reuses the $\palb$ and $\pbub$ estimates computed for $f$ on $x$ by RS. \Tool{} estimate $\eqlb$ from $f$ and $f^p$. For the smoothed classifier $g^p$ obtained from any of the approximate classifiers $f^p$ it computes the certified radius by combining $\palb$ and $\pbub$ with $\eqlb$.}
\label{fig:workflow}
\vspace{-0.15in}
\end{figure}

Figure~\ref{fig:workflow} illustrates the high-level idea behind the workings of \Tool{}. It takes as input the classifier $f$, the updated classifier $f^p$, and an input $x$. 
Let $g$ and $g^p$ denote the smoothed network obtained from $f$ and $f^p$ using RS respectively.
\Tool{} reuses the $\palb$ and $\pbub$ estimates computed for $g$ to compute the certified radius for $g^p$.

\subsection{Motivation}
\label{sec:motivation}

{\noindent \bf Insight 1: Similarity in approximate networks} 
We observe that for many practical approximations,

\begin{wraptable}{r}{.38\textwidth}    
    \tablesize
    \centering
    \vspace{-0.15in}
    \caption{Average $\eqlb$ with $n=1000$ samples for various approximations.}
    \begin{tabular}{@{}l rr@{}}
        \toprule
           & CIFAR10 & ImageNet  \\
           & ResNet-110 & ResNet-50 \\
        \hline
        int8 & 0.009 & 0.006 \\
        prune10 & 0.010 & 0.008 \\
        \bottomrule
    \end{tabular}
    \label{tab:zeta}
    \vspace{-.2in}
\end{wraptable}


\vspace{-.05in}
\noindent{}$f$ and $f^p$ produce the same result on most inputs. 
In this experiment, we estimate the disparity between $f$ and $f^p$ on Gaussian corruptions of the input $x$.
We compute a lower confidence bound $\eqlb$ such that \mbox{$\prob_\epsilon(f(x+\epsilon) \neq f^p(x+\epsilon)) \leq \eqlb$} for $\epsilon \sim \normal$.

Table~\ref{tab:zeta} presents empirical average $\eqlb$ for int8 quantization and pruning $10\%$ lowest magnitude weights for some of the networks in our experiments computed over $500$ inputs.
We compute $\eqlb$ value as the binomial confidence upper limit using \cite{10.2307/2331986} method with $n=1000$ samples with $\sigma=1$.
The results show that the $\eqlb$ value is quite small in all the cases. 
%

{\noindent \bf Insight 2: Sample reduction through $\eqlb$ estimation} 
We demonstrate that $\eqlb$ estimation for approximate networks is more efficient than running certification from scratch. Fig.~\ref{fig:motivation} shows that for the fixed target error $\chi$, confidence $(1 - \alpha)$ and estimation technique, the number of samples required for estimation peaks, when the actual parameter value is around $0.5$ and is smallest around the boundaries.
For example, when $\chi = 0.5\%$ and $\alpha = 0.01$ estimating the unknown binomial proportion will take $41,500$ samples if the actual parameter value is $0.05$ while achieving the same target error and confidence takes $216,900$ samples ($5.22$x higher) if the actual parameter value is $0.5$. 
As observed in the previous section, $\eqlb$'s value for many practical approximations is close to 0.

\noindent{}Leveraging the observation shown in Fig.~\ref{fig:motivation} and given actual value $\eqlb$ is close to 0, estimating $\eqlb$ with existing binomial proportion estimation techniques is efficient and requires a smaller number of samples. In Appendix~\ref{app:sigma}, we show the distribution of $\palb$ and $\pbub$ for various cases. We see that $\palb$ and $\pbub$ do not always lie close to 0 or 1 and have a more dispersed distribution. Thus, estimating those requires more samples.
Prior work \cite{stat} has theoretically shown that the expected length of the confidence interval for Clopper-Pearson follows a similar trend as in Fig.~\ref{fig:motivation}. This theoretical result supports our observation.
We show in Appendix~\ref{sec:app_binomial} that this observation is not contingent on a specific estimation method and holds for other popular estimation techniques, e.g.,~\citep{10.2307/2276774},~\citep{doi:10.1080/00031305.1998.10480550}.
\begin{wrapfigure}{r}{0.4\textwidth}
  \vspace{-.15in}
  \begin{center}
    \includegraphics[width=0.4\textwidth]{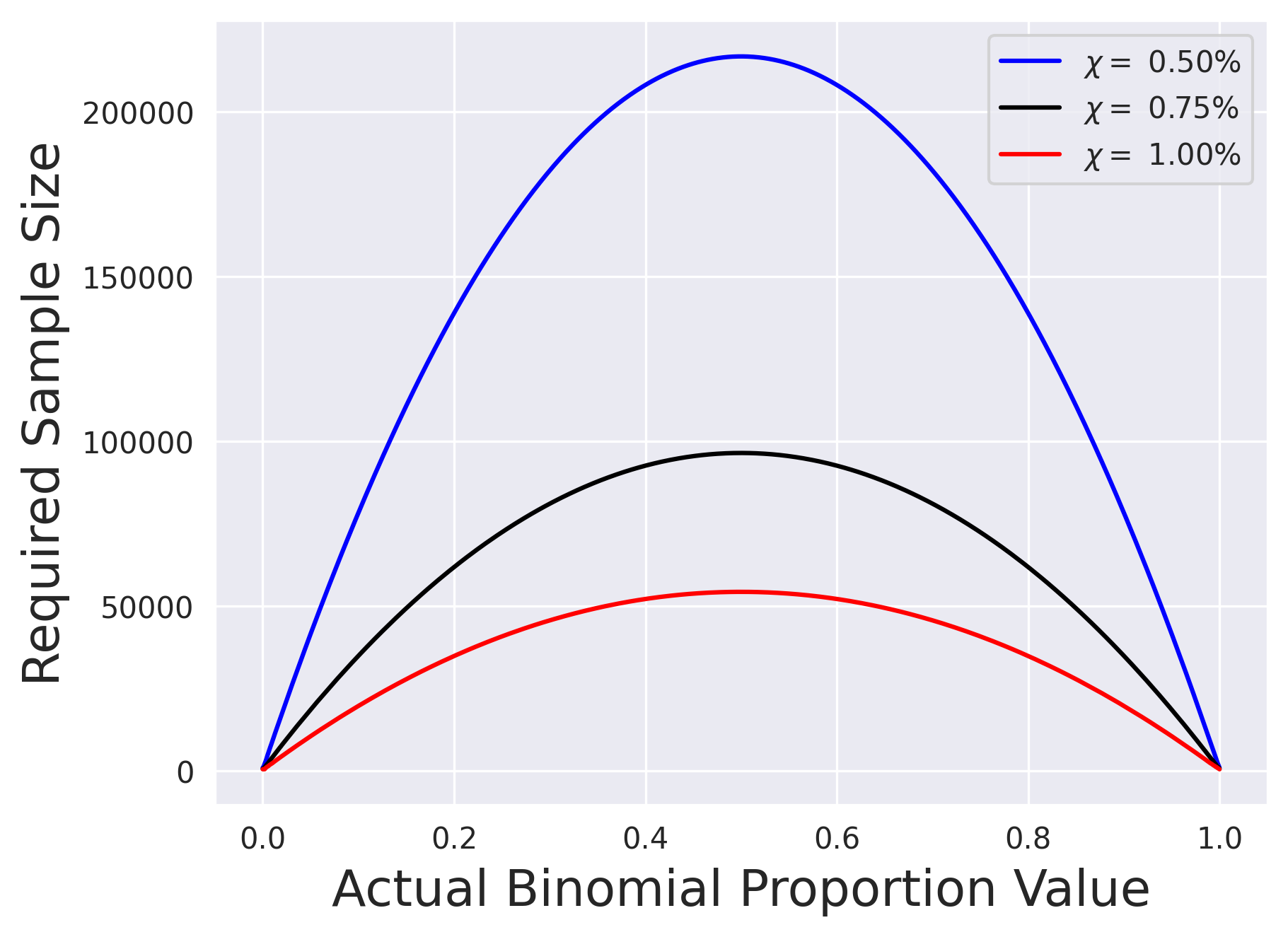}
  \end{center}
  \vspace{-.05in}
  \caption{The number of samples for the Clopper-Pearson method to achieve a target error $\chi$ with confidence $(1 - \alpha)$.
  }
  \label{fig:motivation}
  \vspace{-0.15in}
\end{wrapfigure}
\hfill\break\indent
{\noindent \bf Insight 3: Computing the approximate network's certified radius using $\eqlb$} 
For certification of the approximate network $g^p$, our main insight is that estimating $\eqlb$ and using that value to compute the certified radius is more efficient than computing RS certified radius from scratch. 
The next theorem shows how to use estimated value of $\eqlb$ to certify $g^p$ (the proof is in Appendix~\ref{sec:proofs}):

\begin{restatable}{theorem}{irs}
\label{thm:irs}
If a classifier $f^p$ is such that for all $x \in \mathbb{R}^m,  \prob_\epsilon(f(x+\epsilon) \neq f^p(x+\epsilon)) \leq \eqlb$, and classifier $f$ satisfies $\prob_\epsilon (f(x+\epsilon) = c_A) \geq \palb \geq \pbub \geq \max_{c \neq c_A} \prob_\epsilon (f(x+\epsilon) = c)$ and $\palb-\eqlb \geq \pbub+\eqlb$ then $g^p$ satisfies $g^p(x+\delta) = c_A$ for all $\delta$ satisying $\|\delta\|_2 \leq \frac{\sigma}{2} (\Phi^{-1}(\palb-\eqlb) - \Phi^{-1}(\pbub+\eqlb))$
\end{restatable}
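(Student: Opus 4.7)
The plan is to reduce Theorem~\ref{thm:irs} to Theorem~\ref{thm:rs} applied to the classifier $f^p$, after propagating the disparity bound $\eqlb$ into the top-class and runner-up bounds.

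First I would translate the hypothesis $\prob_\epsilon(f(x+\epsilon) \neq f^p(x+\epsilon)) \leq \eqlb$ into two one-sided bounds via the standard event-inclusion argument. For the top class, observe that $\{f(x+\epsilon) = c_A\} \subseteq \{f^p(x+\epsilon) = c_A\} \cup \{f(x+\epsilon) \neq f^p(x+\epsilon)\}$, so a union-bound step yields
\[
\prob_\epsilon(f^p(x+\epsilon) = c_A) \;\geq\; \prob_\epsilon(f(x+\epsilon) = c_A) - \eqlb \;\geq\; \palb - \eqlb.
\]
Symmetrically, for any $c \neq c_A$, $\{f^p(x+\epsilon) = c\} \subseteq \{f(x+\epsilon) = c\} \cup \{f(x+\epsilon) \neq f^p(x+\epsilon)\}$, yielding
\[
\prob_\epsilon(f^p(x+\epsilon) = c) \;\leq\; \prob_\epsilon(f(x+\epsilon) = c) + \eqlb \;\leq\; \pbub + \eqlb.
\]
Taking the maximum over $c \neq c_A$ preserves the second inequality.

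Next, I would define $\palb' := \palb - \eqlb$ and $\pbub' := \pbub + \eqlb$. The hypothesis $\palb - \eqlb \geq \pbub + \eqlb$ ensures $\palb' \geq \pbub'$, and the two bounds above show that $f^p$ satisfies the premise of Theorem~\ref{thm:rs} at $x$ with top-class probability at least $\palb'$ and runner-up probability at most $\pbub'$. Invoking Theorem~\ref{thm:rs} for $f^p$ with these numbers then gives $g^p(x+\delta) = c_A$ for all $\delta$ with $\|\delta\|_2 \leq \tfrac{\sigma}{2}(\Phi^{-1}(\palb') - \Phi^{-1}(\pbub'))$, which is exactly the claimed radius. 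Note that $\Phi^{-1}$ is monotonically increasing, so shrinking $\palb$ by $\eqlb$ and inflating $\pbub$ by $\eqlb$ shrinks the certified radius monotonically, as one would expect.

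I do not anticipate a real technical obstacle; the argument is a two-line union bound followed by a direct citation of Theorem~\ref{thm:rs}. The only point that deserves care is the direction of the event inclusions (making sure I subtract $\eqlb$ from the lower bound and add it to the upper bound, not the reverse), and checking that the hypothesis $\palb - \eqlb \geq \pbub + \eqlb$ is precisely what is needed for Theorem~\ref{thm:rs}'s ordering requirement $\palb' \geq \pbub'$ to hold, so that $\Phi^{-1}(\palb') - \Phi^{-1}(\pbub') \geq 0$ and the certified radius is well-defined.
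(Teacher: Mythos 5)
Your proposal is correct and follows essentially the same route as the paper's proof: the same two union-bound/event-inclusion steps yielding $\prob_\epsilon(f^p(x+\epsilon)=c_A)\geq \palb-\eqlb$ and $\max_{c\neq c_A}\prob_\epsilon(f^p(x+\epsilon)=c)\leq \pbub+\eqlb$, followed by a direct application of Theorem~\ref{thm:rs} to $f^p$. No gaps.
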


\add{
Theorem~\ref{thm:rs} considers standard RS for a single network. Our Theorem~\ref{alg:irs} shows how to use the estimated value of $\eqlb$ to transfer the certification guarantees across two networks $f$ and $f^p$.
}

\subsection{IRS Certification Algorithm}
\label{sec:algo}

\begin{wrapfigure}{R}{0.53\textwidth}
\vspace{-.34in}
\begin{minipage}{0.55\textwidth}
\begin{algorithm}[H]
\small
\caption{\Tool{} algorithm: Certification with cache}
\label{alg:irs}
\textbf{Inputs:} $f^p$: DNN obtained from approximating $f$, $\sigma$: standard deviation, $x$: input to the DNN, $n_p$: number of Gaussian samples used for certification, $\cache$: stores the information to be reused from certification of $f$, $\alpha$ and $\alpha_\zeta$: confidence parameters, $\gamma$: threshold hyperparameter to switch between estimation methods 
\begin{algorithmic}[1]
\Function{CertifyIRS}{$f^p, \sigma, x, n_p, \cache, \alpha, \alpha_\zeta, \gamma$}
\State $\hat{c}_A \gets \text{top index in } \cache[x]$
\State $\palb \gets \text{lower confidence of $f$ from } \cache[x]$
\If{$\palb < \gamma$}
\State $\eqlb \gets \text{EstimateZeta}(f^p, \sigma, x, n_p, \cache, \alpha_\zeta)$ 
    \If{$\palb-\eqlb > \frac{1}{2}$} 
        \State \Return prediction $\hat{c}_A$ and radius $\sigma \Phi^{-1}(\palb-\eqlb)$
    \EndIf
\Else
    \State $counts \gets \text{SampleUnderNoise}(f^p, x, n_p, \sigma)$
    \State $p'_A \gets \text{LowerConfidenceBound}($
    \State \qquad\qquad\qquad $counts[\hat{c}_A], n_p, 1 - (\alpha+\alpha_\zeta ))$
    \If{$p'_A > \frac{1}{2}$} 
    \State \Return prediction $\hat{c}_A$ and radius $\sigma \Phi^{-1}(p'_A)$
    \EndIf
\EndIf
\State \Return ABSTAIN
\EndFunction
\end{algorithmic}
\end{algorithm}
\end{minipage}
\vspace{-0.25in}
\end{wrapfigure}

The Algorithm~\ref{alg:irs} presents the pseudocode for the \Tool{} algorithm, which extends RS certification from Algorithm~\ref{alg:rs}. 
The algorithm takes the modified classifier $f^p$ and certifies the robustness of $g^p$ around $x$. The input $n_p$ denotes the number of Gaussian corruptions used by the algorithm. 

The \Tool{} algorithm utilizes a cache $\cache$, which stores information obtained from the RS execution of the classifier $f$ for each input $x$. The cached information is crucial for the operation of \Tool{}.
$\cache$ stores the top predicted class index $\hat{c}_A$ and its lower confidence bound $\palb$ for $f$ on input $x$.

The standard RS algorithm takes a conservative value of $\pbub$ by letting $\pbub = 1-\palb$. This works reasonably well in practice and simplifies the computation of certified radius $\frac{\sigma}{2} (\Phi^{-1}(\palb) - \Phi^{-1}(\pbub))$ to $\sigma \Phi^{-1}(\palb)$. We make a similar choice in \Tool{}, which simplifies the certified radius calculation from $\frac{\sigma}{2} (\Phi^{-1}(\palb-\eqlb) - \Phi^{-1}(\pbub+\eqlb))$ of Theorem~\ref{thm:irs} to $\sigma \Phi^{-1}(\palb-\eqlb)$ as we state in the next theorem (the proof is in Appendix~\ref{sec:proofs}):

\vspace{.05in}
\begin{restatable}{theorem}{simplify}
\label{thm:simplify} 
If $\palb-\eqlb \geq \frac{1}{2},$ then $ \sigma \Phi^{-1}(\palb-\eqlb) \leq \frac{\sigma}{2} (\Phi^{-1}(\palb-\eqlb) - \Phi^{-1}(\pbub+\eqlb))$
\end{restatable}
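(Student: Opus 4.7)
The plan is to exploit the constraint $\palb + \pbub \leq 1$, which is implicit in the probabilistic setup inherited from Theorem~\ref{thm:rs}: $\palb$ lower-bounds the top-class probability and $\pbub$ upper-bounds the runner-up probability, so they cannot sum to more than one. This constraint is what powers the usual RS ``one-parameter'' simplification, and it is exactly what will drive the inequality here.

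First, I would rewrite the implicit bound $\pbub \leq 1 - \palb$ as $\pbub + \eqlb \leq 1 - (\palb - \eqlb)$. Next I would apply monotonicity of $\Phi^{-1}$ together with the central symmetry $\Phi^{-1}(1-t) = -\Phi^{-1}(t)$ of the standard Gaussian quantile to deduce $\Phi^{-1}(\pbub + \eqlb) \leq -\Phi^{-1}(\palb - \eqlb)$, i.e.\ $-\Phi^{-1}(\pbub + \eqlb) \geq \Phi^{-1}(\palb - \eqlb)$. Adding $\Phi^{-1}(\palb - \eqlb)$ to both sides gives
\[
\Phi^{-1}(\palb - \eqlb) - \Phi^{-1}(\pbub + \eqlb) \;\geq\; 2\,\Phi^{-1}(\palb - \eqlb),
\]
and multiplying by $\sigma/2 > 0$ yields the claimed inequality. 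The hypothesis $\palb - \eqlb \geq \tfrac{1}{2}$ plays two roles: it keeps both $\palb - \eqlb$ and $\pbub + \eqlb$ inside $(0,1)$ so that $\Phi^{-1}$ is defined on the relevant arguments, and it ensures $\Phi^{-1}(\palb - \eqlb) \geq 0$ so the certified radius on the left-hand side is a meaningful (nonnegative) quantity; equality in the theorem occurs exactly at the tight choice $\pbub = 1 - \palb$.

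The main obstacle, and really the only conceptual content, is noticing that the statement as written is false unless one invokes the probability-sum constraint $\pbub \leq 1 - \palb$ (otherwise taking $\pbub$ near $1 - \eqlb$ would blow up $\Phi^{-1}(\pbub + \eqlb)$ and make the right-hand side arbitrarily negative). Once that implicit constraint is made explicit and combined with the odd symmetry of $\Phi^{-1}$ around $\tfrac{1}{2}$, the rest is a one-line manipulation. I would therefore open the proof with a brief justification of $\pbub \leq 1 - \palb$ from the setup of Theorem~\ref{thm:irs} and proceed as above.
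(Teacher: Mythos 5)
Your proposal is correct and follows essentially the same route as the paper's own proof: both arguments rest on the implicit constraint $\pbub \leq 1-\palb$ combined with the symmetry $\Phi^{-1}(1-t) = -\Phi^{-1}(t)$ and monotonicity of $\Phi^{-1}$ to obtain $\Phi^{-1}(\palb-\eqlb) \leq -\Phi^{-1}(\pbub+\eqlb)$, after which the claim follows by adding $\Phi^{-1}(\palb-\eqlb)$ and scaling by $\sigma/2$. Your observation that the statement needs the unstated hypothesis $\pbub \leq 1-\palb$ is accurate; the paper invokes it in the same way, simply writing ``since $1-\palb \geq \pbub$'' without further comment.
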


As per our insight 2 (Section~\ref{sec:motivation}), binomial confidence interval estimation requires fewer samples for binomial with actual probability close to $0$ or $1$.
\Tool{} can take advtange of this when $\palb$ is not close to $1$. However, when $\palb$ is close to $1$ then there is no benefit of using $\eqlb$-based certified radius for $g^p$. Therefore, the algorithm uses a threshold hyperparameter $\gamma$ close to $1$ that is used to switch between certified radius from Theorem~\ref{thm:irs} and standard RS from Theorem~\ref{thm:rs}.

If the $\palb$ is less than the threshold $\gamma$, then an estimate of $\eqlb$ for classifier $f^p$ and the classifier $f$ is computed using the $\text{EstimateZeta}$ function. We discuss $\text{EstimateZeta}$ procedure in the next section. 
If the $\palb-\eqlb$ is greater than $\frac{1}{2}$, then the top predicted class in the cache is returned as the prediction with the radius $\sigma \Phi^{-1}(\palb-\eqlb)$ as computed in Theorem~\ref{thm:simplify}.

In case, $\palb$ is greater than the threshold $\gamma$, similar to standard RS, the \Tool{} algorithm draws $n^p$ samples of $f^p(x + \epsilon)$ by running $n^p$ noise-corrupted copies of $x$ through the classifier $f^p$. 
The function $\text{SampleUnderNoise}(f^p, x, n_p, \sigma)$ in the pseudocode draws $n_p$ samples of noise, $\epsilon_1 \dots \epsilon_{n_p} \sim \normal$, runs each $x + \epsilon_i$ through the classifier $f^p$, and returns a vector of class counts.
If the lower confidence bound is greater than $\frac{1}{2}$, the top predicted class is returned as the prediction with a radius based on the \mbox{lower confidence bound $\palb$.}

If the function does certify the input in both of the above cases, it returns ABSTAIN.

The hyperparameters $\alpha$ and $\alpha_\zeta$ denote confidence of \Tool{} results.
The \Tool{} algorithm result is correct with confidence at least $1-(\alpha+\alpha_\zeta)$. 
For the case $\palb \geq \gamma$, this holds since we follow the same steps as standard RS. 
The function $\text{LowerConfidenceBound}(counts[\hat{c}_A], n_p, 1 - (\alpha+\alpha_\zeta ))$ in the pseudocode returns a one-sided $1 - (\alpha+\alpha_\zeta )$ lower confidence interval for the Binomial parameter $p$ given a sample $counts[\hat{c}_A] \sim Binomial(n_p, p)$.
We next state the theorem that shows the confidence of \Tool{} results in the other case when $\palb < \gamma$ (the proof is in Appendix~\ref{sec:proofs}):

\begin{restatable}{theorem}{estimate}
\label{thm:estimate}
If  $\prob_\epsilon(f(x+\epsilon) = f^p(x+\epsilon)) > 1-\eqlb$ with confidence at least $1-\alpha_\zeta$. If classifier $f$ satisfies $\prob_\epsilon (f(x+\epsilon) = c_A) \geq \palb$ with confidence at least $1-\alpha$. Then for classifier $f^p$, $\prob_\epsilon (f^p(x+\epsilon) = c_A) \geq \palb-\eqlb$ with confidence at least $1-(\alpha+\alpha_\zeta)$
\end{restatable}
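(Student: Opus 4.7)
The plan is to split the argument into a purely probabilistic inequality (conditional on the two high-confidence events holding) and then a union bound for the confidence statement. The purely probabilistic inequality is the short combinatorial step, while the union bound handles composing the two separate confidence bounds into one.

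First I would establish the following deterministic core lemma: for any two measurable classifiers $f, f^p$ and any distribution over $\epsilon$, and any class $c_A$,
\begin{equation*}
\prob_\epsilon(f^p(x+\epsilon) = c_A) \;\geq\; \prob_\epsilon(f(x+\epsilon) = c_A) \;-\; \prob_\epsilon(f(x+\epsilon) \neq f^p(x+\epsilon)).
\end{equation*}
The cleanest way is to note that $\{f = c_A\} \subseteq \{f^p = c_A\} \cup \{f \neq f^p\}$, so by monotonicity and a union bound on the right-hand side, $\prob(f = c_A) \leq \prob(f^p = c_A) + \prob(f \neq f^p)$, which rearranges to the claimed inequality. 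Plugging in the hypothesized bounds $\prob(f = c_A) \geq \palb$ and $\prob(f \neq f^p) \leq \eqlb$ yields $\prob(f^p = c_A) \geq \palb - \eqlb$.

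Second I would lift this to a confidence statement. Let $E_1$ be the event (over the sampling randomness used in computing $\palb$) that $\prob_\epsilon(f(x+\epsilon) = c_A) \geq \palb$; by hypothesis $\prob(E_1) \geq 1-\alpha$. Let $E_2$ be the event (over the sampling randomness used in \textsc{EstimateZeta}) that $\prob_\epsilon(f(x+\epsilon) \neq f^p(x+\epsilon)) \leq \eqlb$; by hypothesis $\prob(E_2) \geq 1-\alpha_\zeta$. A union bound gives $\prob(E_1 \cap E_2) \geq 1 - (\alpha + \alpha_\zeta)$. On $E_1 \cap E_2$ the deterministic step above applies and yields $\prob_\epsilon(f^p(x+\epsilon) = c_A) \geq \palb - \eqlb$, which is exactly the conclusion with confidence $1-(\alpha+\alpha_\zeta)$.

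The only thing to be careful about, and what I would expect to be the main (modest) obstacle, is making the two sources of randomness explicit and independent-looking enough that the union bound is obviously valid: the $\palb$ bound comes from one batch of Gaussian samples used inside the cached \textsc{Certify} run on $f$, while the $\eqlb$ bound comes from a separate batch of Gaussian samples drawn by \textsc{EstimateZeta} to compare $f$ and $f^p$. Since the union bound does not require independence, this is purely a matter of stating the two events over the joint sample space; no probabilistic subtlety beyond that is needed.
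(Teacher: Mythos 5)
Your proposal is correct and follows essentially the same route as the paper: the same set inclusion $\{f = c_A\} \subseteq \{f^p = c_A\} \cup \{f \neq f^p\}$ giving $\prob_\epsilon(f^p(x+\epsilon)=c_A) \geq \palb - \eqlb$ on the intersection of the two confidence events, followed by the same union-bound computation $\prob(E_1 \cap E_2) \geq 1-(\alpha+\alpha_\zeta)$. Your closing remark that no independence is needed is exactly the point the paper makes as well (it in fact reuses the \emph{same} Gaussian samples for both estimates, which your argument already covers).
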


\subsection{Estimating the Upper Confidence Bound $\eqlb$}
\label{sec:eqlb}
In this section, we present our method for estimating $\eqlb$ such that $\prob_\epsilon(f(x+\epsilon) \neq f^p(x+\epsilon)) \leq \eqlb$ with high confidence (Algorithm~\ref{alg:eqlb}).
We use the Clopper-Pearson \cite{10.2307/2331986} method to estimate the upper confidence bound $\eqlb$.
\begin{wrapfigure}{R}{0.53\textwidth}
\vspace{-.3in}
\begin{minipage}{0.55\textwidth}
\begin{algorithm}[H]
\small
\caption{Estimate $\eqlb$}
\label{alg:eqlb}
\textbf{Inputs:} $f^p$: DNN obtained from approximating $f$, $\sigma$: standard deviation, $x$: input to the DNN, $n_p$: number of Gaussian samples used for estimating $\eqlb$, $\cache$: stores the information to be reused from certification of $f$, $\alpha_\zeta$: confidence parameter
\\
\textbf{Output:} Estimated value of $\eqlb$
\begin{algorithmic}[1] 
\Function{EstimateZeta}{$f^p, \sigma, x, n_p, \cache, \alpha_\zeta$}
\State $n_\Delta \gets 0$
\State $\textit{seeds} \gets$ seeds for original samples from $\cache[x]$
\State $\textit{predictions} \gets f$'s predictions on samples from $\cache[x]$
\For{$i \in \{1, \dots n_p \}$}
\State $\epsilon \sim \normal$ using $\textit{seeds}[i]$
\State $c_f \gets \textit{predictions}[i]$
\State $c_{f^p} \gets f^p(x+\epsilon)$ 
\State $n_\Delta \gets n_\Delta + I(c_f \neq c_{f^p})$ 
\EndFor
\State \Return UpperConfidenceBound($n_\Delta$, $n_p$, $1-\alpha_\zeta$)
\EndFunction
\normalsize
\end{algorithmic}
\end{algorithm}
\end{minipage}
\vspace{-.2in}
\end{wrapfigure}

We store the $\textit{seeds}$ used for randomly generating Gaussian samples while certifying the function $f$ in the cache, and we reuse these seeds to generate the same Gaussian samples. 
$\textit{seeds}[i]$ stores the seed used for generating $i$-th sample in the RS execution of $f$, and $\textit{predictions}[i]$ stores the prediction of $f$ on the corrsponding $x+\epsilon$. 
We evaluate $f^p$ on each corruption $\epsilon$ generated from $\textit{seeds}$ and match them to predictions by $f$. 
$c_f$ and $c_{f^p}$ represent the top class prediction by $f$ and $f^p$ respectively. 
$n_\Delta$ is the count of the number of corruptions $\epsilon$ such that $f$ and $f^p$ do not match on $x+\epsilon$.

The function UpperConfidenceBound($n_\Delta$, $n_p$, $1-\alpha_\zeta$) in the pseudocode returns a one-sided $1-\alpha_\zeta$ upper confidence interval for the Binomial parameter $p$ given a sample $n_\Delta \sim Binomial(n_p, p)$.
We compute this upper confidence bound using the Clopper-Pearson method. This is similar to how the lower confidence bound is computed in the standard RS Algorithm~\ref{alg:rs}. It is sound since the Clopper-Pearson method is conservative. 

\add{Reusing the seeds for generating noisy samples does not change the certified radius and is 2x faster compared to naive Monte Carlo estimation of $\eqlb$ with fresh Gaussian samples.
Storing the seeds used in the cache results in a small memory overhead (less than 2MBs for our largest benchmark).} 
We use the same Gaussian samples for estimations of $\palb$ and $\eqlb$. 
This is equivalent to estimating two functions, $p(X)$ and $q(X)$, of a random variable $X$, where the same set of samples of $X$ can be employed for their respective estimations.
Theorem~\ref{thm:estimate} makes no assumptions about the independence of estimating $\palb$ and $\eqlb$, thus we can soundly reuse \mbox{the same Gaussian samples for both estimations.}


\section{Experimental Methodology}
\noindent{\bf Networks and Datasets.} 
We evaluate \Tool{} on CIFAR-10~\citep{cifar10} and ImageNet~\citep{deng2009imagenet}. 
On each dataset, we use several classifiers, each with a different $\sigma$'s. 
For an experiment that adds Gaussian corruptions with $\sigma$ to the input, we use the network that is trained with Gaussian augmentation with variance $\sigma^2$.
On CIFAR-10 we use the base classifier a 20-layer and 110-layer residual network. On ImageNet our base classifier is a ResNet-50.

\noindent{\bf Network Approximations.} We evaluate \Tool{} on multiple approximations. We consider float16 (fp16), bfloat16 (bf16), and int8 quantizations (Section~\ref{sec:exp_quant}). We show the effectiveness of \Tool{} on pruning approximation in Section~\ref{sec:exp_prune}. 
For int8 quantization, we use dynamic per-channel quantization mode. from~\citep{NEURIPS2019_9015} library. For float16 and bfloat16 quantization, we change the data type of the DNN weights from float32 to the respective types.
We perform float32, float16, and bfloat16 inferences on the GPU and int8 inferences on CPU since Pytorch does not support int8 quantization for GPUs yet \cite{torch_issue}.
For the pruning experiment, we perform the lowest weight magnitude (LWM) pruning.  
The top-1 accuracy of the networks used in the evaluation and the approximate networks is discussed in Appendix~\ref{sec:eval_nets}.

\noindent{\bf Experimental Setup.} We ran experiments  on a 48-core Intel Xeon Silver 4214R CPU with 2 NVidia RTX A5000 GPUs. \Tool{} is implemented in Python and uses PyTorch 2.0.1.~\citep{NEURIPS2019_9015}.

\noindent{\bf Hyperparameters.} 
We use confidence parameters $\alpha=0.001$ for the certification of $g$, and $\alpha_\zeta = 0.001$ for the estimation of $\eqlb$.
To establish a fair comparison, we set the baseline confidence with $\alpha_b = \alpha + \alpha_\zeta = 0.002$. This choice ensures that both the baseline and \Tool{}, provide certified radii with equal confidence.
We use grid search to choose an effective value for $\gamma$. A detailed description of our hyperparameter search and its results are described in Section~\ref{sec:ablation}.

\noindent{\bf Average Certified Radius.} 
We compute the certified radius $r$ when the certification algorithm did not abstain and returned the correct class with radius $r$, for both \Tool{} (Algorithm~\ref{alg:irs}) and the baseline (Algorithm~\ref{alg:rs}). 
In other cases, we say that the certified radius $r=0$. We compute the \emph{average certified radius} (ACR) by taking the mean of certified radii computed for inputs in the test set.
Higher ACR indicates stronger robustness certification guarantees. 

\textbf{Speedup.}
\Tool{} is applicable while certifying multiple similar networks, where it can reuse the certification of one of the networks for faster certification of all other similar networks. 
We demonstrate the effectiveness of \Tool{} by comparing \Tool{}'s certification time for these other similar networks with the baseline certification from scratch.
We do not include the certification time of the first network in the comparison as it adds the same time for both \Tool{} and baseline.

\section{Experimental Results} \label{sec:exp_main}

We now present our main evaluation results. 
We consider the float32 representation of the DNN as $f$ and a particular approximation as $f^p$.
However, \Tool{} can be used with any similar $f$ and $f^p$s,
e.g., where $f$ is an int8 quantized network and $f^p$ is the float32 network. 
In all of our experiments, we follow a specific procedure:

\vspace{-.08in}
\begin{enumerate}\itemsep 1pt\parskip 1pt
\item We certify the smoothed classifier $g$ using standard RS with a sample size of $n$. 
\item We approximate the base classifier $f$ with $f^p$. 
\item Using the \Tool{}, we certify smoothed classifier $g^p$ by employing Algorithm~\ref{alg:irs} and utilizing the cached information $\cache$ obtained from the certification of $g$. 
\end{enumerate}

\vspace{-.08in}
We compare \Tool{} to the baseline that uses standard non-incremental RS (Algorithm~\ref{alg:rs}), to certify $g^p$. Our results compare ACR and certification time between \Tool{} and the \mbox{baseline for various $n_p$ values.}

\subsection{Effectiveness of \Tool{}} 
\label{sec:exp_quant}

We compare the ACR and the certification time of the baseline and \Tool{} for the common int8 quantization. 
We use $n=10^5$ samples for certification of $g$. For certifying $g^p$, we consider $n_p$ values from $\{5\%, \dots 50\%\}$ of $n$ and $\sigma=1$.
We perform experiments on $500$ images and compute the total time for certifying $g^p$.


\begin{figure}[!htbp]
\centering\small
 \vspace{-.1in}
\begin{subfigure}[b]{0.45\textwidth}
 \includegraphics[width=.9\textwidth]{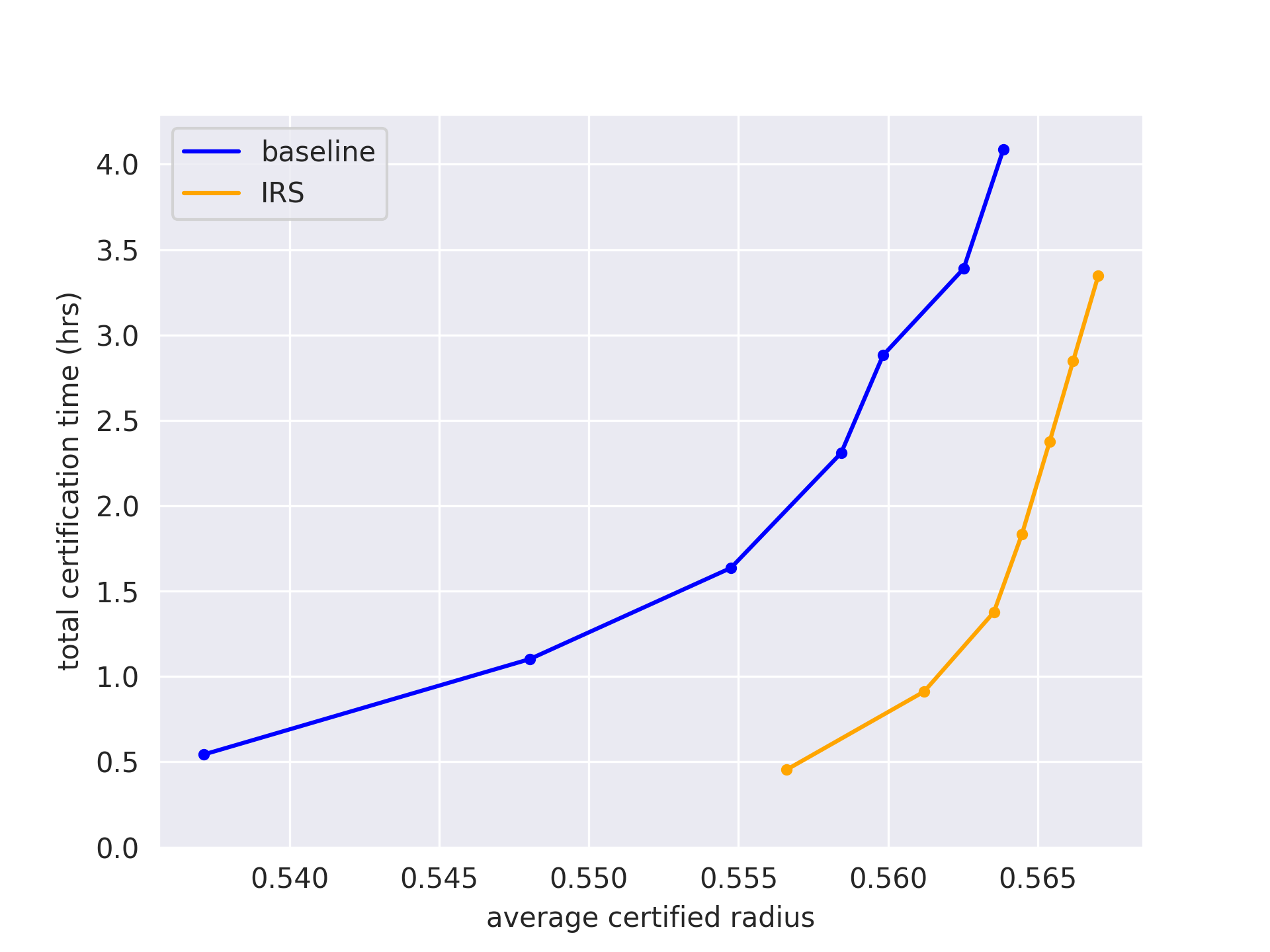}
 \caption{ResNet-110 on CIFAR-10}
 \label{fig:reduction_a}
\end{subfigure}
\hspace{2mm}
\begin{subfigure}[b]{0.45\textwidth}
 \includegraphics[width=.9\textwidth]{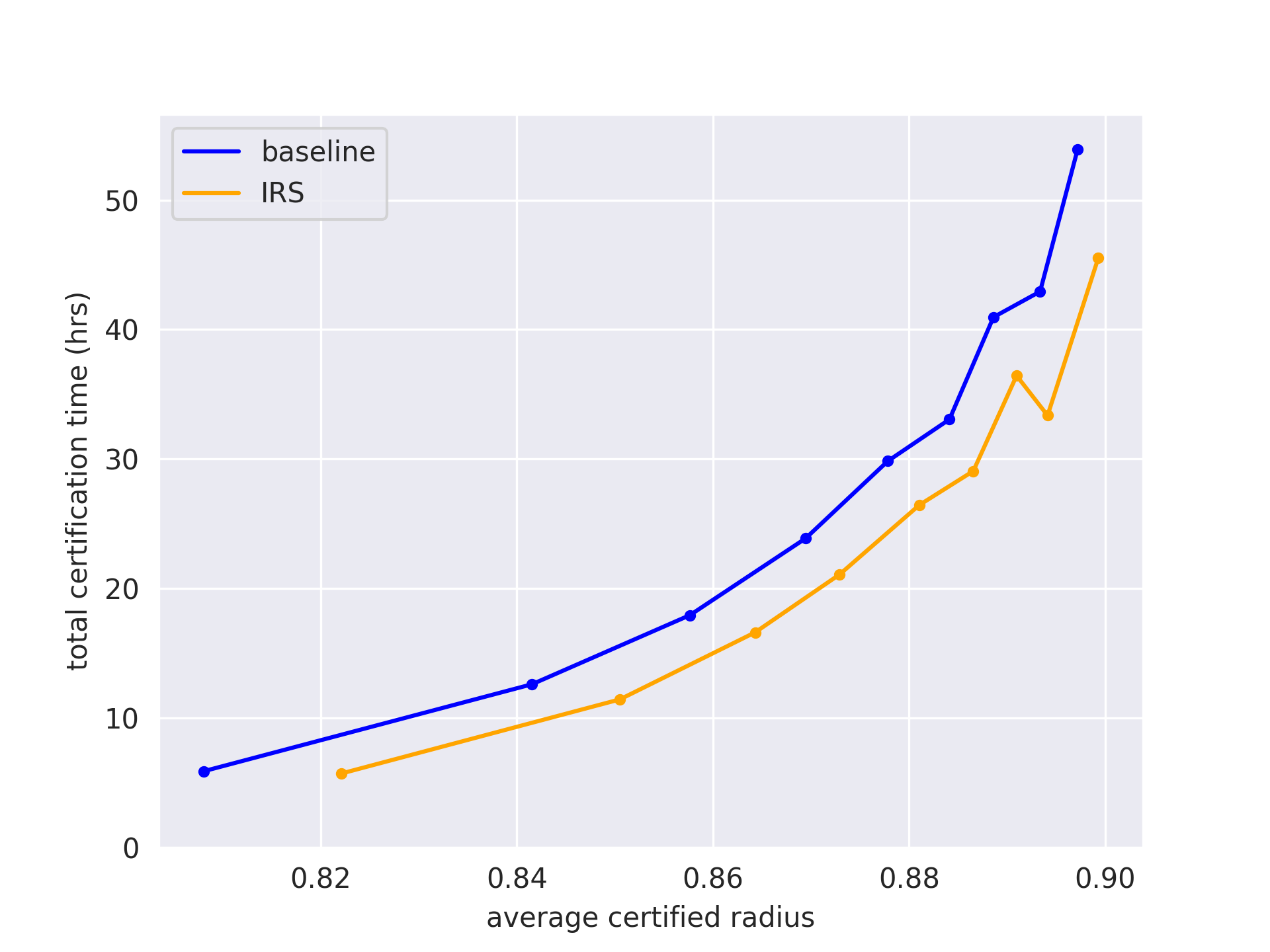}
 \caption{ResNet-50 on ImageNet}
 \label{fig:reduction_b}
\end{subfigure}
\vspace{-.02in}
\caption{Total certification time versus ACR with $\sigma= 1.0$.}
\label{fig:reduction}
\vspace{-.1in}
\end{figure} 

Figure~\ref{fig:reduction} presents the comparison between \Tool{} and RS for int8 quantization. 
The x-axis displays the ACR and the y-axis displays the certification time. 
The plot consists of 10 markers each for the \Tool{} and the baseline representing a specific value of $n_p$.
Expectedly, the higher the value of $n_p$, the higher the average time and ACR.
The marker coordinate denotes the ACR and the time for an experiment.
In all the cases, \Tool{} consistently takes less certification time to obtain the same ACR.   

Figure~\ref{fig:reduction_a}, for ResNet-110 on CIFAR10, shows that \Tool{} reduced the certification time from 2.91 hours (baseline) to 0.71 hours, resulting in time savings of 2.12 hours (4.1x faster).  
Moreover, we see that \Tool{} achieves an ACR of more than 0.565, whereas the baseline does not reach this ACR for any of the $n_p$ values in our experiments.

Figure~\ref{fig:reduction_b}, for ResNet-50 on ImageNet, for certifying an ACR of 0.875, \Tool{} substantially reduced certification time from 27.82 hours (baseline) to 22.45 hours, saving approximately 5.36 hours (1.24x faster). Additionally, \Tool{} achieved an ACR of 0.90 and reduced the certification time from 53.93 hours (baseline) to 40.58 hours, resulting in substantial time savings of 13.35 hours (1.33x faster).

\subsection{\Tool{} speedups on different quantizations}
\begin{wraptable}{r}{.55\textwidth} 
    \tablesize
    \centering
    \vspace{-0.15in}
    \caption{Average \Tool{} speedup for combinations of quantizations and $\sigma$'s.}
    \begin{tabular}{@{}lll rrr@{}}
        \toprule
        Dataset & Architecture & $\sigma$ & \multicolumn{3}{c}{Quantization} \\
        & & & fp16 & bf16 & int8 \\
        \hline
        & & 0.25 & 1.37x & 1.29x & 1.3x \\
        CIFAR10 & ResNet-20 & 0.5 & 1.79x & 1.7x & 1.77x \\
        & & 1.0 & 2.85x & 2.41x & 2.65x \\
        \hline
        & & 0.25 & 1.42x & 1.35x & 1.29x \\
        CIFAR10 & ResNet-110 & 0.5 & 1.97x & 1.74x & 1.77x \\
        & & 1.0 & 3.02x & 2.6x & 2.6x \\
        \hline
        & & 0.5 & 1.2x & 1.14x & 1.19x \\
        ImageNet & ResNet-50 & 1.0 & 1.43x & 1.31x & 1.43x \\
        & & 2.0 & 2.04x & 1.69x & 1.80x \\
        \bottomrule
    \end{tabular}
    \label{tab:avg_speedup}
\end{wraptable}


Next, we study if \Tool{} can handle other kinds of quantization.
We perform experiments for 10 different values of $n_p$ along with distinct approximations, and 3 values of $\sigma$. 
Since this would take months of experiment time with  $n$ and $n_p$ values from Section~\ref{sec:exp_quant}, for the rest of the experiments we use smaller values for these parameters.
%
In these experiments, we compute the relative speedup due to \Tool{} in comparison to the baseline.
We use $n=10^4$ for samples for certification of $g$. For certifying $g^p$, we consider $n_p$ values from $\{1\%, \dots 10\%\}$ of $n$. For CIFAR10, we consider $\sigma \in \{0.25, 0.5, 1.0$\}, and for ImageNet, we consider $\sigma \in \{0.5, 1.0, 2.0\}$ as in the previous work\cite{DBLP:conf/icml/CohenRK19}. 
We validated that the speedups for int8 quantization in this section for ResNet-50-ImageNet and ResNet-110-CIFAR10 are similar to those studied in Section~\ref{sec:exp_quant}.

To quantify \Tool{}'s average speedup over the baseline, we employ an approximate area under the curve (AOC) analysis. Specifically, we plot the certification time against the ACR. In most cases, IRS certifies a larger ACR compared to the baseline, resulting in regions on the x-axis where IRS exists but the baseline does not. To ensure a conservative estimation, we calculate the speedup only within the range where both IRS and the baseline exist. We determine the speedup by computing the ratio of the AOC for IRS to the AOC for the baseline within this common range.
Table~\ref{tab:avg_speedup} summarizes the average speedups for all quantization experiments.

We observe that \Tool{} gets a larger speedup for smoothing with larger $\sigma$ since on average the $\palb$ values are smaller. 
Appendix~\ref{app:sigma} presents a further justification for this observation. 
Appendix~\ref{app:eval} presents further \mbox{experiments} with \mbox{all combinations of DNNs, $\sigma$, and quantizations.}


%


\subsection{\Tool{} speedups on Pruned Models} 
\label{sec:exp_prune}
\vspace{-.2in}

\begin{wraptable}{r}{.55\textwidth} 
    \tablesize\vspace{.05in}
    \centering
    \caption{Average \Tool{} speedup for combinations of pruning ratio and $\sigma$'s.}
    \begin{tabular}{@{}lll rrr@{}}
        \toprule
        Dataset & Architecture & $\sigma$ & \multicolumn{3}{c}{Prune} \\
        & & & 5\% & 10\% & 20\% \\
        \hline
        &  & 0.25 & 1.3x & 1.25x & 0.99x \\
        CIFAR10 & ResNet-20 & 0.5 & 1.63x & 1.39x & 1.13x \\
        & & 1.0 & 2.5x & 2.09x & 1.39x \\
        \hline
         &  & 0.25 & 1.35x & 1.24x & 1.04x \\
        CIFAR10 & ResNet-110 & 0.5 & 1.83x & 1.6x & 1.23x \\
        & & 1.0 & 2.7x & 2.25x & 1.63x \\
        \hline
        &  & 0.5 & 1.19x & 1.04x & 0.87x \\
        ImageNet & ResNet-50 & 1.0 & 1.36x & 1.15x & 0.87x \\
        & & 2.0 & 1.87x & 1.54x & 1.01x \\
        \bottomrule
    \end{tabular}
    \label{tab:prune}
    \vspace{-.15in}
\end{wraptable}
\hfill

In this experiment, we study \Tool{}'s ability to certify beyond quantized models. 
We employ $l_1$ unstructured pruning, which prunes the fraction of the lowest $l_1$ magnitude weights from the DNN. Table~\ref{tab:prune} presents the average \Tool{} speedup for DNNs obtained by pruning $5\%, 10\%$ and $20\%$ weights. The speedups range from 0.99x to 2.7x. 
As the DNN is pruned more aggressively, it's expected that \Tool{}'s speedup will be lower. 
This is due to higher values of $\eqlb$ associated with aggressive pruning. 
In Appendix~\ref{sec:zeta_app}, we provide average $\eqlb$ values for all approximations. 
Compared to pruning, quantization typically yields smaller $\eqlb$ values, making \Tool{} more effective for quantization.


\subsection{Ablation Studies} 
\label{sec:ablation}
\add{Next, we show the effect of $\gamma$ on ACR. In Appendix~\ref{app:n} we show \Tool{} speedup \mbox{on distinct values of $n$.}}

\begin{wraptable}{r}{0.47\linewidth}    
    \tablesize
    \centering
    \vspace{-0.15in}
    \caption{ACR for each $\gamma$.}
    \vspace{-.08in}
    \begin{tabular}{l l l l}
        \toprule
        $\gamma$ & CIFAR10 & CIFAR10 & ImageNet  \\
        & ResNet-20 & ResNet-110 & ResNet-50 \\
        \hline
        0.9 & 0.438 & 0.436 & 0.458 \\
        0.95 & 0.442 & 0.439 & 0.464 \\
        0.975 & 0.445 & 0.441 & 0.465 \\
        0.99 & \textbf{0.446} & \textbf{0.443} & 0.466 \\
        0.995 & 0.445 & 0.442 & \textbf{0.467} \\
        0.999 & 0.444 & 0.442 & 0.464 \\
        \bottomrule
    \end{tabular}
    \label{tab:gamma}
    \vspace{-.2in}
\end{wraptable}

%
\noindent{\bf Sensitivity to threshold $\gamma$.}
For each DNN architecture, we chose the hyperparameter $\gamma$ by running \Tool{} to certify a small subset of the validation set images for certifying the int8 quantized DNN and comparing the ACR. 
The choice of $\gamma$ has no effect on certification time, as we perform $n_p$ inferences in both cases, $\underline{p_A} < \gamma$ and $\underline{p_A} > \gamma$.
We use the same $\gamma$ for each DNN irrespective of the approximation and $\sigma$. 
We use the grid search to choose the best value of gamma from the set $\{0.9, 0.95, 0.975, 0.99, 0.999\}$.
Table~\ref{tab:gamma} presents the ACR obtained for each $\gamma$. We chose $\gamma$ as $0.99$ for CIFAR10 networks and $0.995$ for the ImageNet networks since they result in the highest ACR.


\section{Related Work}
\label{sec:related}
\noindent{\bf Incremental Program Verification.}
The scalability of traditional program verification has been significantly improved by incremental verification, which has been applied on an industrial scale \citep{10.1145/2465449.2465456, DBLP:conf/lics/OHearn18, DBLP:conf/pldi/0002CS21}. Incremental program analysis tasks achieve faster analysis of individual commits by reusing partial results~\citep{5306334}, constraints~\citep{10.1145/2393596.2393665}, and precision information~\citep{10.1145/2491411.2491429} \mbox{from previous runs.}

\noindent{\bf Incremental DNN Certification.} Several methods have been introduced in recent years to certify the properties of DNNs deterministically~\citep{tjeng2017evaluating, bunel2020branch, DBLP:conf/cav/KatzBDJK17, wang2021beta, 10.1145/3563324, 10.1145/3622867} and probabilisticly~\citep{DBLP:conf/icml/CohenRK19}. Researchers used incremental certification speed up DNN certification~\citep{DBLP:conf/cav/FischerSDSV22, DBLP:journals/pacmpl/UgareSM22, DBLP:journals/corr/abs-2106-12732, ugare2023incremental, ugaretoward} -- these works apply complete and incomplete deterministic certification using formal logic cannot scale to e.g., ImageNet. In contrast, we propose incremental probabilistic certification with Randomized Smoothing, which enables much greater scalability.

\noindent{\bf Randomized Smoothing.}
\citet{DBLP:conf/icml/CohenRK19} introduced the addition of Gaussian noise to achieve $l_2$-robustness results. Several extensions to this technique utilize different types of noise distributions and radius calculations to determine certificates for general $l_p$-balls. \citet{yang2020randomized} and \citet{NEURIPS2020_1896a3bf} derived recipes for determining certificates for $p = 1, 2$, and $\infty$. \citet{NEURIPS2019_fa2e8c43}, \citet{10.1145/3447548.3467295}, and \citet{schuchardt2021collective} presented extensions to discrete perturbations such as $l_0$-perturbations, while \citet{bojchevski2023efficient}, \citet{9322576}, \citet{DBLP:conf/nips/0001F20a}, and \citet{liu2021pointguard} explored extensions to graphs, patches, and point cloud manipulations. \citet{Dvijotham2020A} presented theoretical derivations for the application of both continuous and discrete smoothing measures, while \citet{mohapatra2020higherorder}  improved certificates by using gradient information. \citet{horvth2022boosting} used ensembles to improve the certificate.

Beyond norm-balls certificates, \citet{NEURIPS2020_5fb37d5b} and \citet{10.1145/3460120.3485258} presented how geometric operations such as rotation or translation can be certified via Randomized Smoothing. \citet{chiang2022detection} and \citet{fischer2022scalable} demonstrated how the certificates can be extended from the setting of classification to regression (and object detection) and segmentation, respectively. For classification, \citet{jia2020certified} extended certificates from just the top-1 class to the top-k classes, while \citet{NEURIPS2020_37aa5dfc} certified the confidence of the classifier, not just the top-class prediction. \citet{rosenfeld2020certified} used Randomized Smoothing to defend against data poisoning attacks.
These RS extensions (using different noise distributions, perturbations, and geometric operations) are orthogonal to the standard RS approach from~\citet{DBLP:conf/icml/CohenRK19}. While these extensions have been shown to improve the overall bredth of RS, \Tool{} is complementary to these extensions. 

\section{Limitations}
We showed that \Tool{} is effective at certifying the smoothed version of the approximated DNN. However, there are certain limitations to the effectiveness of \Tool{}.
First, the \Tool{} algorithm requires a cache with the top predicted class index, its lower confidence bound, and the seeds for Gaussian corruptions obtained from the RS execution of the original classifier. However, storing this additional information is reasonable since it has negligible memory overhead and is a byproduct of certification (as trustworthy ML matures, we anticipate that this information will be shipped with {pre-certified networks for reproducibility purposes).}


The smoothing parameter $\sigma$ used in IRS affects its efficiency, with larger values of $\sigma$ generally leading to better results.
As a consequence, we observed a smaller speedup when using a smaller value of $\sigma$ (e.g., 0.25 on CIFAR10) compared to a larger value (e.g., 1 on CIFAR10).
The value of $\sigma$ offers a trade-off between robustness and accuracy. 
By choosing a larger $\sigma$, one can improve robustness but it may lead to a loss of accuracy in the model. 

\Tool{} targets fast certification while maintaining a sufficiently large radius. 
Therefore, we considered $n_p$ smaller than $50\%$ of $n$ for our evaluation. 
However, \Tool{} certified radius can be smaller than the non-incremental RS, provided the user has a larger sample budget.
In our experiment in Appendix~\ref{app:np} we test \Tool{} on larger $n_p$ and observe that \Tool{} is better than baseline for $n_p$ less than $70\%$ of $n$. 
This is particularly advantageous when computational resources are limited.


\vspace{-0.03in}
\section{Conclusion}
\vspace{-.02in}
We propose \Tool{}, the first incremental approach for probabilistic DNN certification. \Tool{} leverages the certification guarantees obtained from the smoothed model to certify a smoothed approximated model with very few samples. Reusing the computation of original guarantees significantly reduces the computational cost of certification while maintaining strong robustness guarantees. \Tool{} speeds up certification up to \upto over the standard non-incremental RS baseline on 
state-of-the-art classification models.
\add{We anticipate that \Tool{} can be particularly useful for approximate tuning when the users need to analyze the robustness of multiple similar networks. Further, one can easily ship the certification cache to allow other users to further modify these networks based on their specific device and application needs and recertify the new network.}
We believe that our approach paves the way for efficient and effective certification of DNNs in real-world applications.

\newpage 

\section*{ACKNOWLEDGMENTS}
We thank the anonymous reviewers for their comments. This research was supported in part by NSF Grants No. CCF-1846354, CCF-2217144, CCF-2238079, CCF-2313028, CCF-2316233, CNS-2148583, USDA NIFA Grant No. NIFA-2024827 and Google Research Scholar award.

\bibliographystyle{plainnat}
\bibliography{ref}

\appendix

\newpage

\section{Appendix}

\subsection{Observation for Binomial Confidence Interval Methods}
\label{sec:app_binomial}
In this section, we show the plots for the number of samples required to estimate an unknown binomial proportion parameter through two popular estimation techniques - the Wilson ~\citep{10.2307/2276774} and Agresti-Coull method ~\citep{doi:10.1080/00031305.1998.10480550}.
For this experiment, we use three different values of the target error $\chi$ = 0.5 \%, 0.75 \%, and 1.0 \% and a fixed confidence value $(1 - \alpha) = 0.99$ for both estimation methods.
As shown in Fig~\ref{fig:additionSampleSizeapp}, for a fixed target error $\chi$, confidence $(1 - \alpha)$, and estimation technique, the number of samples required for estimation peaks, when the actual parameter value is around $0.5$ and is the smallest around the boundaries. This is consistent with the observation described in Section~\ref{sec:motivation}.

\begin{figure}[H]
\centering
\begin{subfigure}[H]{0.45\textwidth}
 \includegraphics[width=\textwidth]{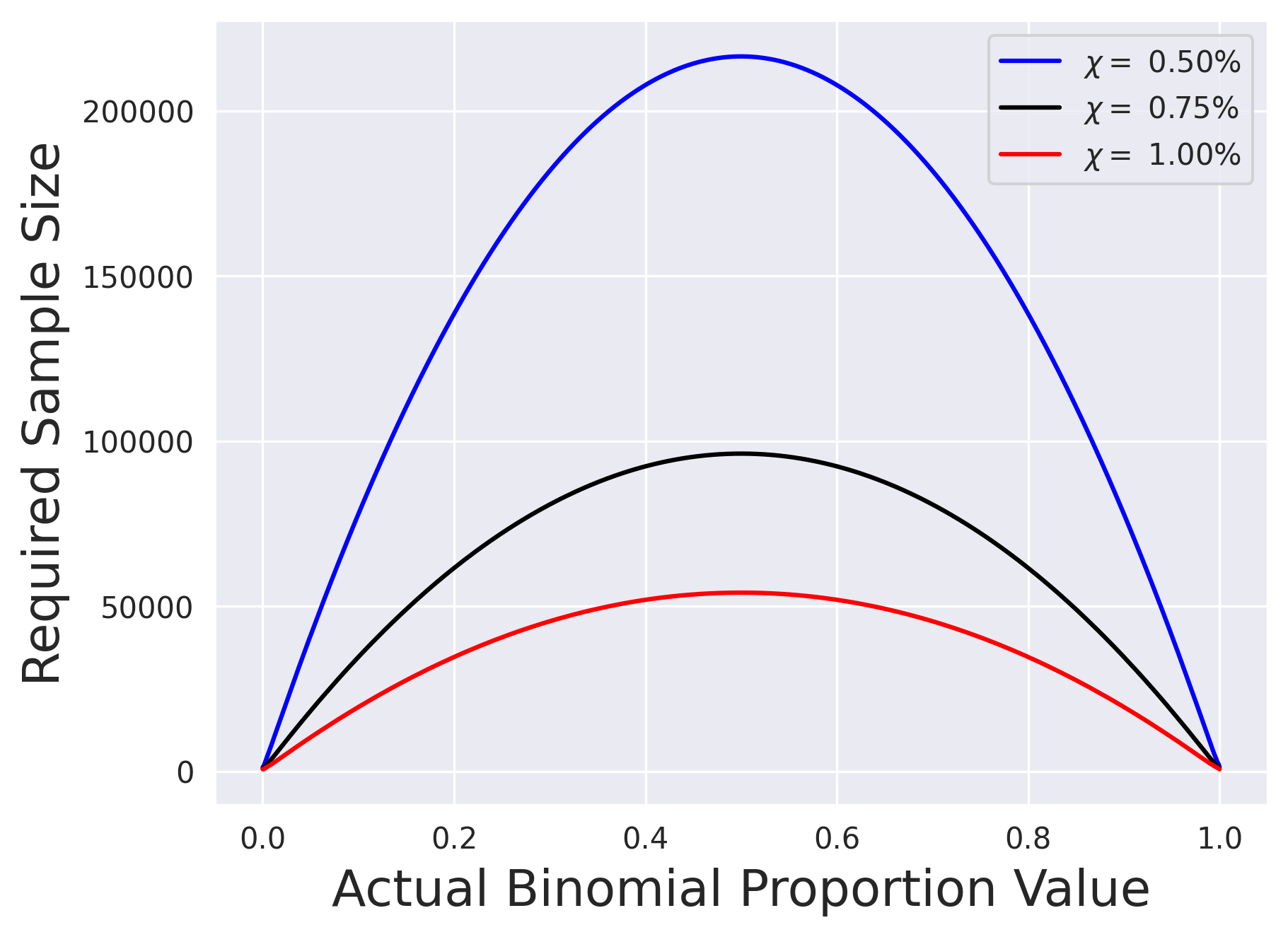}
 \caption{Agresti-Coull method}
 \label{fig:additionSampleSizeappAgesti}
\end{subfigure}
\begin{subfigure}[H]{0.45\textwidth}
 \includegraphics[width=\textwidth]{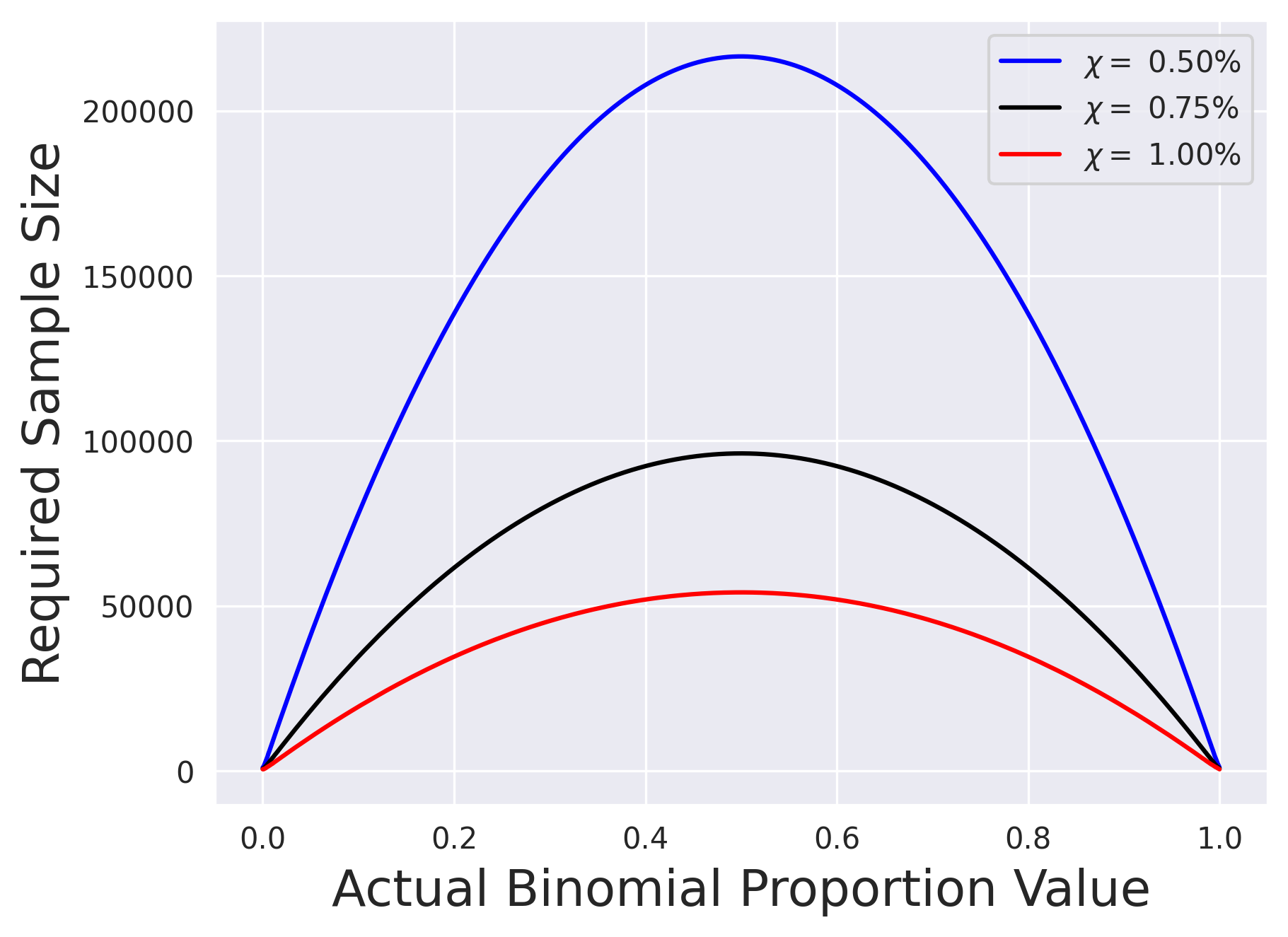}
 \caption{Wilson method}
 \label{fig:additionSampleSizeappWilson}
\end{subfigure}
\hfill
\caption{The number of samples for the Agresti-Coull and Wilson method to achieve a target error $\chi$ with confidence $(1 - \alpha)$ where $\alpha = 0.01$. The plots show that the number of required samples for different methods peaks at 0.5 and decreases towards the boundaries.}
\label{fig:additionSampleSizeapp}
\end{figure} 

\subsection{Theorems}
\label{sec:proofs}

\irs* 

\begin{proof}
If $f(x+\epsilon) = c_A$ and  $f^p(x+\epsilon) = f(x+\epsilon)$ then $f^p(x+\epsilon) = c_A$. \\
Thus, if $f^p(x+\epsilon) \neq c_A$ then $f(x+\epsilon) \neq c_A$ or $f^p(x+\epsilon) \neq f(x+\epsilon)$. \\
Using union bound, 
\[
\prob_\epsilon(f^p(x+\epsilon) \neq c_A) \leq \prob_\epsilon(f(x+\epsilon) \neq c_A) + \prob_\epsilon(f(x+\epsilon) \neq f^p(x+\epsilon))
\]
\[
(1 - \prob_\epsilon(f^p(x+\epsilon) = c_A)) \leq (1-\prob_\epsilon(f(x+\epsilon) = c_A)) + \prob_\epsilon(f(x+\epsilon) \neq f^p(x+\epsilon))
\]
\[
\prob_\epsilon(f(x+\epsilon) = c_A) \leq \prob_\epsilon(f^p(x+\epsilon) = c_A) + \prob_\epsilon(f(x+\epsilon) \neq f^p(x+\epsilon))
\]
\[
\palb - \eqlb \leq \prob_\epsilon(f^p(x+\epsilon) = c_A) 
\]
Similarly, if $f(x+\epsilon) \neq c$ then $f^p(x+\epsilon) \neq c$ or $f^p(x+\epsilon) \neq f(x+\epsilon)$.\\
Hence, using union bound, 
\[
\prob_\epsilon(f(x+\epsilon) \neq c) \leq \prob_\epsilon(f^p(x+\epsilon) \neq c) + \prob_\epsilon(f(x+\epsilon) \neq f^p(x+\epsilon))
 \]
\[
(1 - \prob_\epsilon(f(x+\epsilon) = c)) \leq (1-\prob_\epsilon(f^p(x+\epsilon) = c)) + \prob_\epsilon(f(x+\epsilon) \neq f^p(x+\epsilon))
\]
\[
\prob_\epsilon(f^p(x+\epsilon) = c) \leq \prob_\epsilon(f(x+\epsilon) = c) + \prob_\epsilon(f(x+\epsilon) \neq f^p(x+\epsilon))
\]
\[
\max_{c \neq c_A} \prob_\epsilon(f^p(x+\epsilon) = c) \leq \max_{c \neq c_A} \prob_\epsilon(f(x+\epsilon) = c) + \eqlb
\]
\[
\max_{c \neq c_A} \prob_\epsilon(f^p(x+\epsilon) = c) \leq \pbub + \eqlb
\]
Hence, using Theorem~\ref{thm:rs}, 
$g^p$ satisfies $g^p(x+\delta) = c_A$ for all $\delta$ satisying $\|\delta\|_2 \leq \frac{\sigma}{2} (\Phi^{-1}(\palb-\eqlb) - \Phi^{-1}(\pbub+\eqlb))$

\end{proof}

\simplify*

\begin{proof}
Since  $\palb-\eqlb \geq \frac{1}{2}$, $0 \leq \palb \leq 1$ and $\eqlb \geq 0$, we get $0 \leq \palb-\eqlb \leq 1$

And since $1-\palb \geq \pbub$, we get $\pbub+\eqlb \leq \frac{1}{2}$, and thus, $0 \leq \pbub+\eqlb \leq 1$

Since $\Phi^{-1}(1-t) = -\Phi^{-1}(t) $
\[
    \Phi^{-1}(\pbub+\eqlb) = -\Phi^{-1}(1-(\pbub+\eqlb))
\]
\[
    = -\Phi^{-1}((1-\pbub) -\eqlb)
\]
Since $1-\palb \geq \pbub$
\[
    \leq -\Phi^{-1}(\palb -\eqlb)
\]
Hence, 
\[
    \Phi^{-1}(\palb -\eqlb) \leq -\Phi^{-1}(\pbub+\eqlb)
\]
\[
    \frac{\sigma}{2} \Phi^{-1}(\palb -\eqlb) \leq -\frac{\sigma}{2} \Phi^{-1}(\pbub+\eqlb)
\]
Adding $\frac{\sigma}{2} \Phi^{-1}(\palb -\eqlb)$ on both sides,
\[
    \sigma \Phi^{-1}(\palb -\eqlb) \leq \frac{\sigma}{2} (\Phi^{-1}(\palb -\eqlb) -\Phi^{-1}(\pbub+\eqlb))
\]
\end{proof}

\estimate*

\begin{proof}
Suppose $f$ and $f^p$ are classifiers such that for a fixed  $x \in \mathbb{R}^m, \prob_\epsilon (f(x+\epsilon) = c_A) \geq \palb$ and $\prob_\epsilon(f(x+\epsilon) = f^p(x+\epsilon)) > 1-\eqlb$. Note that this is true by the definition of $\underline{p_A}$, and is a separate $\underline{p_A}$ for each $x$. The statement is not true for all $x$ with single $\underline{p_A}$
\\
Let $E_1$ denote the event that $\prob_\epsilon (f(x+\epsilon) = c_A) \geq \palb$.
\\
Let $E_2$ denote the event that $\prob_\epsilon(f(x+\epsilon) = f^p(x+\epsilon)) > 1-\eqlb$.
\\
By Theorem~\ref{thm:irs},
\[
\prob_\epsilon(f(x+\epsilon) = c_A) \leq \prob_\epsilon(f^p(x+\epsilon) = c_A) + \prob_\epsilon(f(x+\epsilon) \neq f^p(x+\epsilon))
\] 
\[
\palb - \eqlb \leq \prob_\epsilon(f^p(x+\epsilon) = c_A) 
\]
\\
Let $E_3$ denote the event that $\palb - \eqlb \leq \prob_\epsilon(f^p(x+\epsilon) = c_A)$
\\
Since, $E_1$ and $E_2$ imply $E_3$ i.e.  $E_1 \cap E_2 \subseteq E_3$,
\[
\prob(E_3) \geq \prob(E_1 \cap E_2)
\]
By the additive rule of probability,
\[
\prob(E_1 \cap E_2) = \prob(E_1) + \prob(E_2) - \prob(E_1 \cup E_2)
\]
\[
\prob(E_3) \geq (1 - \alpha) + (1 - \alpha_\zeta) - 1
\]
\[
\prob(E_3) \geq 1 - (\alpha + \alpha_\zeta) 
\]
Hence, for classifier $f^p$, $\prob_\epsilon (f^p(x+\epsilon) = c_A) \geq \palb-\eqlb$ has confidence at least $1-(\alpha+\alpha_\zeta)$
\end{proof}

\clearpage
\newpage
\subsection{Evaluation Networks}
\label{sec:eval_nets}

Table~\ref{tab:std_acc} and Table~\ref{tab:smooth_acc} respectively present the standard top-1 accuracy of the original and approximated base classifiers and smoothed classifiers respectively. 

\begin{table}[ht]
    \small
    \centering
    \caption{Standard top-1 accuracy for (non-smoothed) networks for combinations of approximations and $\sigma$'s.}
    \begin{tabular}{@{}lll rrrrrrr@{}}
        \toprule
        Dataset & Architecture & $\sigma$ & original & \multicolumn{3}{c}{Quantization} & \multicolumn{3}{c}{Prune} \\
        & & & & fp16 & bf16 & int8 & 5\% & 10\% & 20\% \\
        \hline
        & & 0.25 & 67.2 & 67.2 & 66.8 & 67.2 & 67.4 & 66.6 & 66.6 \\
        CIFAR10 & ResNet-20  & 0.5 & 56.8 & 56.8 & 57.2 & 56.8 & 57 & 57.4 & 58 \\
        & & 1.0 & 47.2 & 47.2 & 47.0 & 47.2 & 47 & 46.2 & 45.2 \\
        \hline
        & & 0.25 & 69.0 & 69.0 & 69.4 & 69.0 & 69.2 & 68.8 & 68.2 \\
        CIFAR10 & ResNet-110 & 0.5 & 59.4 & 59.4 & 59.4 & 59.4 & 59.6 & 59 & 58.8 \\
        & & 1.0 & 47.0 & 47.0 & 46.8 & 46.8 & 46.8 & 47.2 & 47 \\
        \hline
        & & 0.5 & 24.2 & 24.2 & 24.4 & 24.2 & 24.2 & 24.4 & 24.2 \\
        ImageNet & ResNet-50 & 1.0 & 9.6 & 9.6 & 9.6 & 9.6 & 9.6 & 9.6 & 9.6 \\
        & & 2.0 & 6.4 & 6.4 & 6.4 & 6.4 & 6.4 & 6.4 & 6.4 \\
        \bottomrule
    \end{tabular}
    \label{tab:std_acc}
\end{table}


\begin{table}[ht]
    \small
    \centering
    \caption{standard top-1 accuracy for smoothed networks for combinations of approximations and $\sigma$'s.}
    \begin{tabular}{@{}lll rrrrrrr@{}}
        \toprule
        Dataset & Architecture & $\sigma$ & original & \multicolumn{3}{c}{Quantization} & \multicolumn{3}{c}{Prune} \\
        & & & & fp16 & bf16 & int8 & 5\% & 10\% & 20\% \\
        \hline
        & & 0.25 & 77.2 & 77 & 77.2 & 77.2 & 77.6 & 77.2 & 77.6 \\
        CIFAR10 & ResNet-20  & 0.5 & 67.8 & 67.4 & 67.8 & 67.8 & 67.8 & 67.4 & 67.8 \\
        & & 1.0 & 55.6 & 55.6 & 55.6 & 55.8 & 54.8 & 55.2 & 55.6 \\
        \hline
        & & 0.25 & 76.6 & 76.4 & 76.2 & 76.4 & 76.2 & 76.2 & 76.4 \\
        CIFAR10 & ResNet-110 & 0.5 & 66.2 & 67 & 68 & 66.4 & 67 & 66.8 & 66.6 \\
        & & 1.0 & 55.6 & 55.4 & 56.2 & 56.2 & 55 & 55 & 54.8 \\
        \hline
        & & 0.5 & 63.8 & 63.4 & 63.2 & 63.4 & 63.6 & 64 & 63 \\
        ImageNet & ResNet-50 & 1.0 & 48.8 & 48.6 & 48.8 & 48.6 & 48.8 & 48.6 & 47.8 \\
        & & 2.0 & 34.4 & 34.2 & 33.8 & 34.2 & 34.2 & 34.4 & 33.4 \\
        \bottomrule
    \end{tabular}
    \label{tab:smooth_acc}
\end{table}


\newpage


\subsection{$\eqlb$ evaluation}
\label{sec:zeta_app}
We compute $\eqlb$ value as the binomial confidence upper limit using \cite{10.2307/2331986} method with $n=1000$ samples.
For an experiment that adds Gaussian corruptions with $\sigma$ to the input, we use the network that is trained with Gaussian data augmentation with variance $\sigma^2$. 

\begin{wraptable}{r}{.4\textwidth} 
    \hspace{0.2in}
    \vspace{-0.2in}
    \tablesize
    \caption{Average \Tool{} speedup for combinations of $n$, $\sigma$'s, and quantizations for ResNet-20 on CIFAR10.}
    \begin{tabular}{@{}lll rrr@{}}
        \toprule
        $n$ & $\sigma$ & \multicolumn{3}{c}{Quantization} \\
        & & fp16 & bf16 & int8 \\
        \hline
        & 0.25 & 1.37x & 1.29x & 1.3x \\
        $10^4$ & 0.5 & 1.79x & 1.7x & 1.77x \\
        & 1.0 & 2.85x & 2.41x & 2.65x \\
        \hline
        & 0.25 & 1.22x & 1.11x & 1.27x \\
        $10^5$ & 0.5 & 1.73x & 1.4x & 1.86x \\
        & 1.0 & 3.88x & 2.40x & 4.31x \\
        \hline
        & 0.25 & 1.12x & 0.93x & 1.15x \\
        $10^6$ & 0.5 & 1.97x & 1.04x & 2.25x \\
        & 1.0 & 4.58x & 1.25x & 5.85x \\
        
        \bottomrule
    \end{tabular}
    \label{tab:n_ablation}
\end{wraptable}

\begin{table}[t]
    \small
    \centering
    \caption{$\eqlb$ for approximate networks trained on different Gaussian augmentation $\sigma$'s.}
    \begin{tabular}{@{}lll rrrrrr@{}}
        \toprule
        Dataset & Architecture & $\sigma$ & \multicolumn{3}{c}{Quantization} & \multicolumn{3}{c}{Prune} \\
        & & & fp16 & bf16 & int8 & 5\% & 10\% & 20\% \\
        \hline
         & & 0.25 & 0.01 & 0.01 & 0.006 & 0.01 & 0.02 & 0.04 \\
        CIFAR10 & ResNet-20 & 0.5 & 0.006 & 0.008 & 0.01 & 0.01 & 0.02 & 0.03 \\
        & & 1.0 & 0.006 & 0.007 & 0.006 & 0.007 & 0.02 & 0.02 \\
        \hline
        & & 0.25 & 0.006 & 0.01 & 0.006 & 0.009 & 0.02 & 0.04\\
        CIFAR10 & ResNet-110  & 0.5 & 0.006 & 0.006 & 0.006 & 0.008 & 0.02 & 0.03 \\
        & & 1.0 & 0.006 & 0.008 & 0.009 & 0.007 & 0.01 & 0.02 \\
        \hline
        & & 0.5 & $0.006$ & 0.009 & $0.006$ & 0.01 & 0.02 & 0.09 \\
        ImageNet & ResNet-50 & 1.0 & $0.007$ & 0.01 & $0.006$ & 0.01 & 0.02 & 0.08 \\
        & & 2.0 & $0.006$ & 0.01 & $0.006$ & 0.007 & 0.02 & 0.07 \\
        \bottomrule
    \end{tabular}
    \label{tab:zeta_all}
\end{table}



\subsection{Sensitivity to changing $n$}
\label{app:n}

In Section~\ref{sec:exp_main}, to save time due to a large number of approximations and DNNs tested, we used $n=10^4$ samples for $g$'s certification. Here, we present the effect of certifying with a larger $n$ by comparing the ACR vs certification time on the \Tool{} and baseline approaches for ResNet-20 on CIFAR10. On average, for larger $n$, we demonstrate greater speedup for larger $\sigma$. For instance, for int8 quantization with $\sigma = 1.0$, the speedup for certifying with $n=10^6$ samples was $5.85$x as compared to certification with $n=10^4$ which yielded at $2.65$x speedup. However, for smaller $\sigma$, certification with a larger n results in less speedup. For $\sigma = 0.25$, we observe speedups from $1.29$x to $1.37$x for $n = 10^4$ whereas from $0.93$x to $1.15$x for $n = 10^6$.

\begin{wrapfigure}{r}{0.5\textwidth}
\vspace{-0.4in}
  \begin{center}
    \includegraphics[width=0.48\textwidth]{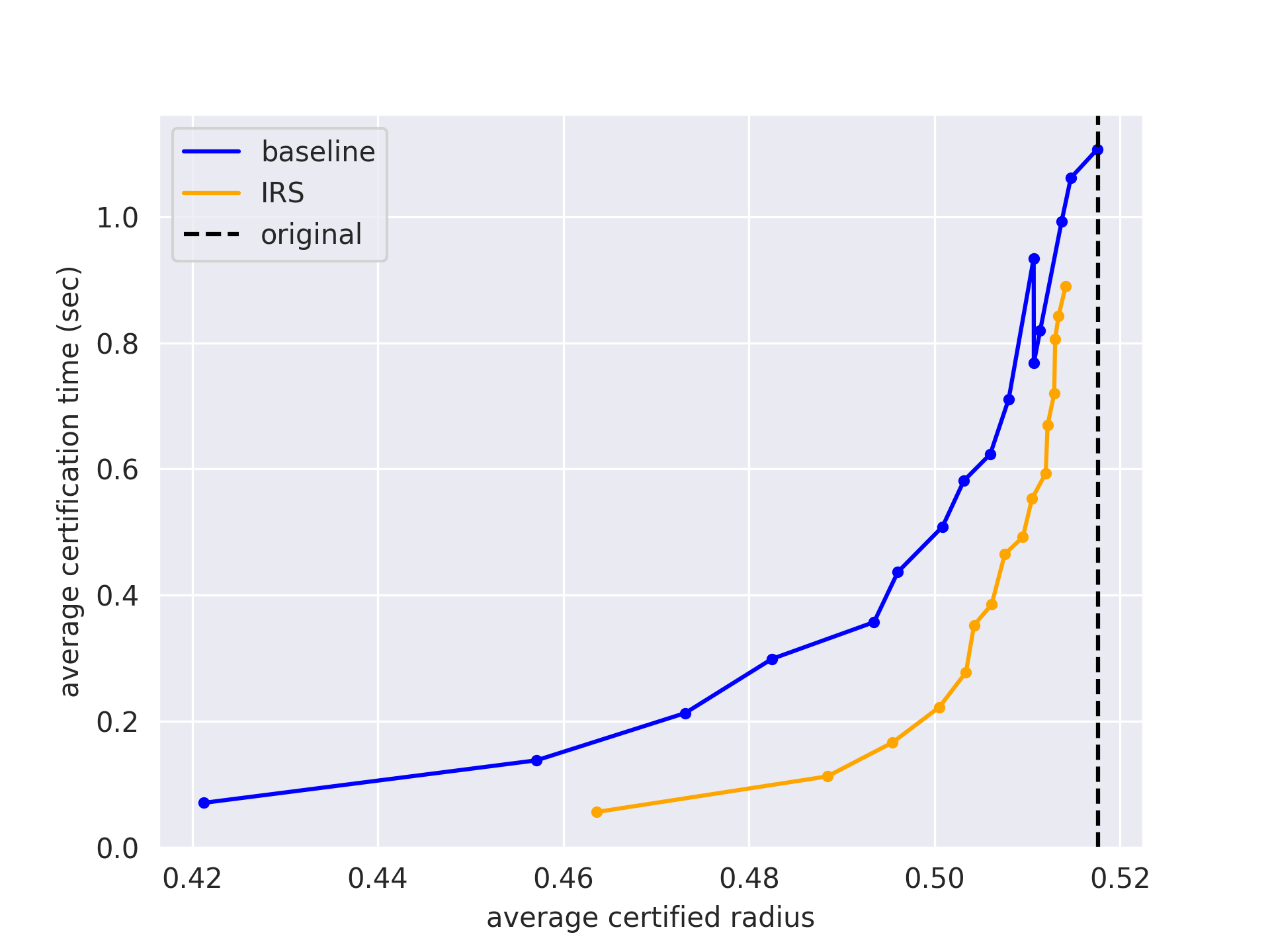}
  \end{center}
  \caption{CIFAR10 ResNet-20 with $\sigma=1$, for $n_p \in \{5\%, 10\% \dots 80\%\}$ of $n$}
  \label{fig:abl_np}
\end{wrapfigure}

\subsection{Evaluation with larger $n_p$}
\label{app:np}

The objective of \Tool{} is to certify the approximated DNN with few samples. Thus, we consider $n_p$ ranging from $1\%$ to $10\%$. Nevertheless, we check IRS effectiveness for larger $n_p$ values in this ablation study. 

Since, \Tool{} certifies radius $\sigma \Phi^{-1}(\palb-\eqlb)$ that is always smaller than original certified radius. When $n_p = n$, the baseline running from scratch should perform better than \Tool{}, as it will reach a certification radius close to $\sigma \Phi^{-1}(\palb)$. 

In this experiment, on CIFAR10 ResNet-20 with $\sigma=1$, we let $n_p \in \{5\%, 10\% \dots 80\%\}$ of $n$. Figure~\ref{fig:abl_np} shows the ACR vs mean time plot for the baseline and \Tool{}. We see that \Tool{} gives speedup for $n_p = 70\%$. For $n_p = 75\%$ and $n_p = 80\%$, we see that baseline ACR is higher and \Tool{} cannot achieve that ACR.  


\subsection{Effect of standard deviation $\sigma$ on \Tool{} speedup.} 
\label{app:sigma}

\begin{figure}[!htbp]
\centering\small
\begin{subfigure}[b]{0.48\textwidth}
 \includegraphics[width=.9\textwidth]{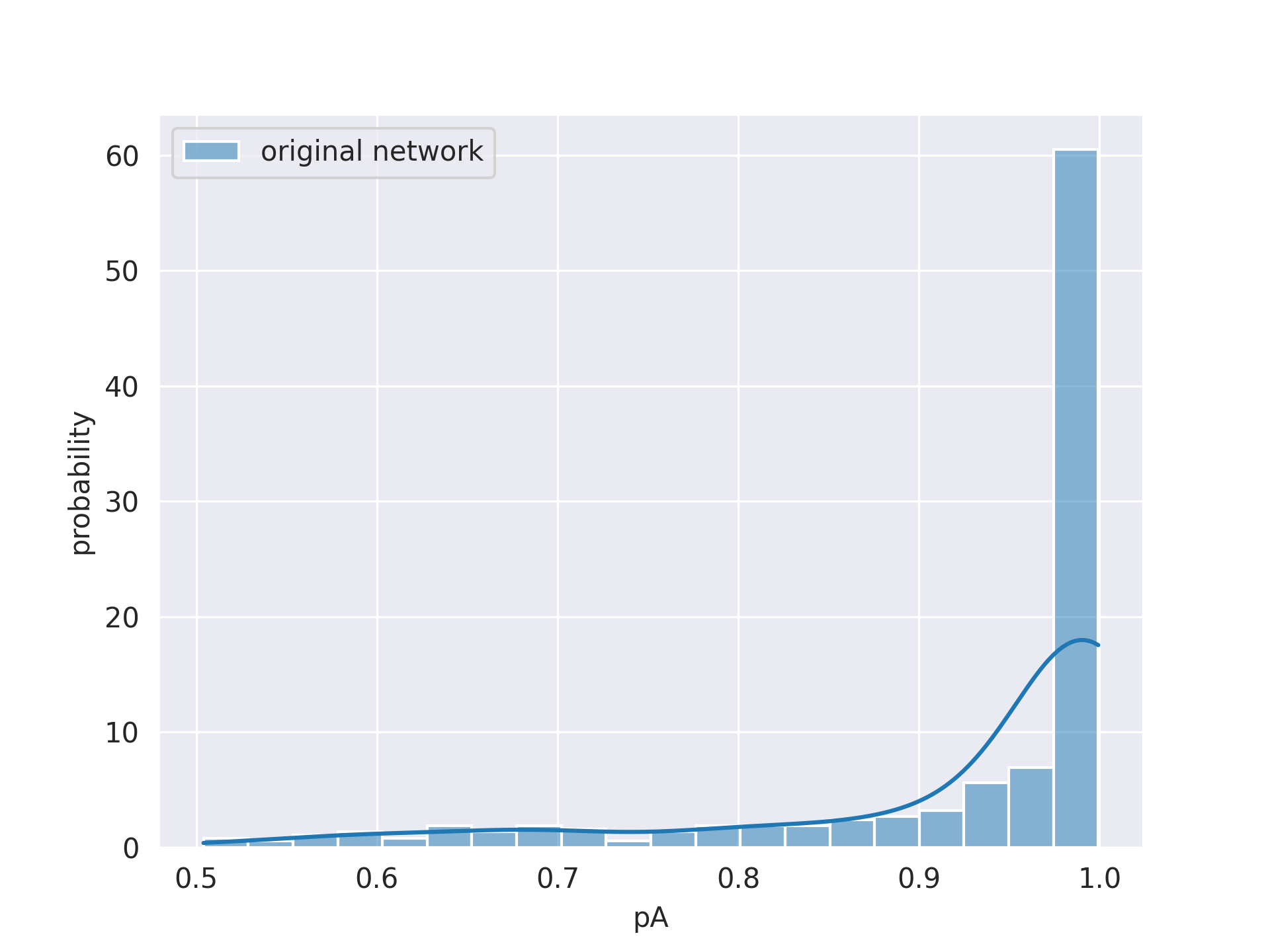}
 \vspace{-.1in}
 \caption{ResNet-110 on CIFAR-10 ($\sigma=0.25$)}
 \label{fig:padist_a}
\end{subfigure}
\hspace{2mm}
\begin{subfigure}[b]{0.48\textwidth}
 \includegraphics[width=.9\textwidth]{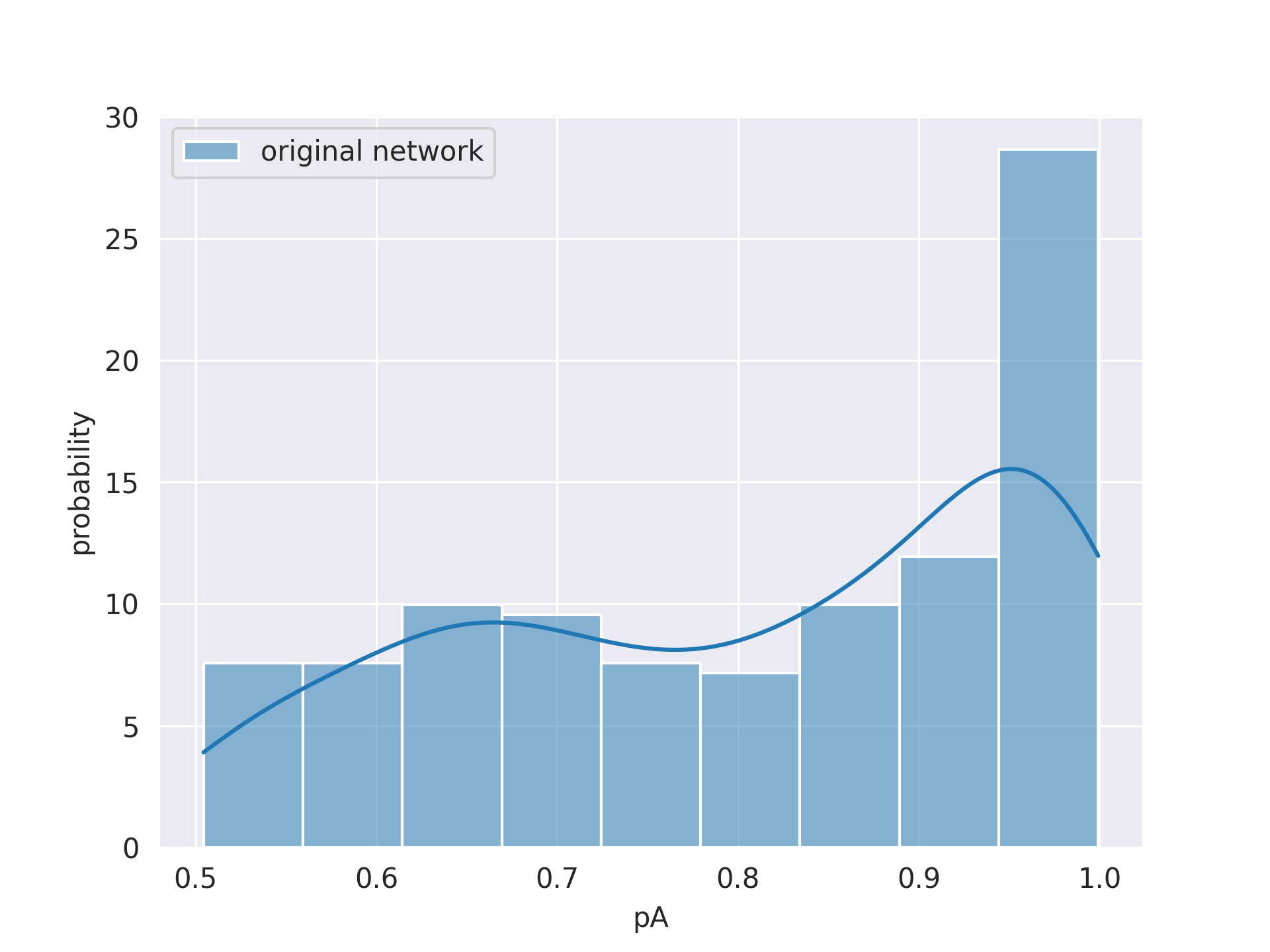}
 \vspace{-.1in}
 \caption{ResNet-110 on CIFAR-10 ($\sigma=1.0$)}
 \label{fig:padist_b}
\end{subfigure}
\caption{Distribution of $\palb$ values greater than 0.5 with different $\sigma$ for ResNet-110 on CIFAR-10.}
\label{fig:padist}
\begin{subfigure}[b]{0.48\textwidth}
 \includegraphics[width=.9\textwidth]{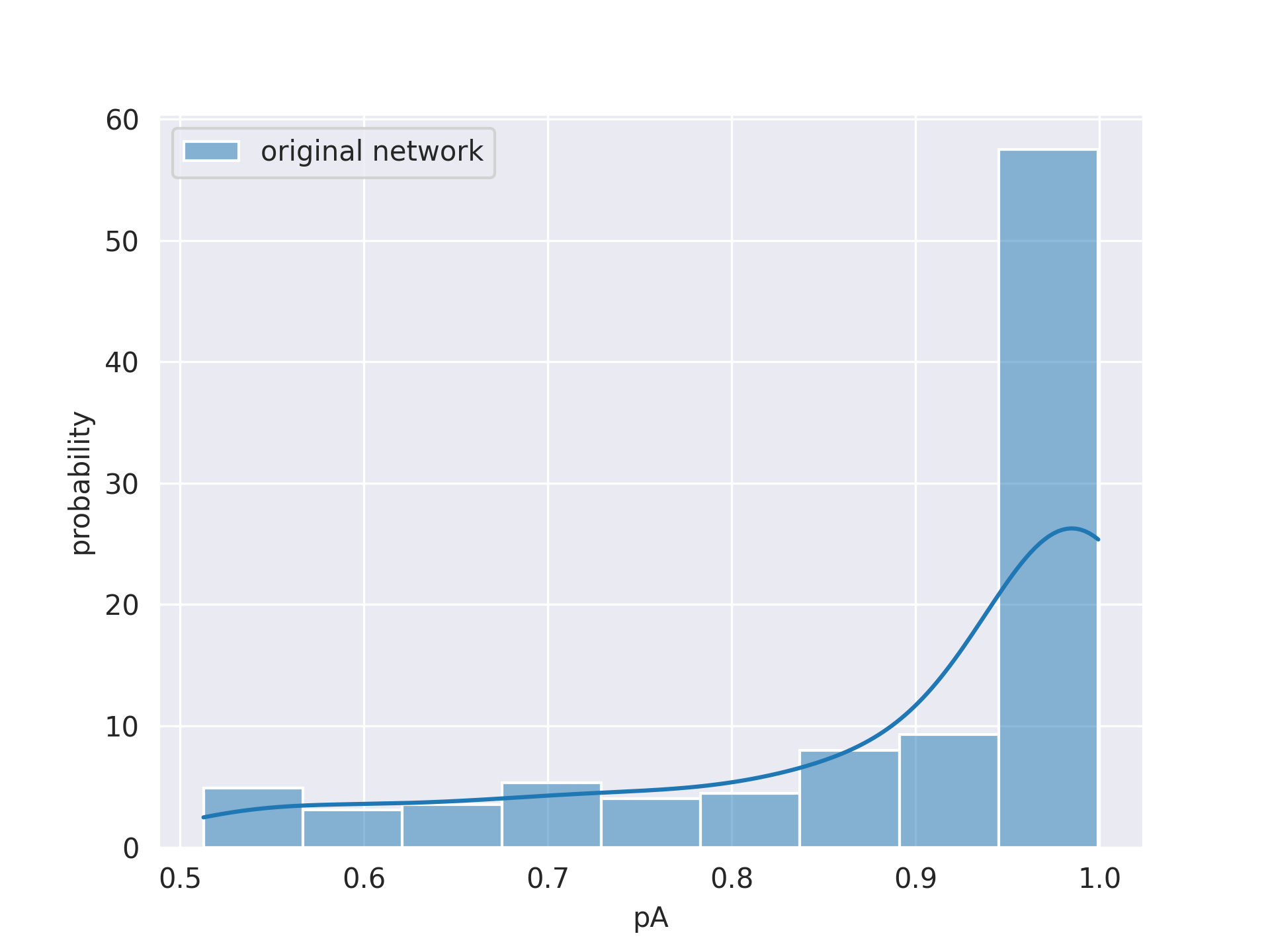}
 \vspace{-.1in}
 \caption{ResNet-50 on ImageNet ($\sigma=1.0$)}
 \label{fig:padist_a}
\end{subfigure}
\hspace{2mm}
\begin{subfigure}[b]{0.48\textwidth}
 \includegraphics[width=.9\textwidth]{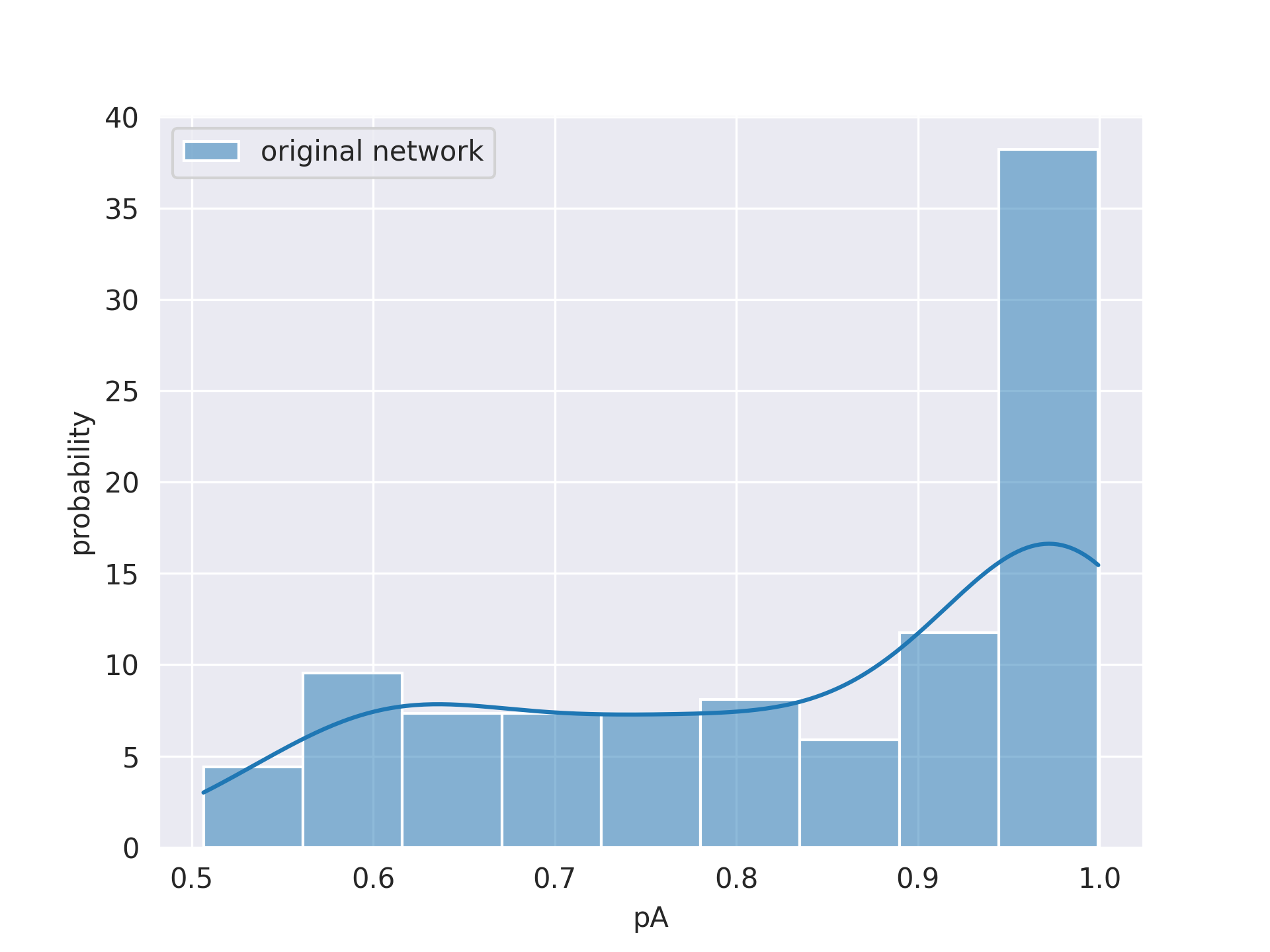}
 \vspace{-.1in}
 \caption{ResNet-50 on ImageNet ($\sigma=2.0$)}
 \label{fig:padist_b}
\end{subfigure}
\vspace{-.02in}
\caption{Distribution of $\palb$ values greater than 0.5 with different $\sigma$ for ResNet-50 on ImageNet.}
\label{fig:padist2}
\vspace{-.15in}
\end{figure} 

 Figure~\ref{fig:padist} and Figure~\ref{fig:padist2}, present the $\palb$ distribution between $0.5$ to $1$, for ResNet-110 on CIFAR-10 and ResNet-50 on ImageNet respectively. The x-axis represents the range of $\palb$ values and the y-axis represents their respective proportion. The results show that while certifying larger $\sigma$, on average the $\palb$ values are smaller. As shown in Figure~\ref{fig:padist_a}, for $\sigma = 0.25$, less than $35\%$ of $\palb$ values are smaller than $0.95$. On the other hand, in Figure~\ref{fig:padist_b}, when $\sigma = 1.0$, the distribution is less left-skewed as nearly $75\%$ of $\palb$ values are less than $0.95$. When the $\sigma$ is larger, the values of $\palb$ tend to be farther away from 1. Therefore, the estimation of $\palb$ is less precise in such cases, as observed in insight 2. As a result, non-incremental RS performs poorly compared to \Tool{} in these situations, leading to a greater speedup with \Tool{}.
%

\subsection{Threshold Parameter $\gamma$}
\label{app:gamma_pA}
Table~\ref{tab:gamma_pA} presents the proportion of cases for which $\palb > \gamma$ for the $\gamma$ chosen through hyperparameter search in Section~\ref{sec:ablation} for different $\sigma$ and networks. 

\begin{table}[ht]
    \small
    \centering
    \caption{Proportion of $\palb$ > $\gamma$ for different $\sigma$ and networks.}
    \begin{tabular}{@{}lll rrrrrr@{}}
        \toprule
        Dataset & Architecture & $\gamma$ & $\sigma$ & $\palb > \gamma$ \\
        \hline
         & & & 0.25 & 0.346 \\
        CIFAR10 & ResNet-20 & 0.99 & 0.5 & 0.162 \\
        & & & 1.0 & 0.034 \\
        \hline
        & & & 0.25 & 0.362 \\
        CIFAR10 & ResNet-110  & 0.99 & 0.5 & 0.146 \\
        & & & 1.0 & 0.034 \\
        \hline
        & & & 0.5 & 0.292 \\
        ImageNet & ResNet-50 & 0.995 & 1.0 & 0.14 \\
        & & & 2.0 & 0.04 \\
        \bottomrule
    \end{tabular}
    \label{tab:gamma_pA}
\end{table}


For CIFAR10 ResNet-20, we observe that $\palb > \gamma = 0.346$ when $\sigma = 0.25$ and $\palb > \gamma = 0.034$ when $\sigma = 1.0$. Additionally, for ImageNet ResNet-50, the results show $\palb > \gamma = 0.292$ when $\sigma = 0.50$ and $\palb > \gamma = 0.04$ when $\sigma = 2.0$. As shown in Section~\ref{sec:exp_main}, certifying larger $\sigma$ yields on average smaller $\palb$. Expectedly, we see a smaller proportion of $\palb > \gamma$ for larger $\sigma$ and vice versa. 

\subsection{Quantization Plots}
In this section, we present the ACR vs. time plots for all the quantization experiments. 
We use $n=10^4$ for samples for certification of $g$. For certifying $g^p$, we consider $n_p$ values from $\{1\%, \dots 10\%\}$ of $n$. 
Note that these smaller values of $n, n_p$ compared to Section~\ref{sec:exp_quant} allow us to perform a large number of experiments. 

\label{app:eval}
\begin{figure}[!htbp]
\centering
\hspace{3mm}
\begin{subfigure}[b]{0.3\textwidth}
 \includegraphics[width=\textwidth]{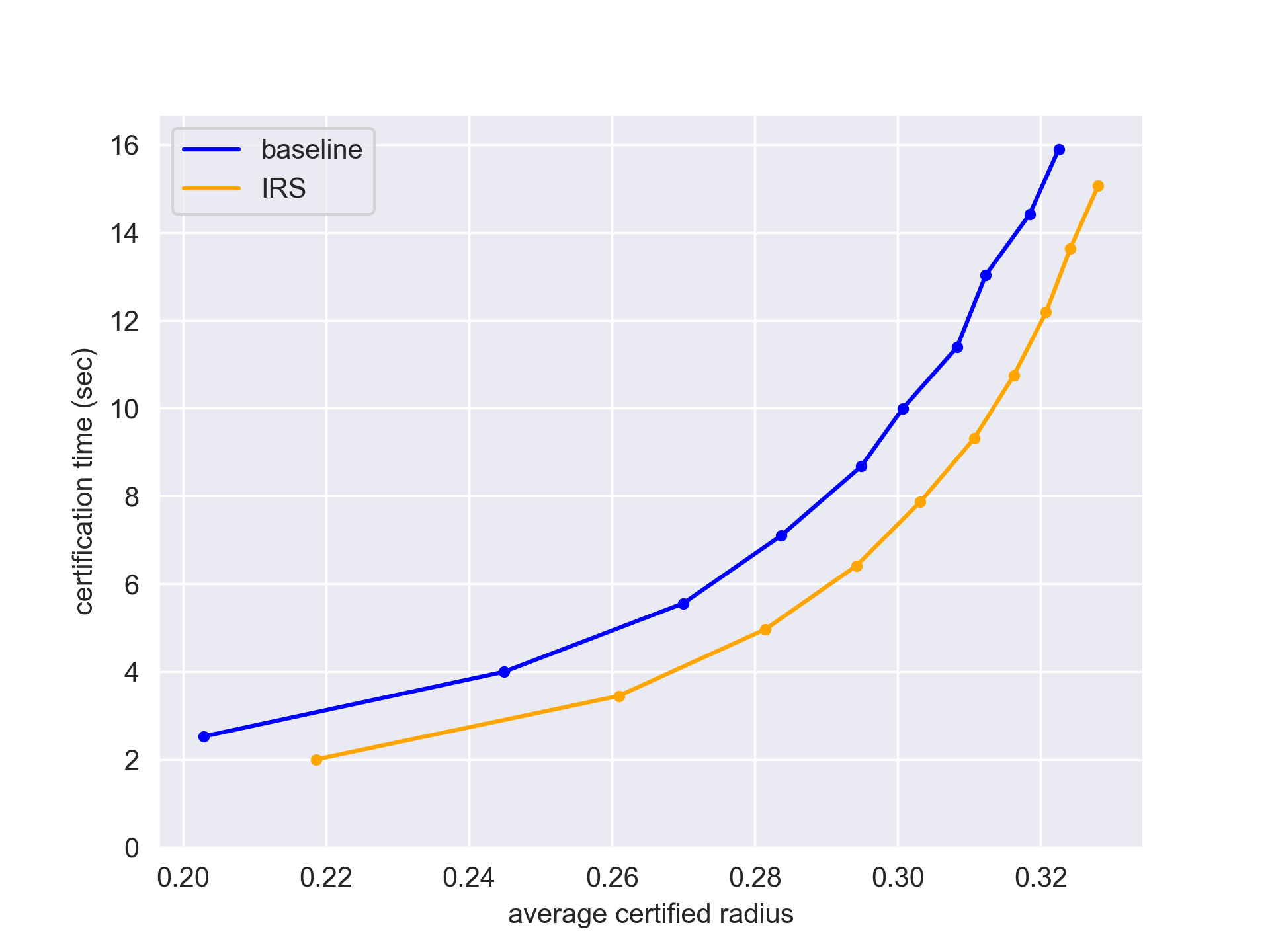}
 \caption{fp16}
\end{subfigure}
\hspace{3mm}
\begin{subfigure}[b]{0.3\textwidth}
 \includegraphics[width=\textwidth]{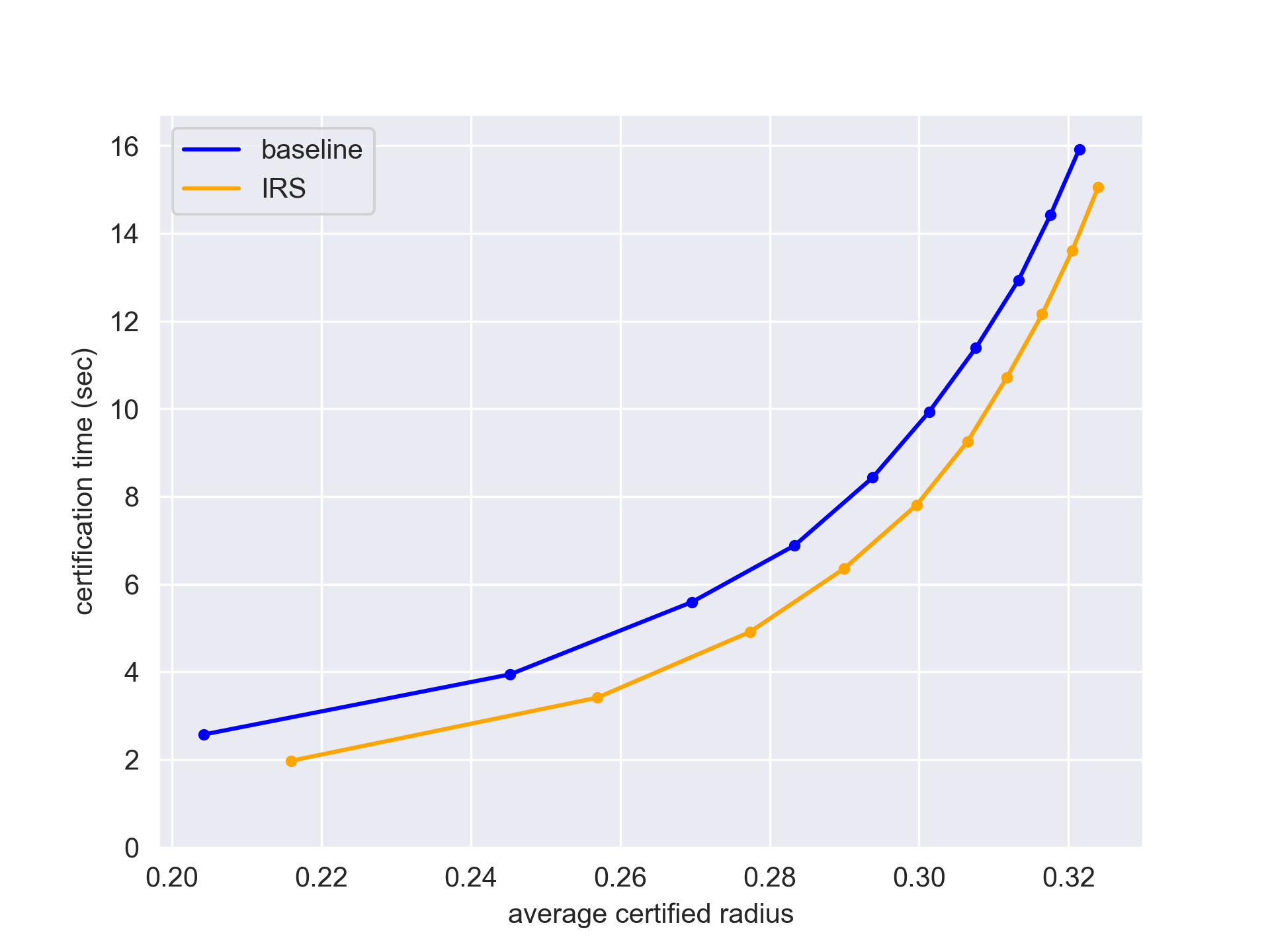}
 \caption{bf16}
\end{subfigure}
\hfill
\begin{subfigure}[b]{0.3\textwidth}
 \includegraphics[width=\textwidth]{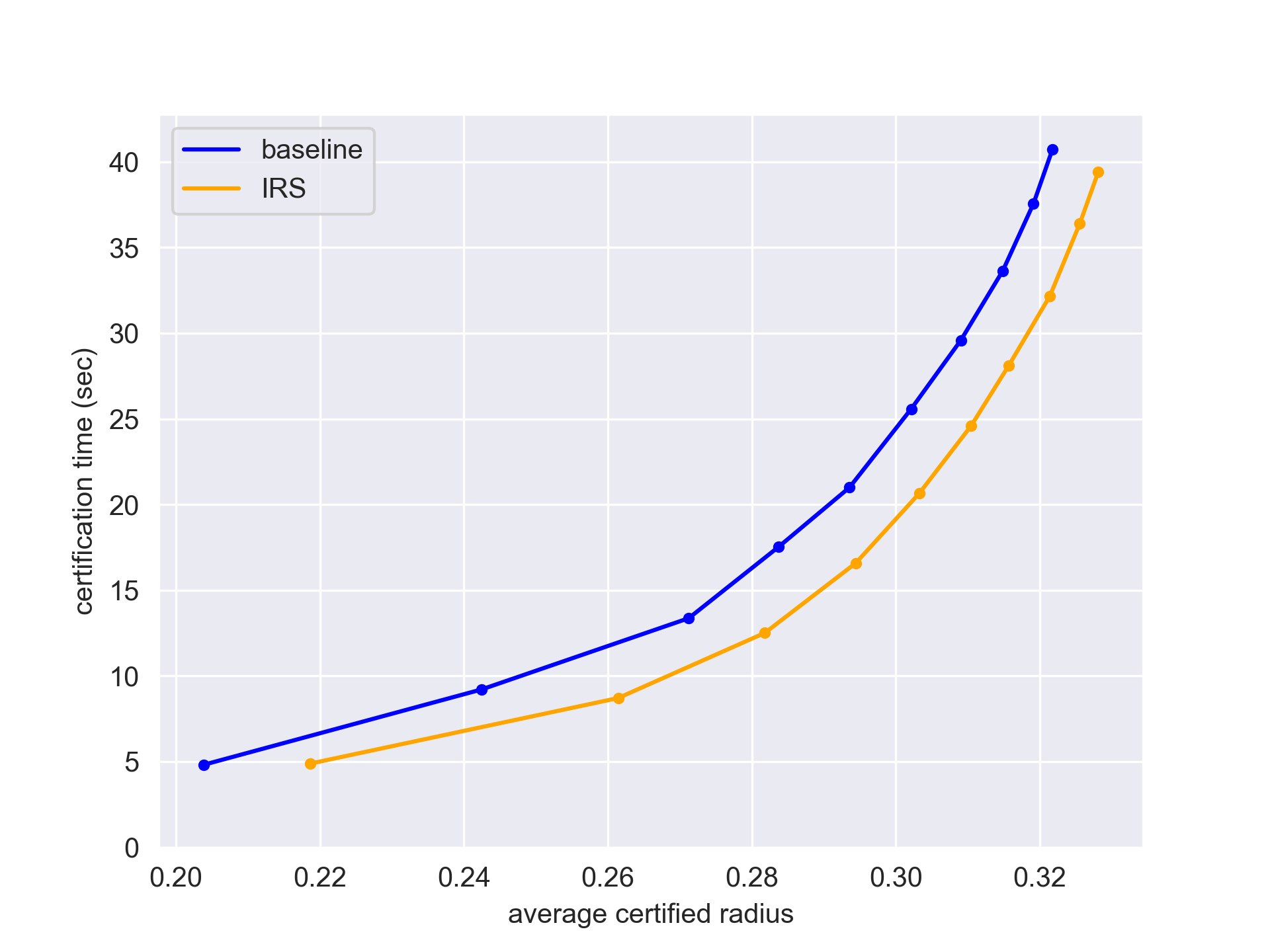}
 \caption{int8}
\end{subfigure}
\hfill
\caption{ResNet-20 on CIFAR10 with $\sigma=0.25$.}

\hspace{3mm}
\begin{subfigure}[b]{0.3\textwidth}
 \includegraphics[width=\textwidth]{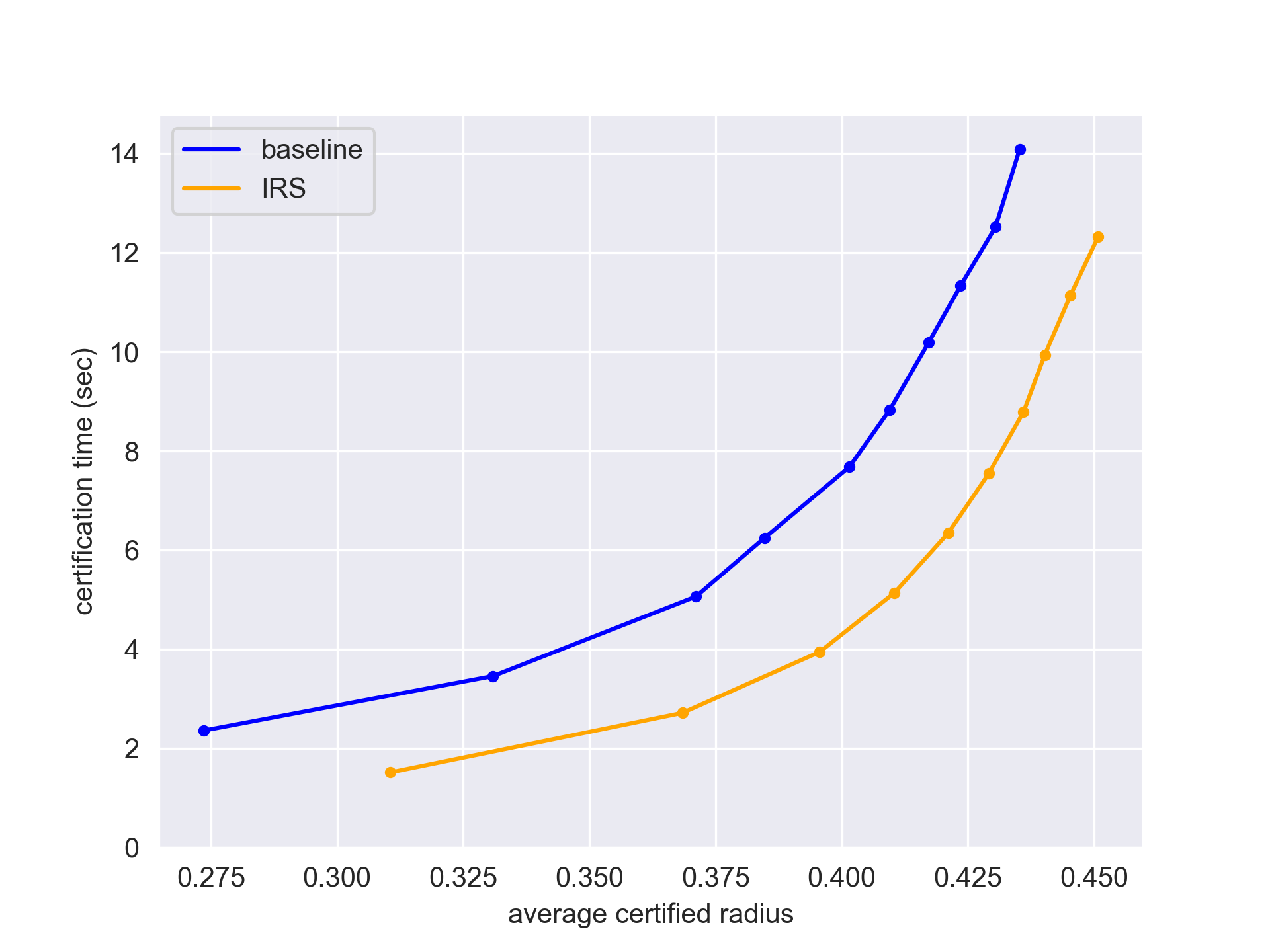}
 \caption{fp16}
\end{subfigure}
\hspace{3mm}
\begin{subfigure}[b]{0.3\textwidth}
 \includegraphics[width=\textwidth]{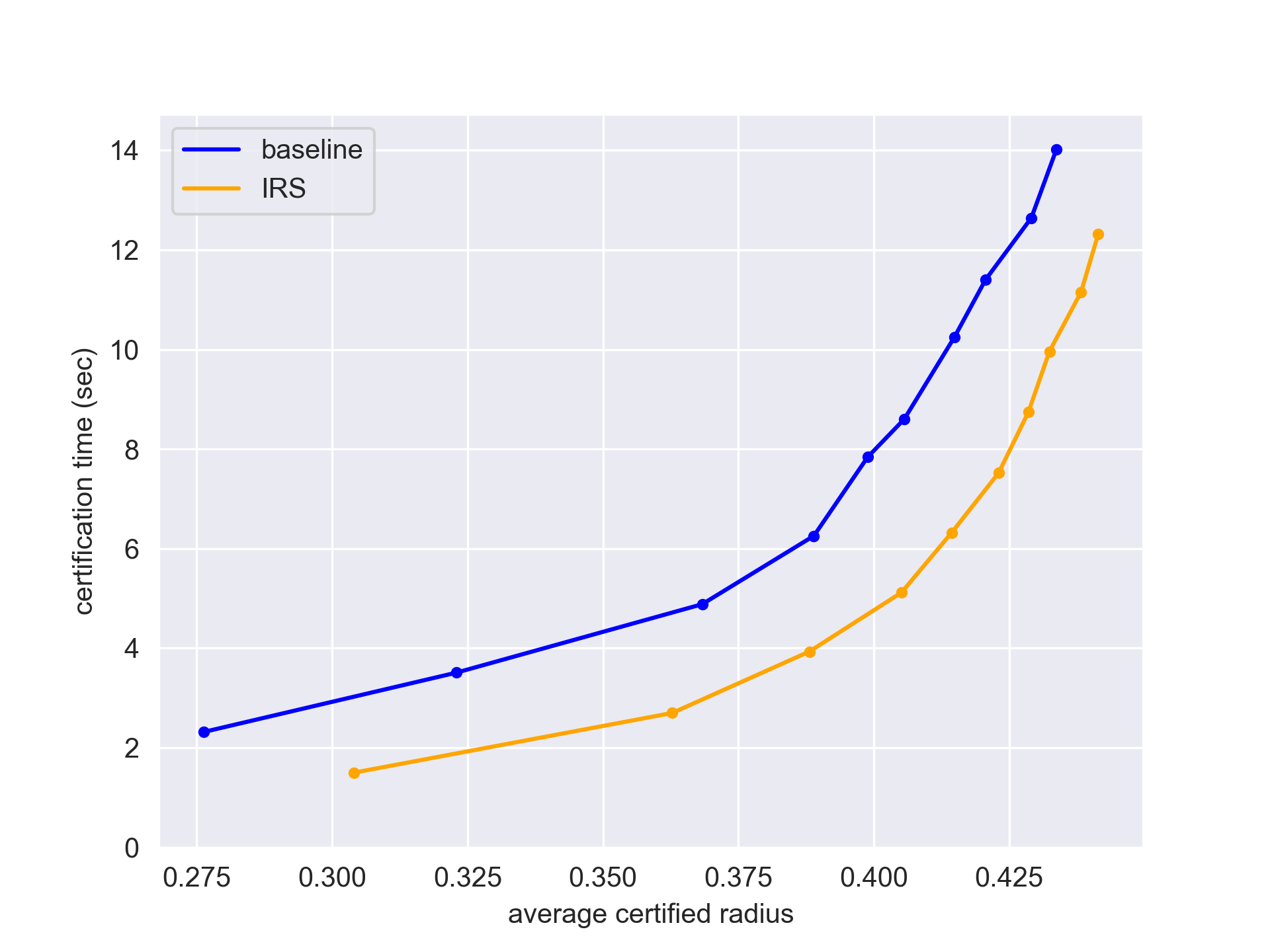}
 \caption{bf16}
\end{subfigure}
\hfill
\begin{subfigure}[b]{0.3\textwidth}
 \includegraphics[width=\textwidth]{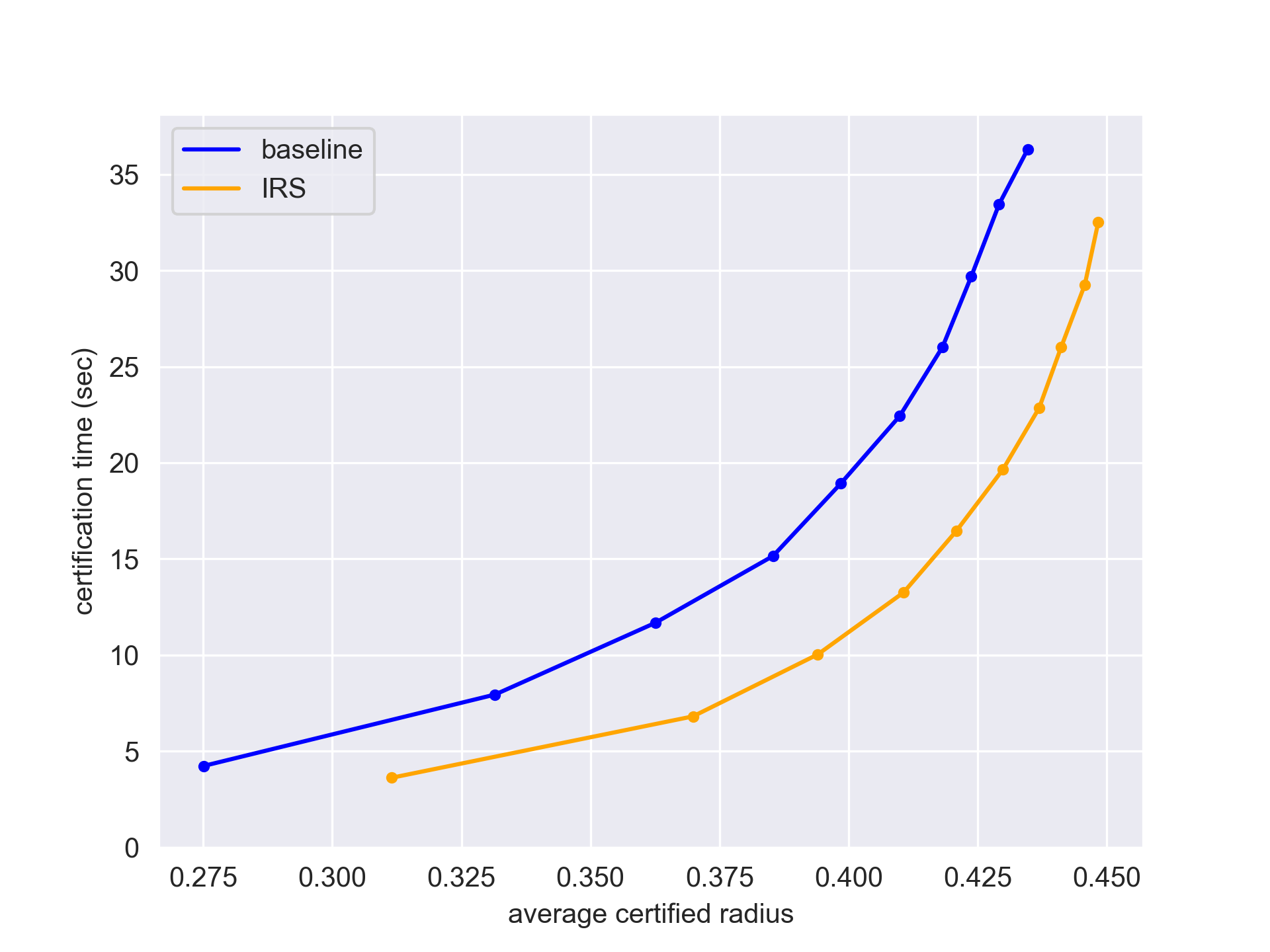}
 \caption{int8}
\end{subfigure}
\hfill
\caption{ResNet-20 on CIFAR10 with $\sigma=0.5$.}

\hspace{3mm}
\begin{subfigure}[b]{0.3\textwidth}
 \includegraphics[width=\textwidth]{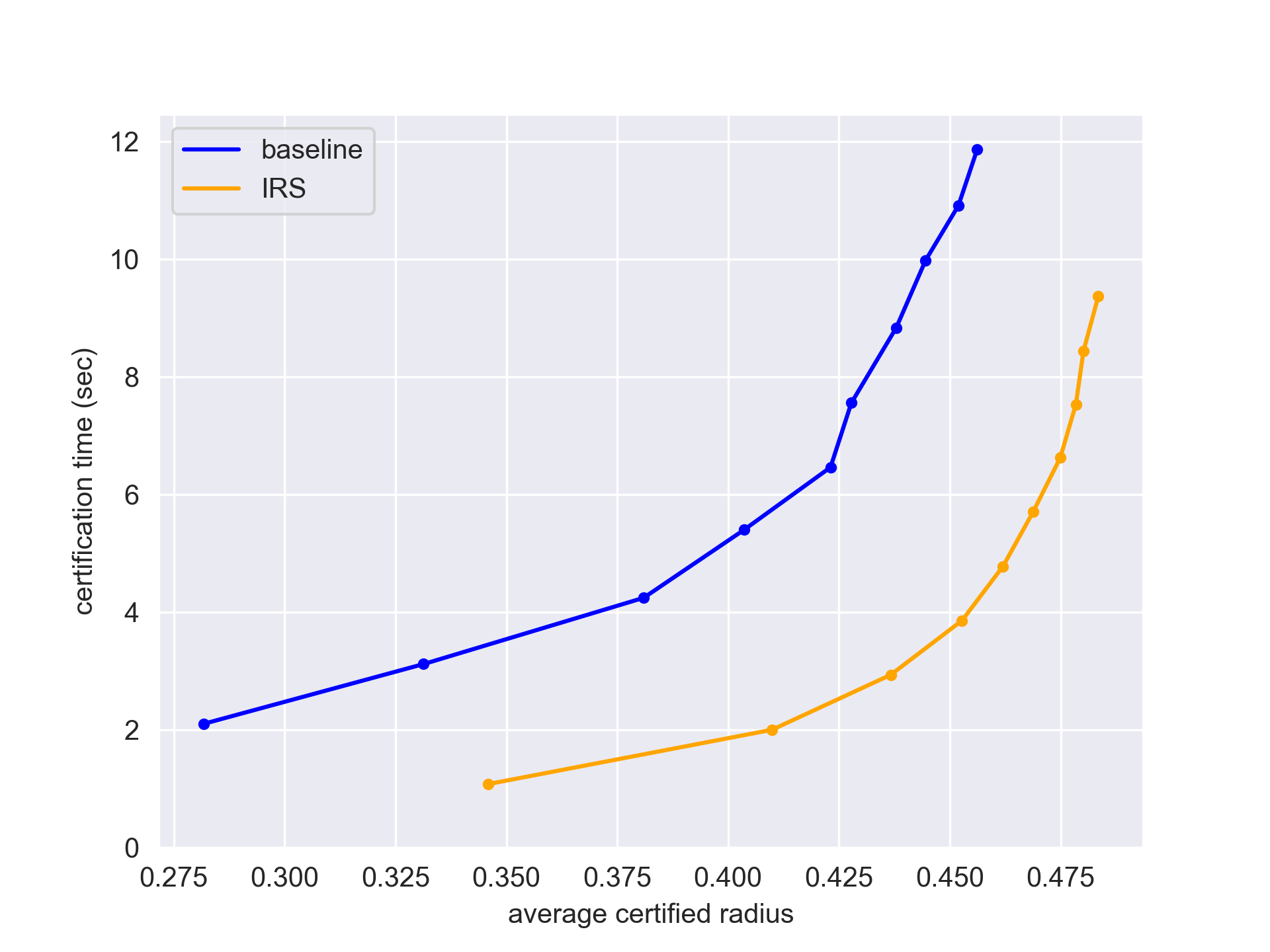}
 \caption{fp16}
\end{subfigure}
\hspace{3mm}
\begin{subfigure}[b]{0.3\textwidth}
 \includegraphics[width=\textwidth]{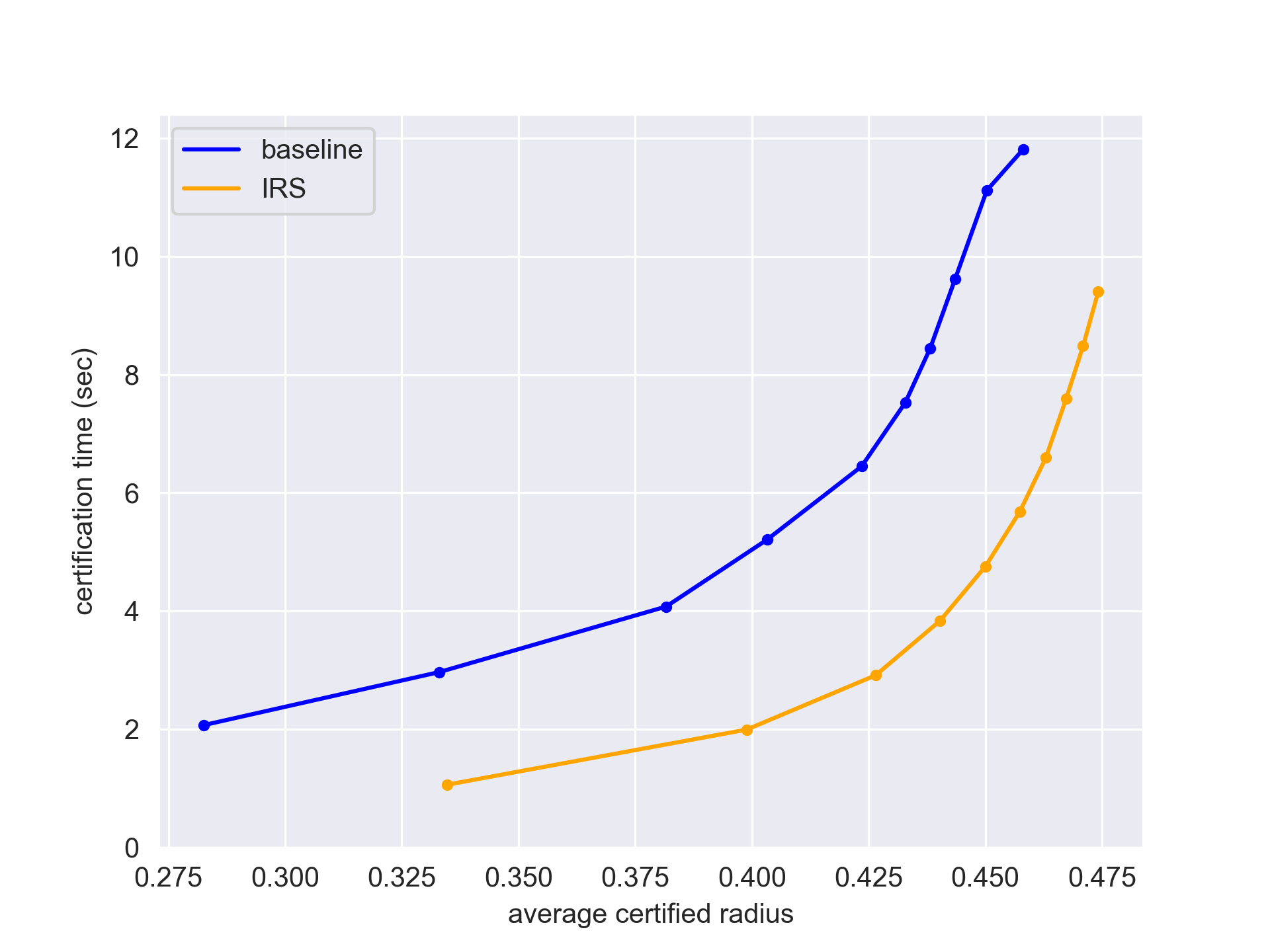}
 \caption{bf16}
\end{subfigure}
\hfill
\begin{subfigure}[b]{0.3\textwidth}
 \includegraphics[width=\textwidth]{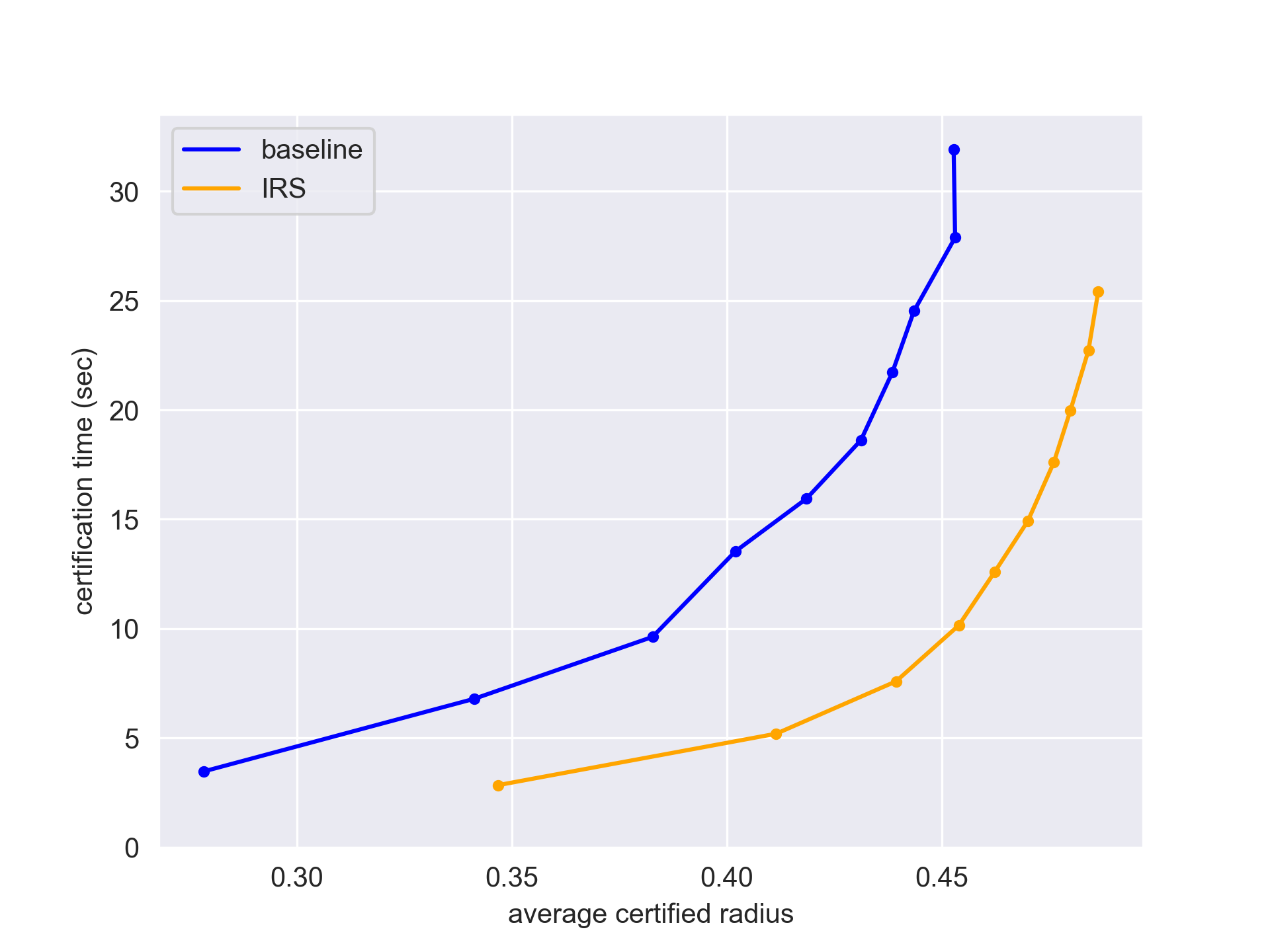}
 \caption{int8}
\end{subfigure}
\hfill
\caption{ResNet-20 on CIFAR10 with $\sigma=1.0$.}
\end{figure} 

\begin{figure}[!htbp]
\centering
\hspace{3mm}
\begin{subfigure}[b]{0.3\textwidth}
 \includegraphics[width=\textwidth]{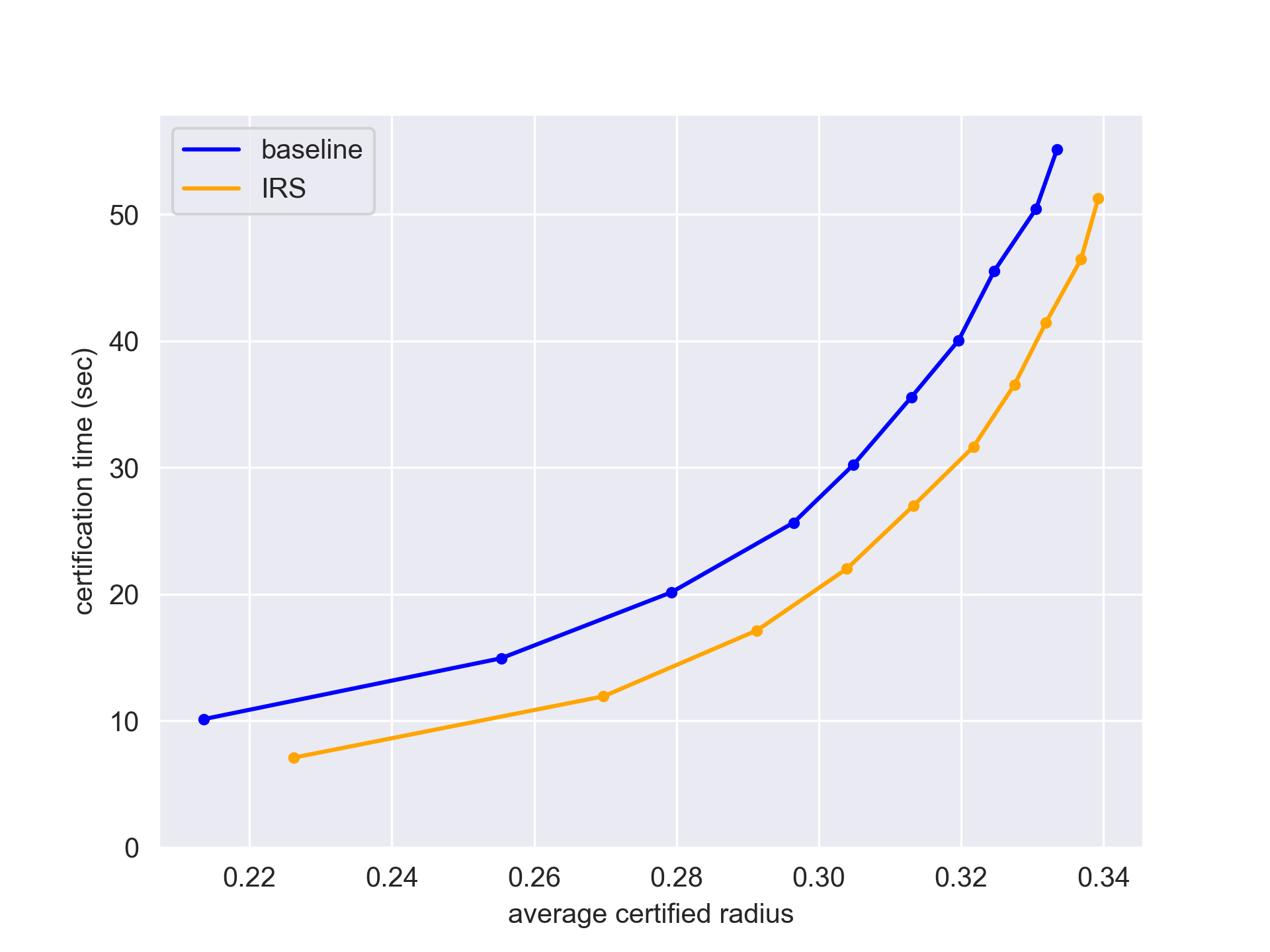}
 \caption{fp16}
\end{subfigure}
\hspace{3mm}
\begin{subfigure}[b]{0.3\textwidth}
 \includegraphics[width=\textwidth]{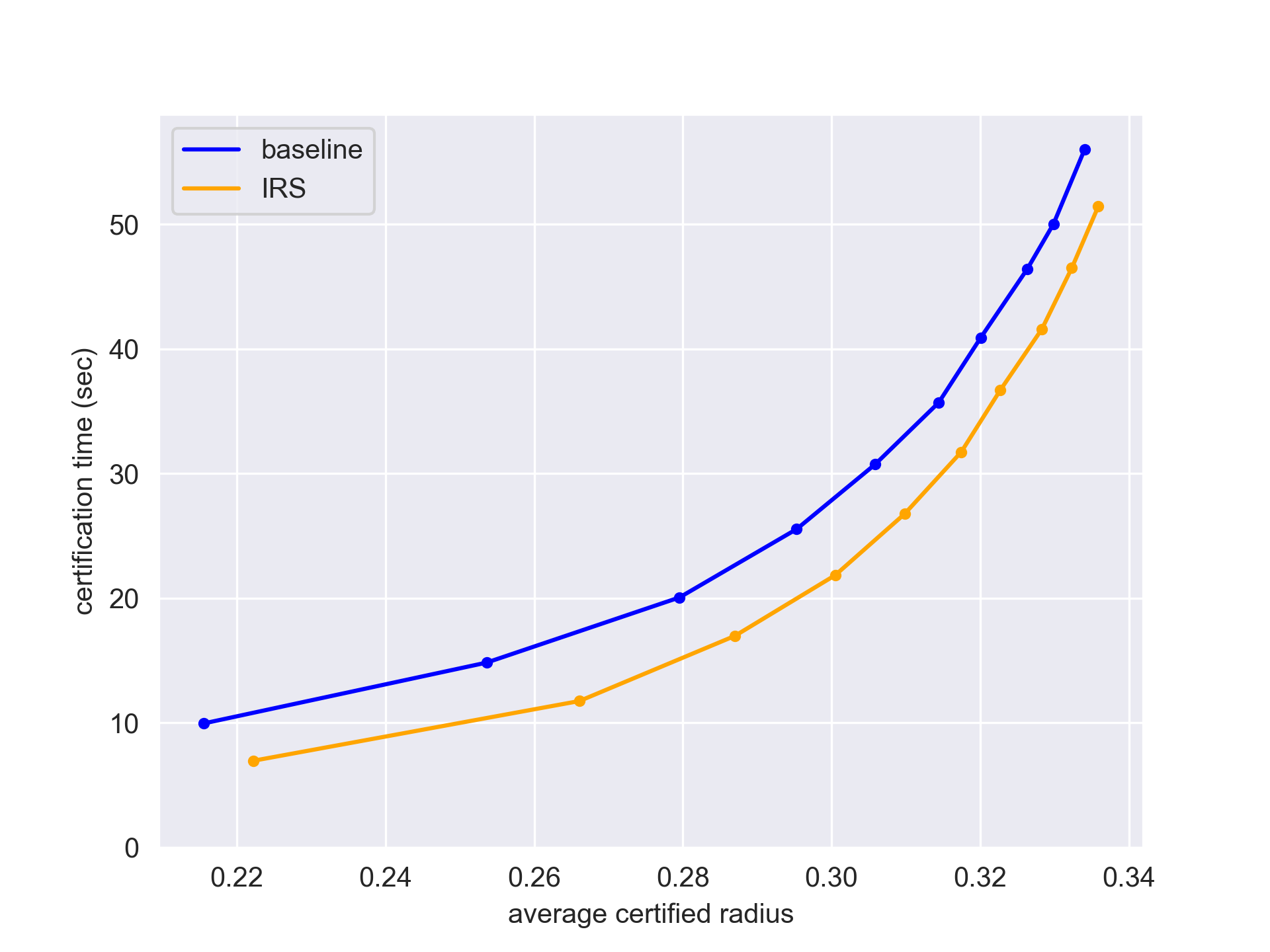}
 \caption{bf16}
 \label{fig:cifar_int8}
\end{subfigure}
\hfill
\begin{subfigure}[b]{0.3\textwidth}
 \includegraphics[width=\textwidth]{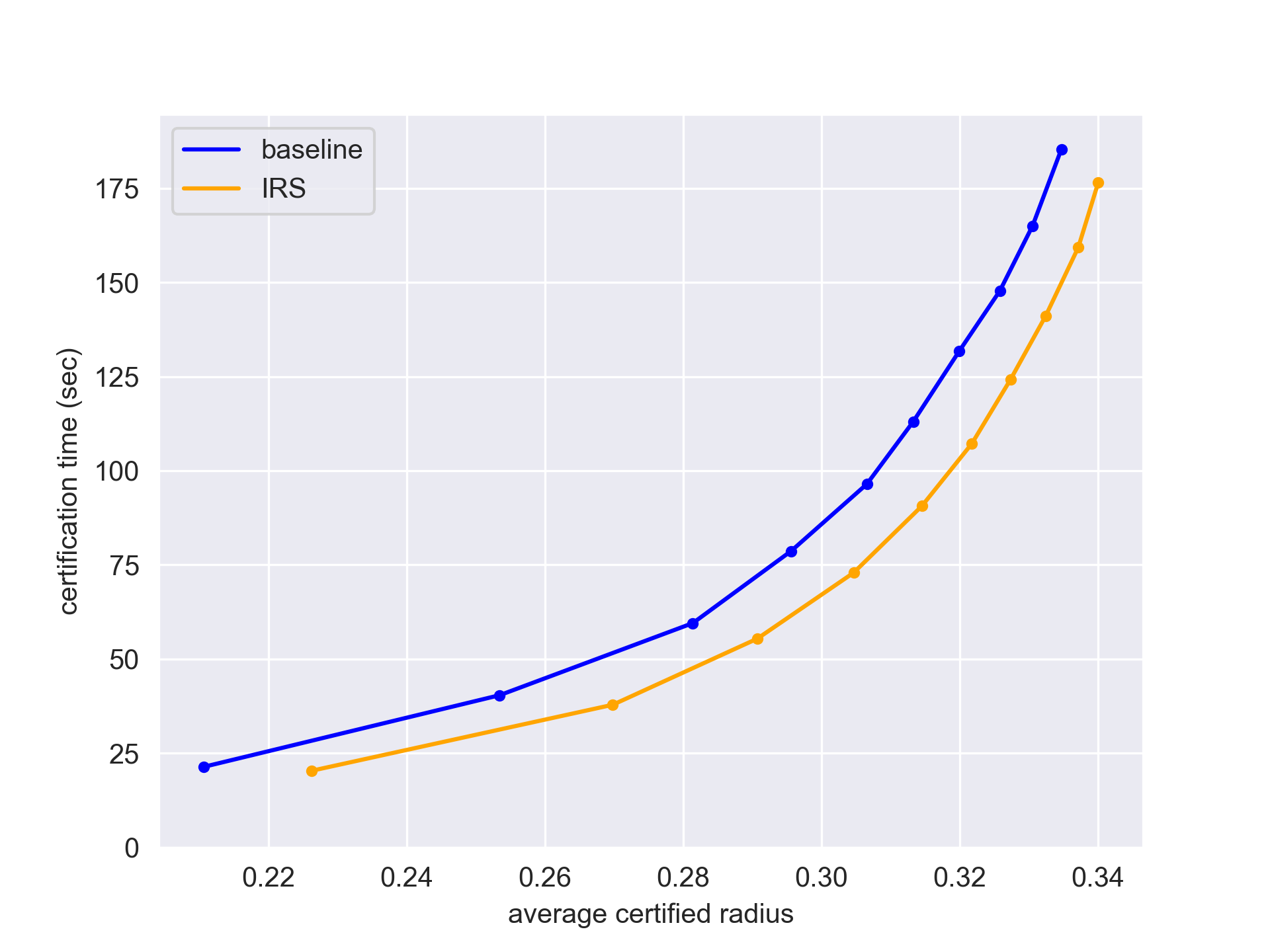}
 \caption{int8}
 \label{fig:cifar_int8}
\end{subfigure}
\hfill
\caption{ResNet-110 on CIFAR10 with $\sigma=0.25$.}

\hspace{3mm}
\begin{subfigure}[b]{0.3\textwidth}
 \includegraphics[width=\textwidth]{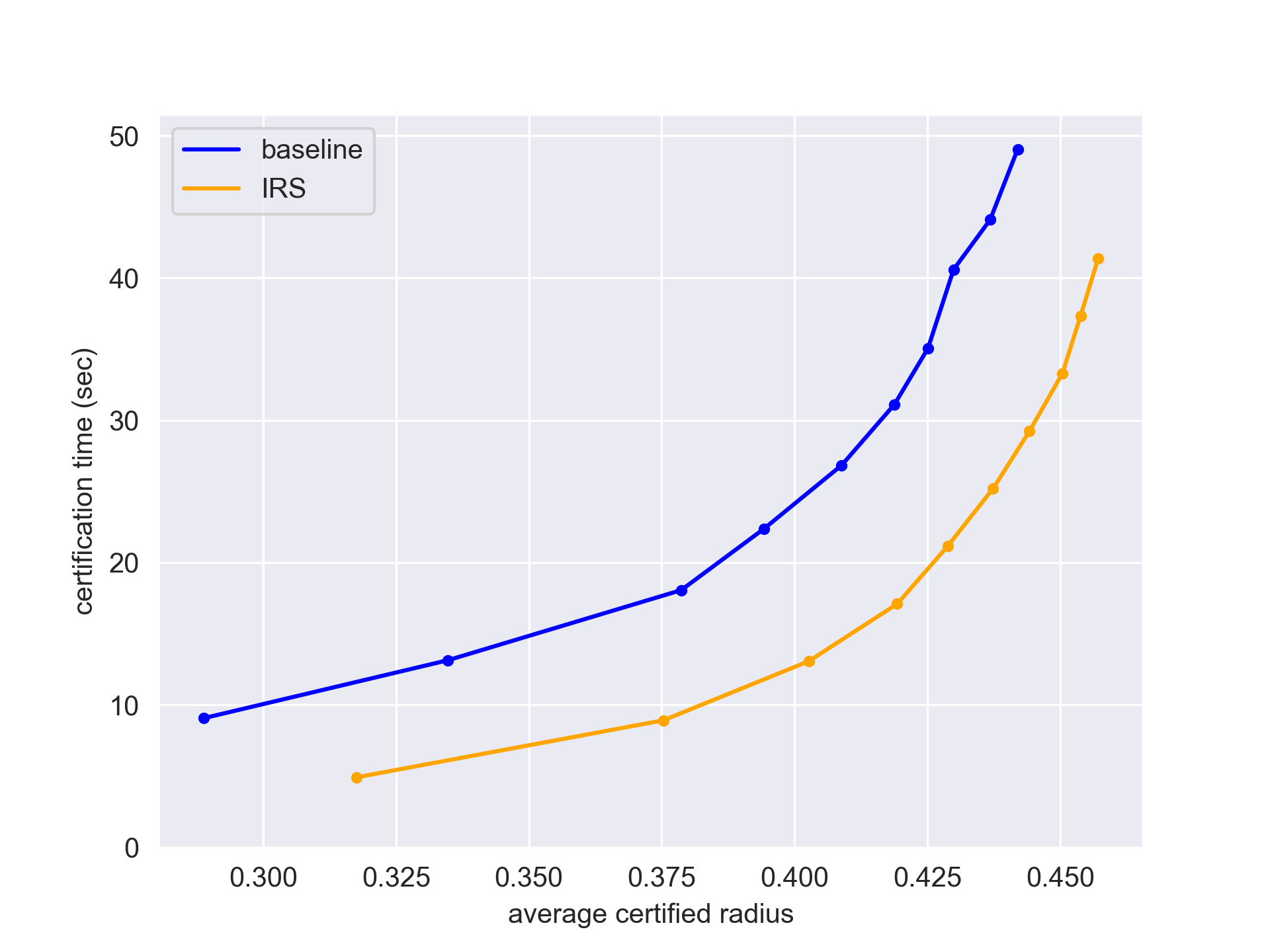}
 \caption{fp16}
 \label{fig:cifar_bf16}
\end{subfigure}
\hspace{3mm}
\begin{subfigure}[b]{0.3\textwidth}
 \includegraphics[width=\textwidth]{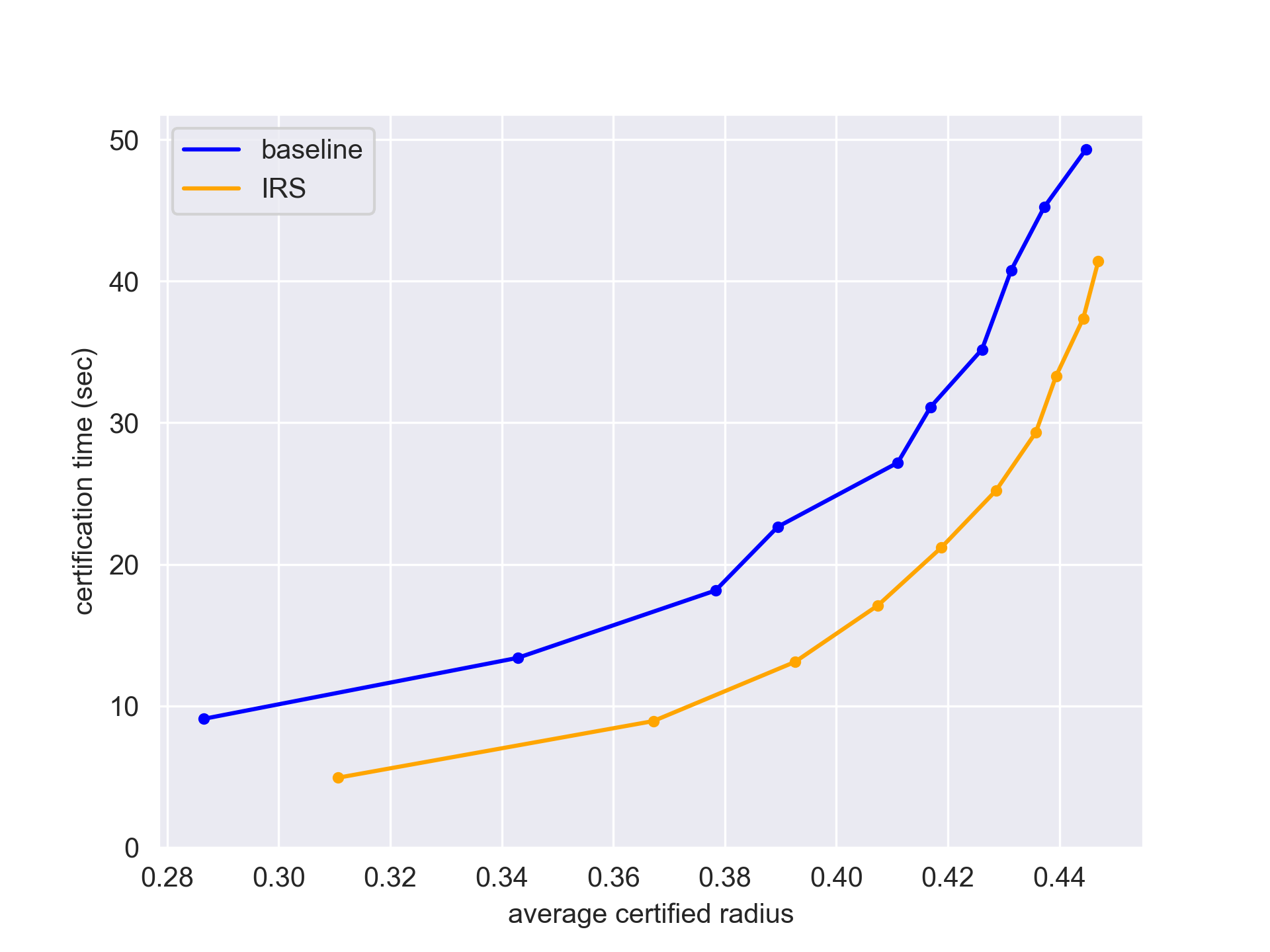}
 \caption{bf16}
 \label{fig:cifar_int8}
\end{subfigure}
\hfill
\begin{subfigure}[b]{0.3\textwidth}
 \includegraphics[width=\textwidth]{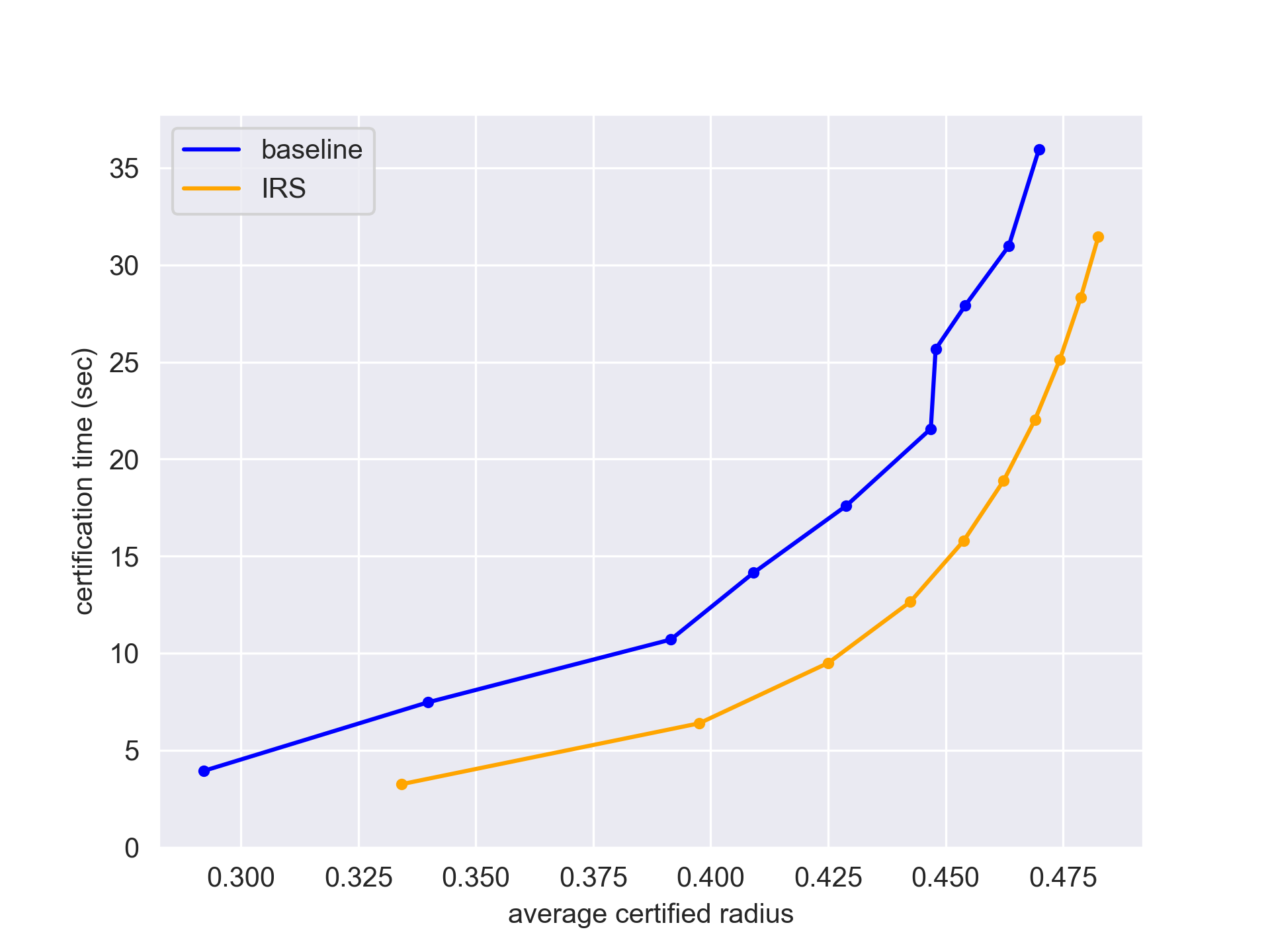}
 \caption{int8}
 \label{fig:cifar_int8}
\end{subfigure}
\hfill
\caption{ResNet-110 on CIFAR10 with $\sigma=0.5$.}

\hspace{3mm}
\begin{subfigure}[b]{0.3\textwidth}
 \includegraphics[width=\textwidth]{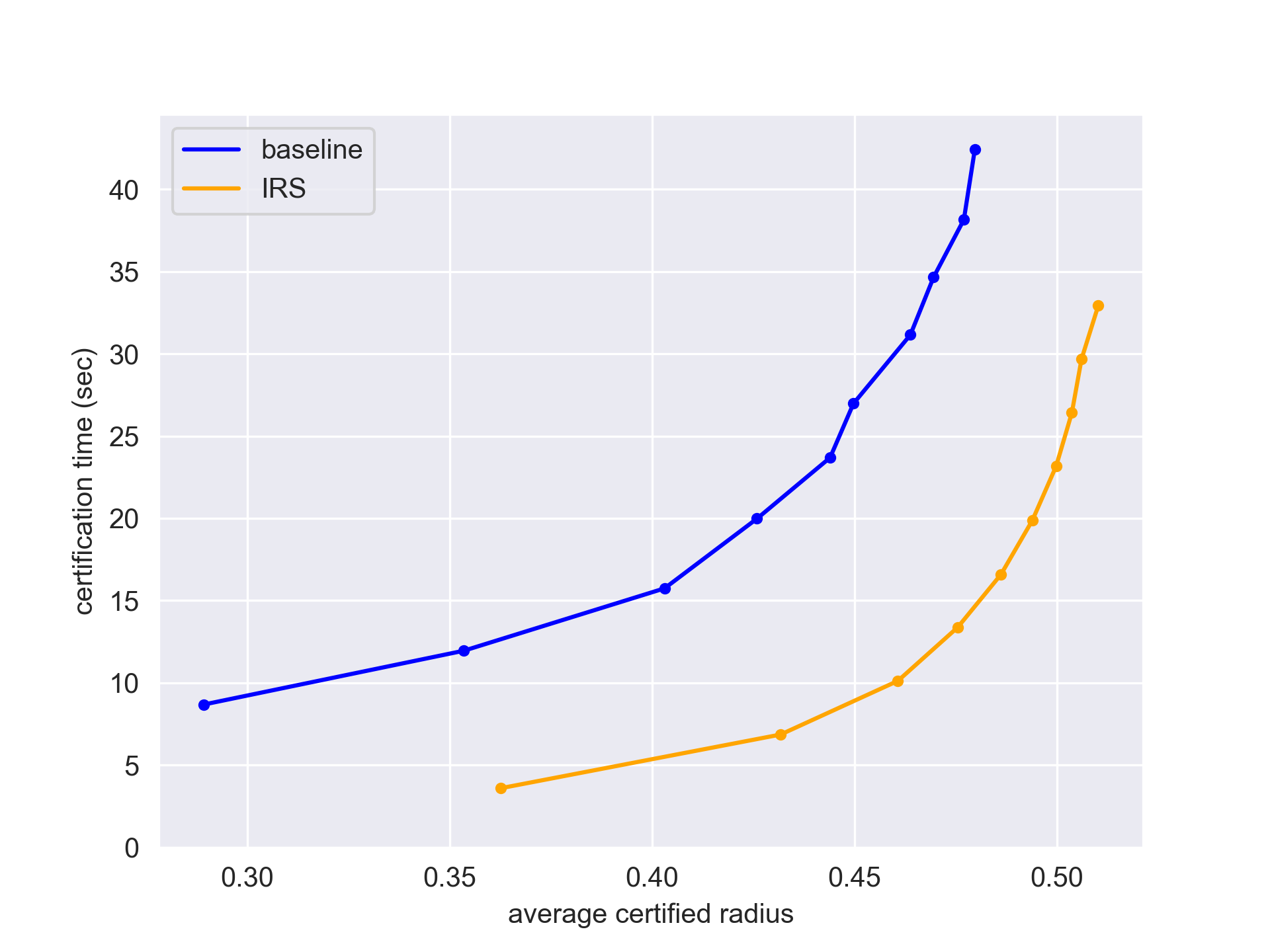}
 \caption{fp16}
 \label{fig:cifar_bf16}
\end{subfigure}
\hspace{3mm}
\begin{subfigure}[b]{0.3\textwidth}
 \includegraphics[width=\textwidth]{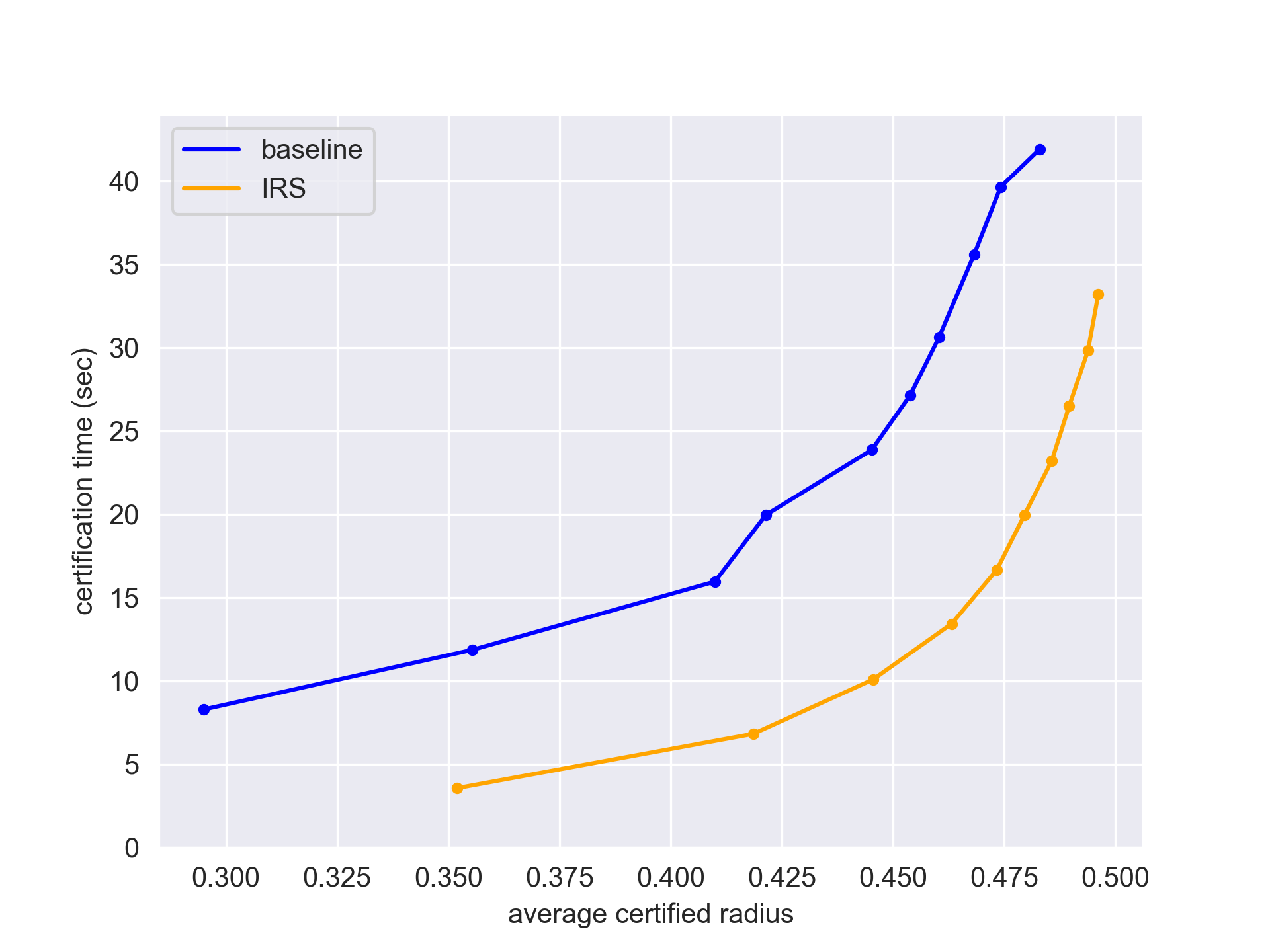}
 \caption{bf16}
 \label{fig:cifar_int8}
\end{subfigure}
\hfill
\begin{subfigure}[b]{0.3\textwidth}
 \includegraphics[width=\textwidth]{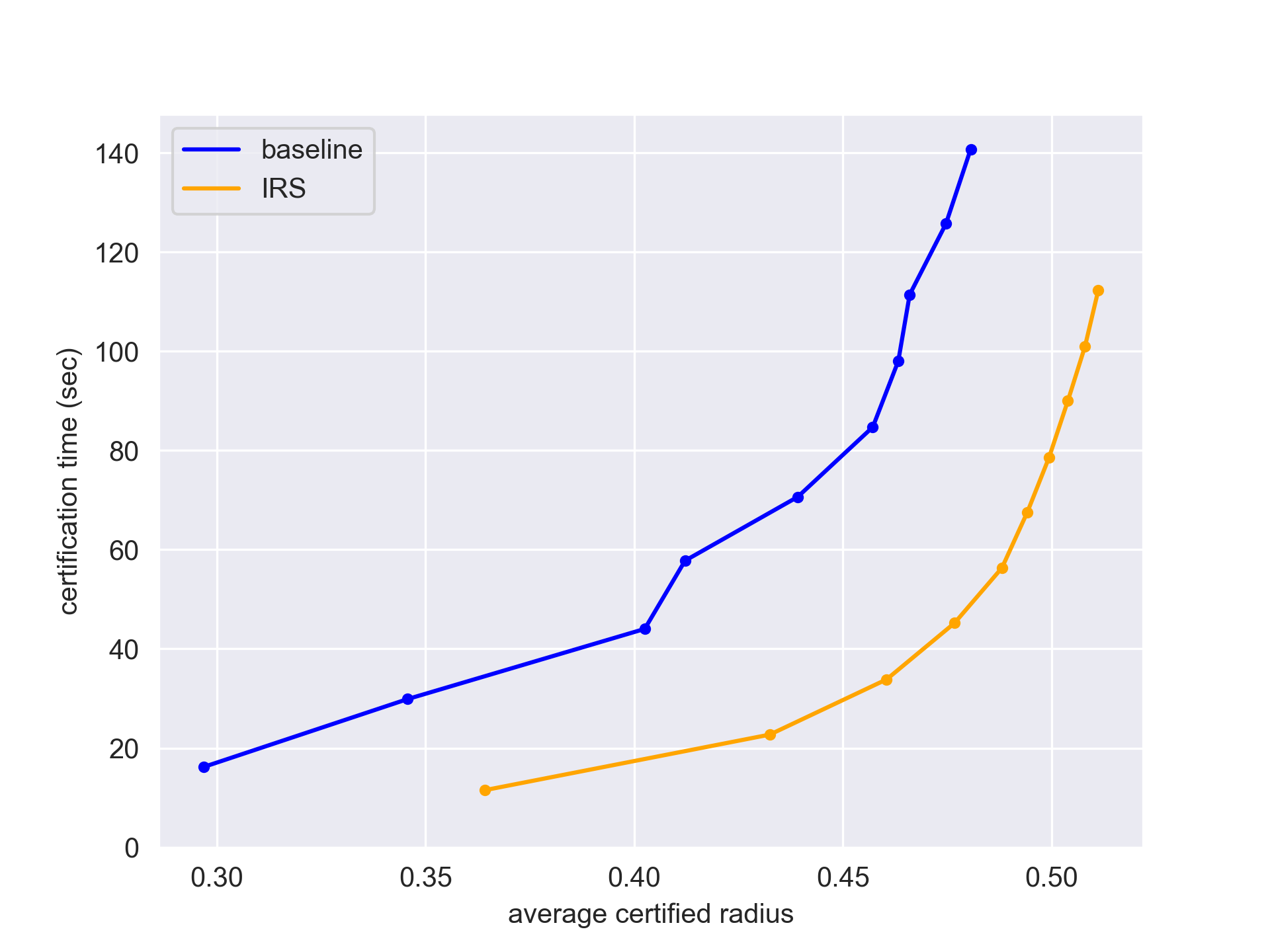}
 \caption{int8}
 \label{fig:cifar_int8}
\end{subfigure}
\hfill
\caption{ResNet-110 on CIFAR10 with $\sigma=1.0$.}
\end{figure} 

\begin{figure}[!htbp]
\centering
\hspace{3mm}

\hspace{3mm}
\begin{subfigure}[b]{0.3\textwidth}
 \includegraphics[width=\textwidth]{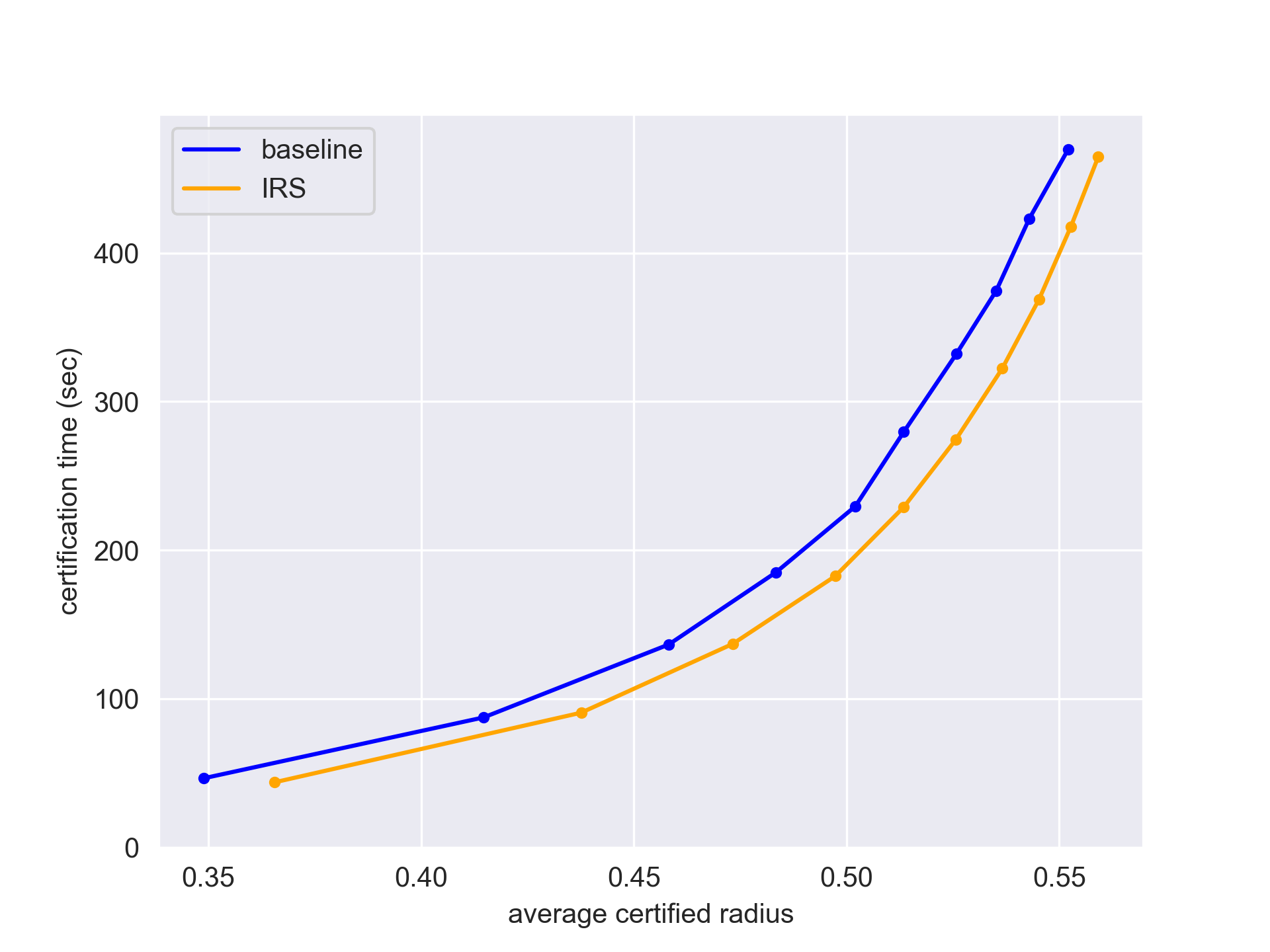}
 \caption{fp16}
 \label{fig:cifar_bf16}
\end{subfigure}
\hspace{3mm}
\begin{subfigure}[b]{0.3\textwidth}
 \includegraphics[width=\textwidth]{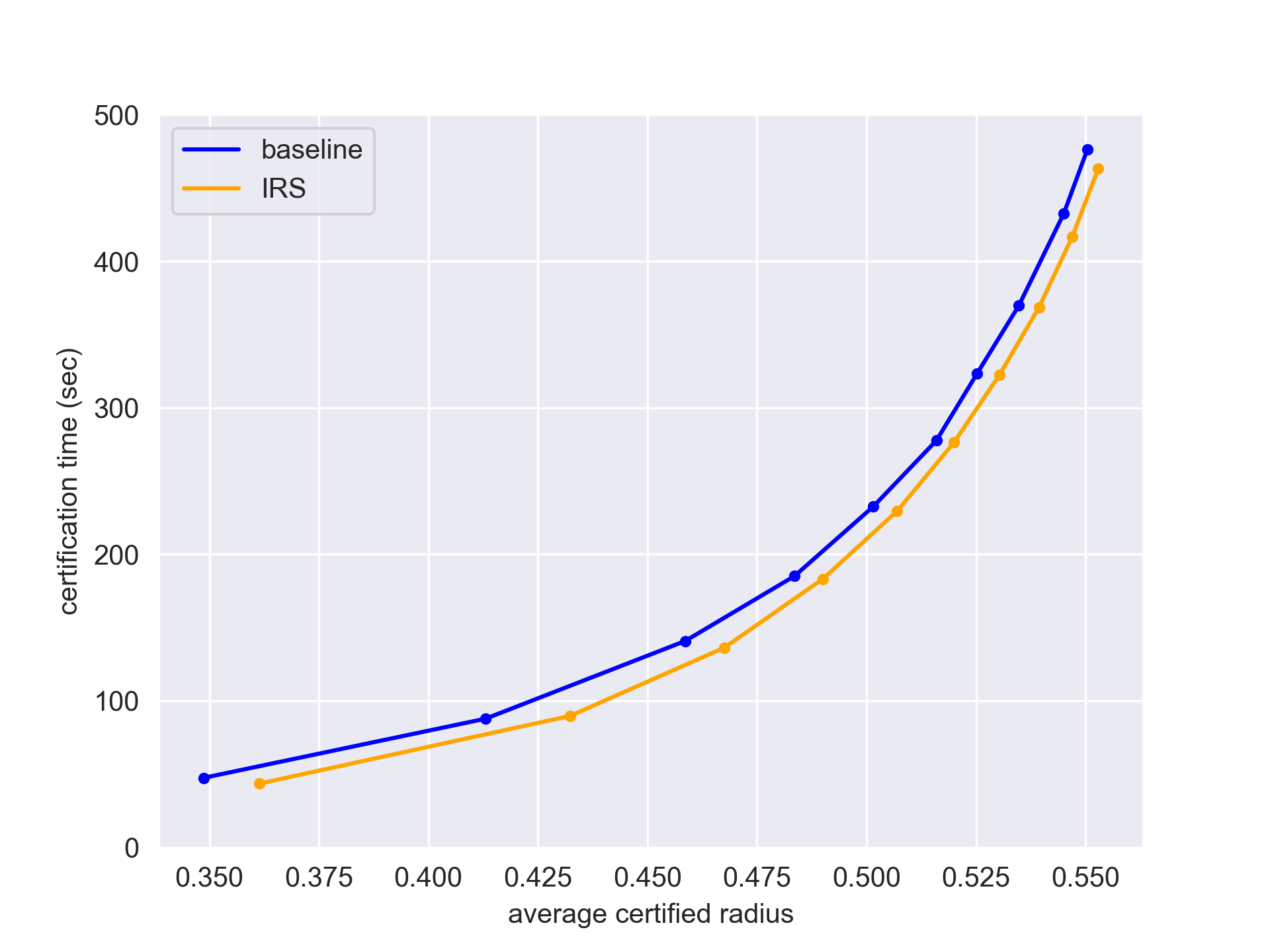}
 \caption{bf16}
 \label{fig:cifar_int8}
\end{subfigure}
\hfill
\begin{subfigure}[b]{0.3\textwidth}
 \includegraphics[width=\textwidth]{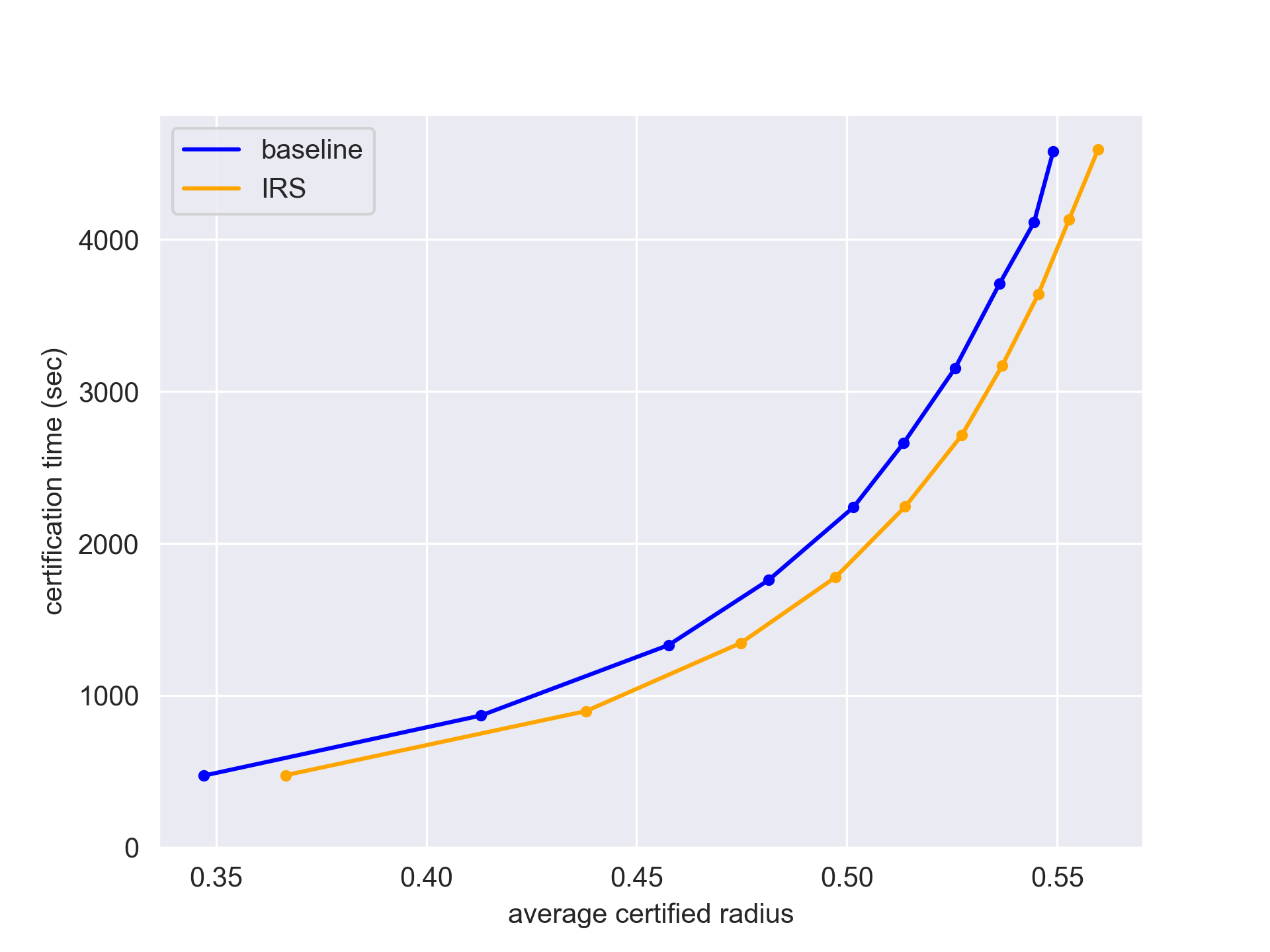}
 \caption{int8}
 \label{fig:cifar_int8}
\end{subfigure}
\hfill
\caption{ResNet-50 on ImageNet with $\sigma=0.5$.}

\hspace{3mm}
\begin{subfigure}[b]{0.3\textwidth}
 \includegraphics[width=\textwidth]{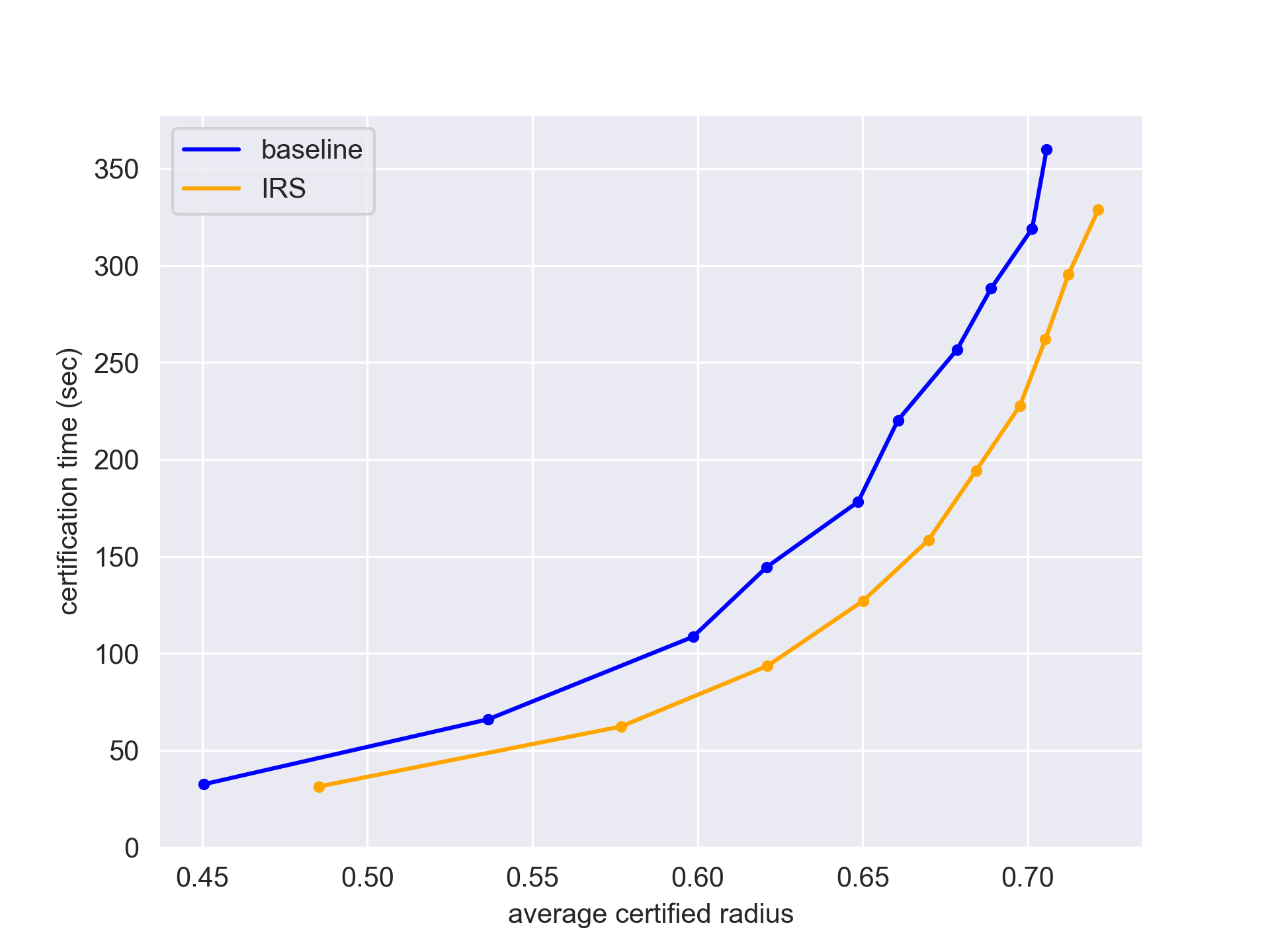}
 \caption{fp16}
 \label{fig:cifar_bf16}
\end{subfigure}
\hspace{3mm}
\begin{subfigure}[b]{0.3\textwidth}
 \includegraphics[width=\textwidth]{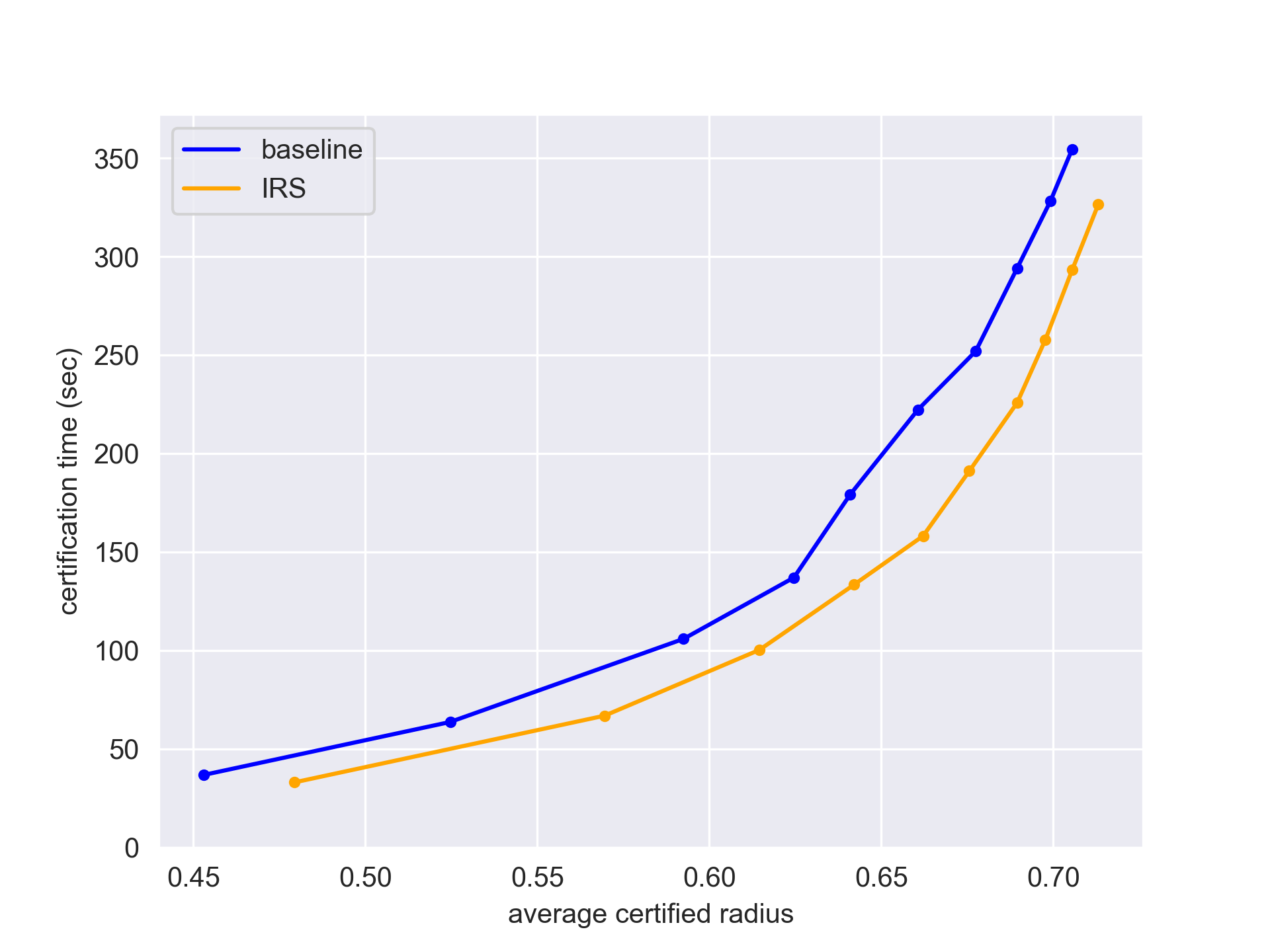}
 \caption{bf16}
 \label{fig:cifar_int8}
\end{subfigure}
\hfill
\begin{subfigure}[b]{0.3\textwidth}
 \includegraphics[width=\textwidth]{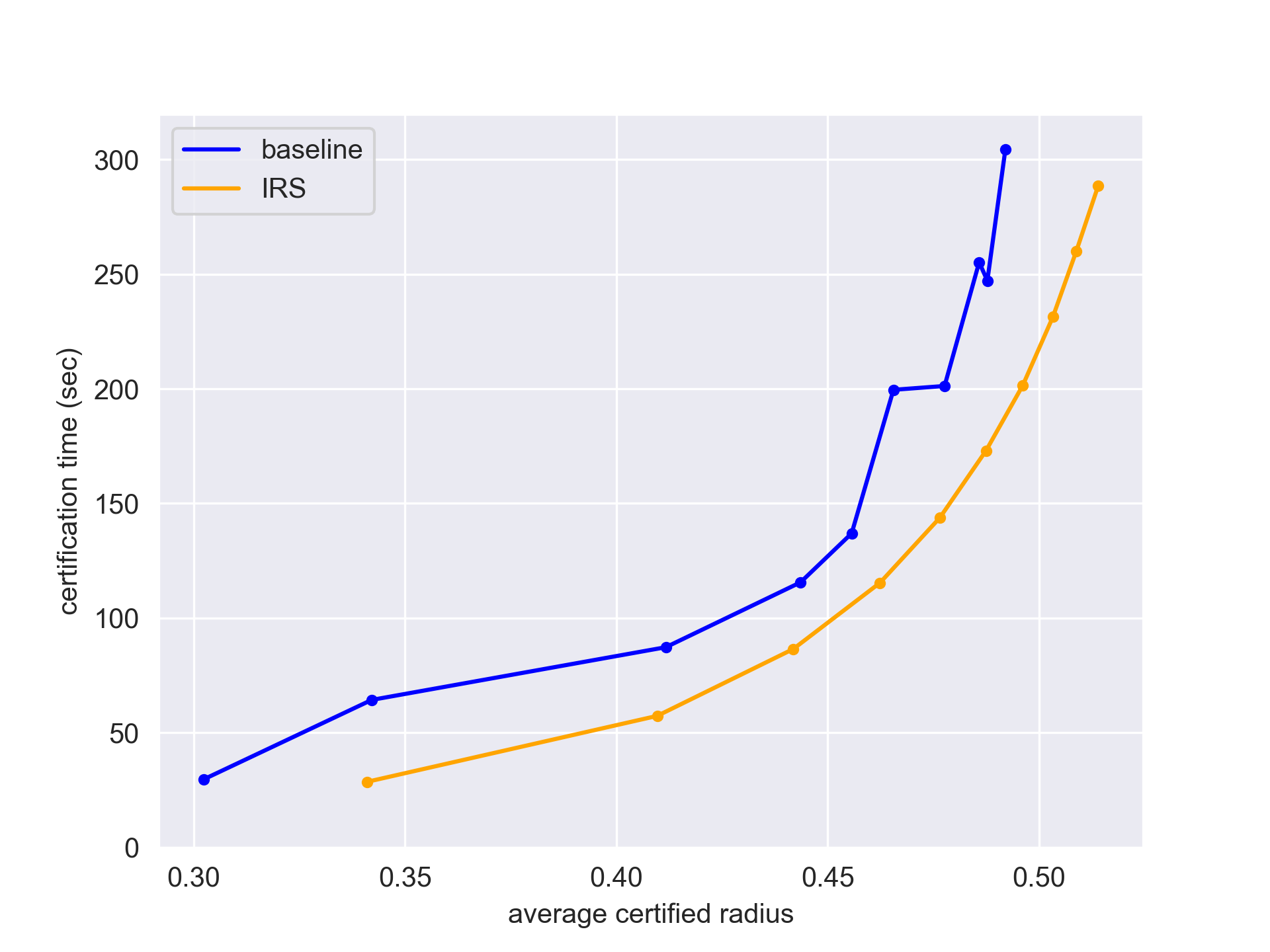}
 \caption{int8}
 \label{fig:cifar_int8}
\end{subfigure}
\hfill
\caption{ResNet-50 on ImageNet with $\sigma=1.0$.}

\hspace{3mm}
\begin{subfigure}[b]{0.3\textwidth}
 \includegraphics[width=\textwidth]{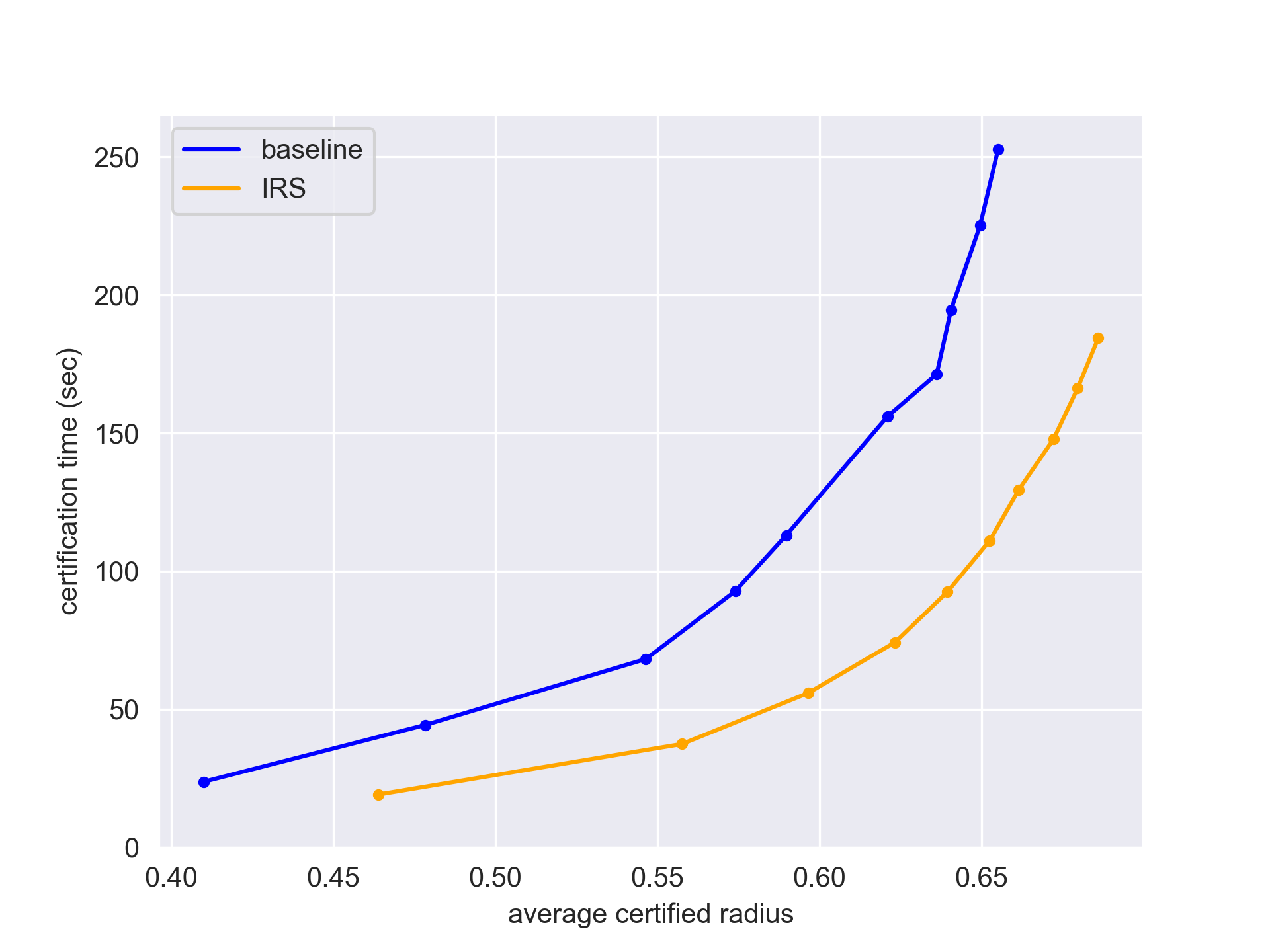}
 \caption{fp16}
 \label{fig:cifar_bf16}
\end{subfigure}
\hspace{3mm}
\begin{subfigure}[b]{0.3\textwidth}
 \includegraphics[width=\textwidth]{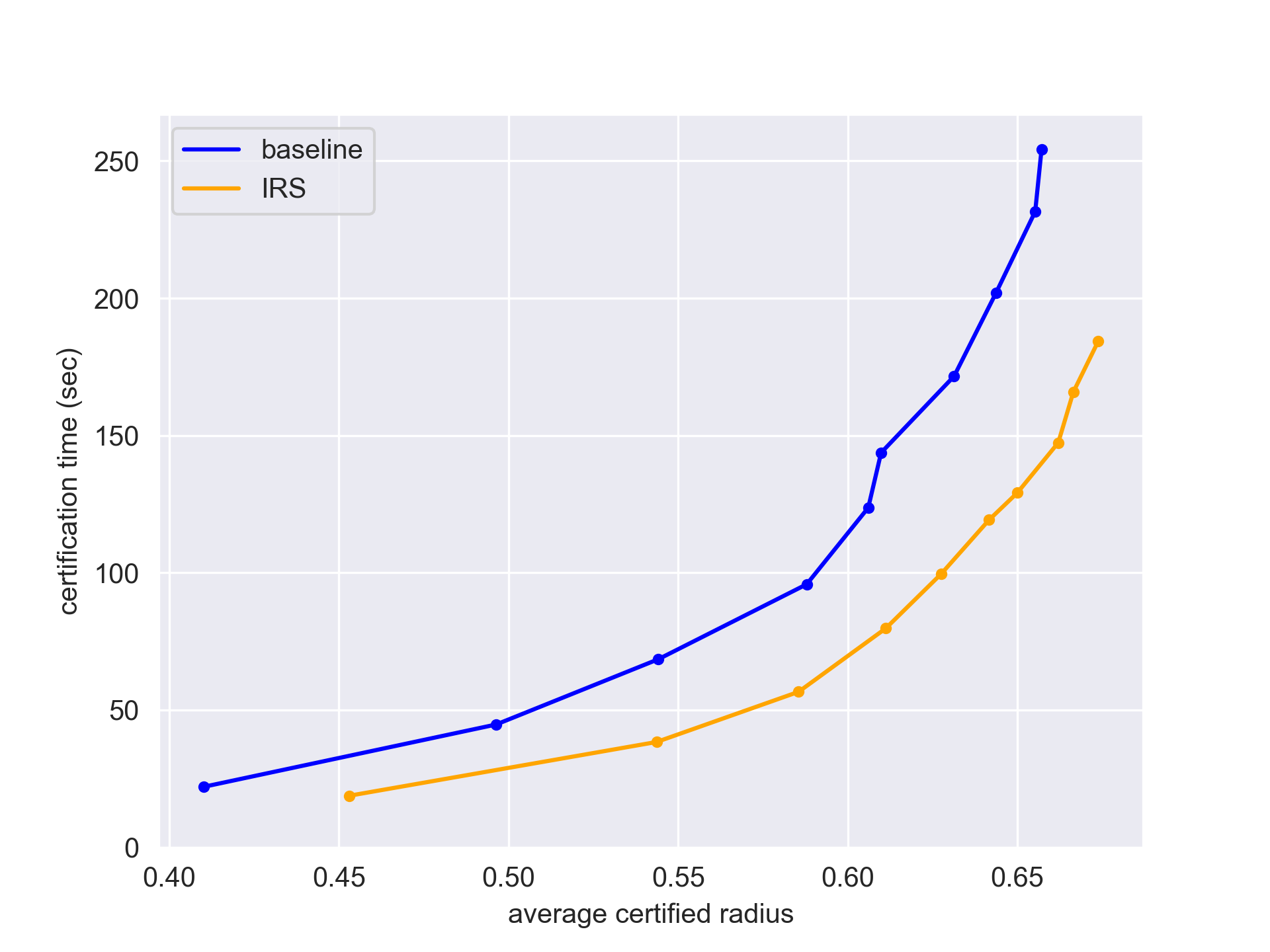}
 \caption{bf16}
 \label{fig:cifar_int8}
\end{subfigure}
\hfill
\begin{subfigure}[b]{0.3\textwidth}
 \includegraphics[width=\textwidth]{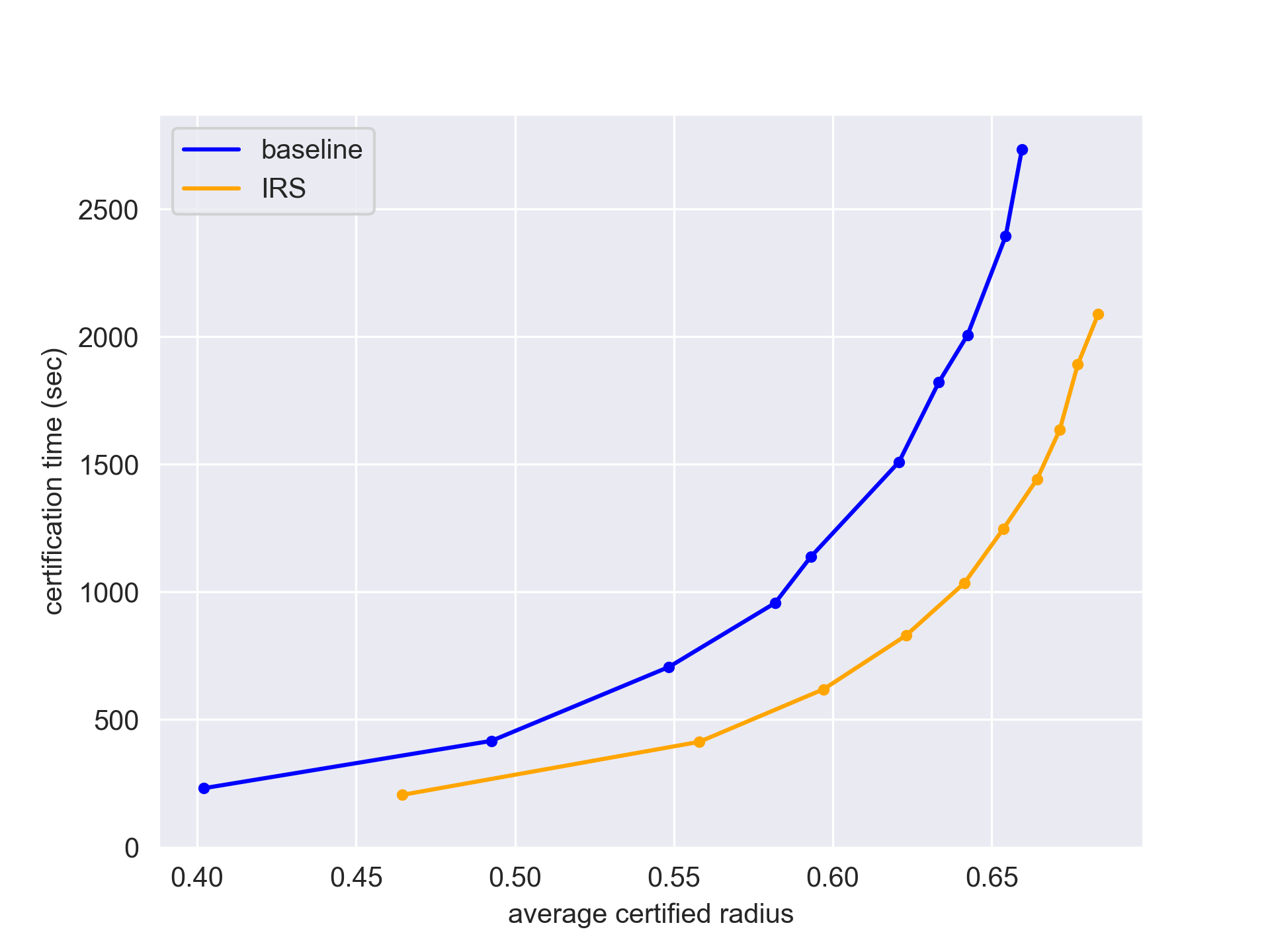}
 \caption{int8}
 \label{fig:cifar_int8}
\end{subfigure}
\hfill
\caption{ResNet-50 on ImageNet with $\sigma=2.0$.}
\end{figure}


\end{document}